\documentclass[twoside,11pt]{article}

\usepackage{blindtext}

\usepackage[preprint]{jmlr2e} %
\usepackage{tikz}
\usepackage{enumitem}
\usepackage{thmtools}
\usepackage{microtype}
\usepackage{booktabs} 
\usepackage{bbm}
\usepackage{pdflscape}
\usetikzlibrary{decorations.pathreplacing} %
\usepackage[dvipsnames]{xcolor}

\hypersetup{
  pdfborder={0 0 0}
}

\zcsetup{cap}
\zcRefTypeSetup{Setting}{
  name-sg=setting, name-pl=settings,
  Name-sg=Setting, Name-pl=Settings,
}

\zcRefTypeSetup{Assumption}{
  name-sg=assumption, name-pl=assumptions,
  Name-sg=Assumption, Name-pl=Assumptions,
}

\newcounter{para}
\zcRefTypeSetup{para}{Name-sg={},Name-pl={}}

\zcRefTypeSetup{item}{
  name-sg={}, name-pl={},
  Name-sg={}, Name-pl={}
}
\zcRefTypeSetup{equation}{
  Name-sg = {Equation},
  name-sg = {equation},
  Name-pl = {Equations},
  name-pl = {equations},
  refbounds = { , , , } %
}

\newcommand{\indep}{\perp \!\!\! \perp}

\makeatletter
\newcommand{\para}[2]{%
  \renewcommand{\thepara}{#1}%
  \refstepcounter{para}%
  \subsection*{#1\quad #2}%
}
\makeatother

\renewcommand{\paragraph}[1]{\par\noindent\textbf{#1.}\hspace{0.4em}}

\usepackage{lastpage}
\jmlrheading{}{2026}{1-\pageref{LastPage}}{04/26; Revised }{}{21-0000}{Vollmer, Weichwald and Pfister}

\ShortHeadings{SPICE}{Vollmer, Pfister$^*$ and Weichwald$^*$}
\firstpageno{1}

\usepackage{scrextend}
\makeatletter
\def\maketitle{\par
 \begingroup
   \def\thefootnote{\fnsymbol{footnote}}
   \def\@makefnmark{\hbox to 0pt{$^{\@thefnmark}$\hss}}
   \deffootnote[1.7em]{1.6em}{2em}{$^\thefootnotemark$}
   \@maketitle \@thanks
 \endgroup
\setcounter{footnote}{0}
 \let\maketitle\relax \let\@maketitle\relax
 \gdef\@thanks{}\gdef\@author{}\gdef\@title{}\let\thanks\relax}
\makeatother

\definecolor{MColor}{RGB}{120,100,232}
\definecolor{PColor}{RGB}{255,0,170}
\definecolor{IColor}{RGB}{255,85,0}

\begin{document}

\title{Identifying Causal Effects Using a Single Proxy Variable}

\author{\name Silvan Vollmer \email sivo@math.ku.dk \\
       \addr Department of Mathematical Sciences\\
       University of Copenhagen\\
       Copenhagen, Denmark
       \AND
       \name Niklas Pfister\footnotemark[1]\phantom{\thanks{Authors contributed equally.}} \email niklas.pfister@gmail.com\\
       \addr Lakera AI\\
       Zurich, Switzerland
       \AND
       \name Sebastian Weichwald\footnotemark[1] \email sweichwald@math.ku.dk \\
       \addr Department of Mathematical Sciences\\
       University of Copenhagen\\
       Copenhagen, Denmark
       }

\editor{My editor}

\maketitle

\begin{abstract}%
Unobserved confounding is a key challenge when estimating causal effects from a treatment on an outcome in scientific applications. In this work, we assume that we observe a single, potentially multi-dimensional proxy variable of the unobserved confounder and that we know the mechanism that generates the proxy from the confounder. Under a completeness assumption on this mechanism, which we call Single Proxy Identifiability of Causal Effects or simply SPICE, we prove that causal effects are identifiable. We extend the proxy-based causal identifiability results by \citet{Pearl2010,Kuroki2014} to higher dimensions, more flexible functional relationships and a broader class of distributions. Further, we develop a neural network based estimation framework, SPICE-Net, to estimate causal effects, which is applicable to both discrete and continuous treatments.
\end{abstract}

\begin{keywords}
  unobserved confounding, single proxy, identifiability, causal effects, measurement error
\end{keywords}

\section{Introduction}

Unobserved confounding is a fundamental problem for estimating causal effects from observational data \citep{Byrnes2025,Lu2009,VanderWeele2017}. A confounder is a common cause of both the treatment and the outcome, inducing non-causal dependencies between them. In practice, it is unrealistic to assume complete knowledge of all confounders. Instead, we may rely on a proxy variable of the true confounder, such as a noisy measurement. We assume that we observe a single, potentially multi-dimensional proxy variable of the confounder and assume that the error mechanism that generates the proxy variable from the confounder is known. In this setting, our contribution is twofold. First, we prove a novel causal identifiability condition, which we call Single Proxy Identifiability of Causal Effects or simply SPICE, that extends existing proxy-based causal identifiability results. Second, we propose a neural network architecture and corresponding loss function, which allows us to estimate causal effects.

Existing identifiability results can be divided into a single proxy and a multiple proxy setting. In the single proxy setting, we assume that we observe a potentially multi-dimensional proxy variable whose components need not be conditionally independent given the confounder. In the multiple proxy case, we observe multiple proxies, which are assumed to be conditionally independent given the confounder. Our work falls into the single proxy setting. In this setting, \citet{Pearl2010,Kuroki2014} show that causal effects are identifiable for discrete confounders and proxies and for linear Gaussian structural causal models by using the matrix adjustment method of \citet{Greenland2008}.
Crucially, \citet{Pearl2010,Kuroki2014} assume knowledge of the error mechanism that links the proxy to the confounder, that is, the set of conditional densities of the proxy given the confounder. We extend these results both in the discrete case by allowing the proxy to have a higher dimension than the confounder and in the continuous case by allowing for multi-dimensional and non-Gaussian variables. Further, we only require that the proxy is a linear function of the confounder plus noise, rather than a linear model across all variables as assumed by  \citet{Pearl2010,Kuroki2014}. Without assuming knowledge on the error mechanism, \citet{Park2024} show identifiability of the average causal effect for the treated of a binary treatment using a single proxy, thereby generalising the control outcome calibration approach \citep{TchetgenTchetgen2014} and ultimately the Difference-in-Difference approach \citep{Card1993}. \citet{Xu2025} extend the framework of \citet{Park2024} to continuous treatments while assuming that the outcome is a deterministic function of the treatment and the confounder. 

With multiple proxies, causal effect identifiability can be achieved through the proximal causal inference framework \citep{Miao2018}, with recent overviews provided by \citet{Ringlein2025,TchetgenTchetgen2024}. In proximal causal inference, unobserved confounding is addressed by classifying proxies into treatment and outcome proxies. Identifiability in proximal causal inference is achieved via bridge functions. As an alternative to proximal causal inference, array decomposition approaches \citep{Pearl2010,Kuroki2014,Deaner2023} recover the joint distribution over unobserved and observed variables up to injective transformations of the unobserved variables \citep{Guo2025}. It originates from the work by \citet{Kruskal1977} and for continuous variables, \citet{Deaner2023} achieves identifiability of causal effects using three proxies by building on the results of \citet{Hu2008}. Our work can be seen as an array decomposition approach that assumes a known error mechanism, as in \citet{Pearl2010,Kuroki2014}, and a single proxy variable in contrast to \citet{Deaner2023}.

Based on our identifiability result, we propose SPICE-Net, a machine learning method to estimate causal effects. SPICE-Net handles both discrete and continuous treatments and outcomes and it builds on Engression \citep{Shen2023}. We show that it recovers the unobserved confounder distribution up to a linear transformation and can be combined with modern nonparametric estimators of causal effects. We compare our method to a variational autoencoder by \citet{Louizos2017} and kernel-based methods by \citet{Xu2025}.

The remainder of the paper is structured as follows. In \zcref{se: PCSCM}, we define the proximal-confounded structural causal model and use it to define our target of inference. We then introduce SPICE \zcref{assumption: SPICE} and prove our identifiability results (\zcref{thm: discrete confounder} and \zcref{thm: continuous confounder}) in \zcref{se: identifiability}. In \zcref{se: method}, we propose our neural network based estimator SPICE-Net and apply it to simulated and real-world data in \zcref{se: application}. Finally, we discuss extensions and applications of SPICE and conclude in \zcref{se: discussion}. 
 
\paragraph{Notation} 
For a measurable space $\mathcal{X}$ and a measure $\mu$ on $\mathcal{X}$, we denote the space of $\mu$-equivalent measurable functions from $\mathcal{X}$ to $\mathbb{R}$ that are absolutely integrable by $L_1 (\mathcal{X})$, that are absolutely square integrable by $L_2(\mathcal{X})$ and that are essentially bounded with respect to $\mu$ by $L_\infty(\mathcal{X})$. For a measurable product space $\mathcal{X}\times\mathcal{Y}$ with measure $\mu$ and random variables $(X, Y)\in\mathcal{X}\times\mathcal{Y}$, we denote the joint distribution by $P_{X, Y}$, the density with respect to $\mu$ by $p_{X,Y}$ --~assuming the existence of densities throughout the paper~-- and the expectation by $\mathbb{E}$. Furthermore, for all $y\in\mathcal{Y}$, the conditional distribution of $X$ given $Y=y$ is denoted by $P_{X|Y=y}$ and the conditional density with respect to $\mu$ by $p_{X|Y}(\cdot\mid y)$. We consider distributions under interventions using $do$-notation \citep{Pearl2009}. Finally, a normal distribution with mean zero and variance one is denoted by $\mathcal{N}(0,1)$ and an exponential distribution with scale one by $\text{Exp}(1)$.

\section{Proximal-Confounded Structural Causal Model}
\label{se: PCSCM}
We now present our causal model describing the treatment effect on an outcome under unobserved confounding. Specifically, we consider a Proximal-Confounded Structural Causal Model (PC-SCM), which assumes that the treatment, the outcome and a noisy proxy of the confounder is observed. A formal description of the model is provided in the following definition.

\begin{definition}[Proximal-Confounded Structural Causal Model (PC-SCM)]
\label{def: pcscm}
    Consider a treatment $X \in \mathcal{X} \subseteq\mathbb{R}^p$, an outcome $Y \in \mathcal{Y} \subseteq\mathbb{R}$, an unobserved confounder $U \in \mathcal{U} \subseteq \mathbb{R}^k$ and a proxy variable $W \in \mathcal{W} \subseteq\mathbb{R}^d$, where $p, k, d \in \mathbb{N}$.\footnote{The setting can be extended to a multi-dimensional outcome $Y \in \mathcal{Y} \subseteq\mathbb{R}^q$ with $q \in \mathbb{N}$.} A proximal-confounded structural causal model (PC-SCM) $M$ is a structural causal model that induces the directed acyclic graph (DAG) $G$, both shown in \zcref{fig: scm and dag}, with measurable functions $f_W, f_X, f_Y$ and mutual independent random noise terms $(N_U, E, N_X, N_Y)$.\footnote{The mutual independence assumption can be relaxed as long as the independence statement in \zcref{eq: independence statement} holds.} 
    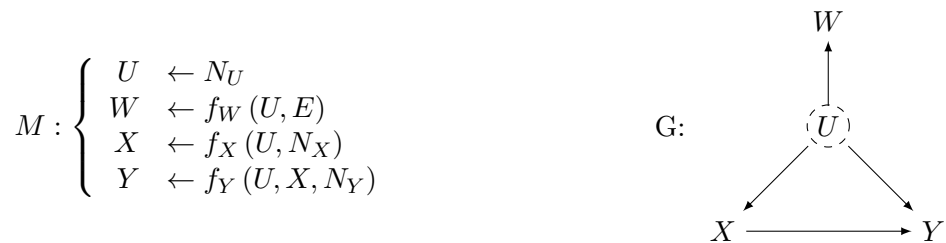
\begin{figure}[!htbp]
        \centering
        \begin{minipage}{0.48\linewidth}
            \[
            M:\left\{
            \begin{array}{rl}
            U & \leftarrow N_U \\
            W & \leftarrow f_W\left(U, E\right) \\
            X & \leftarrow f_X \left(U, N_X\right) \\
            Y & \leftarrow f_Y \left(U, X, N_Y\right)
            \end{array}
            \right.
            \]
        \end{minipage}
        \hfill
        \begin{minipage}{0.48\linewidth}
            \centering
            \begin{tikzpicture}[scale=0.7, >=stealth, node distance=2cm]
                \node at (-3,0) {G:};

                \node (U) at (0,0) {$U$};
                \node (W) at (0,2) {$W$};
                \node (X) at (-2,-2) {$X$};
                \node (Y) at (2,-2) {$Y$};

                \draw[-latex] (U) -- (W);
                \draw[-latex] (U) -- (X);
                \draw[-latex] (U) -- (Y);
                \draw[-latex] (X) -- (Y);

                \draw[dashed] (U) circle (0.4cm);
            \end{tikzpicture}
        \end{minipage}
        \caption{Structural causal model $M$ and directed acyclic graph $G$ of \zcref{def: pcscm}.}
        \label{fig: scm and dag}
    \end{figure}
    
    Moreover, it induces a unique, well-defined probability distribution on the variables $(U,W,X,Y)$ that we denote by $P^M$ \citep[for example][]{Bongers2021} and for each intervention $do(X=x)$ that sets $X$ to a fixed value $x\in\mathcal{X}$, an interventional distribution $P^{M; do (X := x)}_Y$ on $Y$. Furthermore, the implied joint distribution $P^M$ is absolutely continuous with respect to a $\sigma$-finite product measure $\mu$ such that it has a density and the density is defined on $\mathbb{R}^k\times\mathbb{R}^d\times\mathbb{R}^p\times\mathbb{R}$
    and has support on $\mathcal{\mathcal{U} \times \mathcal{W} \times \mathcal{X} \times \mathcal{Y}}$, that is
    \begin{align}
        p^M(u,w,x,y) > 0
        \iff
        (u,w,x,y)\in
    \mathcal{U} \times \mathcal{W} \times \mathcal{X} \times \mathcal{Y}.
    \end{align}
\end{definition}

\begin{Setting}
\label{setting: pcscm}
    We assume that we observe $n \in \mathbb{N}$ independent and identically distributed copies $(W_1,X_1,Y_1),\dots,(W_n,X_n,Y_n)$ of the random variable $(W,X,Y)$ generated by a PC-SCM $M_0$. We assume that the error mechanism $\{ p^{M_0}_{W \mid U} (\cdot \mid u)\mid u \in \mathcal{U} \}$ is known. Moreover, we assume that the causal function in \zcref{def: causal function and ACE} is uniquely defined and integrable. We denote the density $p^{M_0}_{W,X,Y}$ as the observational density.
\end{Setting}

Using d-separations, it follows for all PC-SCMs that the proxy is conditionally independent of the treatment and the outcome given the confounder, that is,
    \begin{align}
    \label{eq: independence statement}
        W \indep (X,Y) \mid U.
    \end{align}
We conclude that for all $(u, x, y) \in \mathcal{U} \times \mathcal{X} \times \mathcal{Y}$ it holds that $p^{M}_{W \mid U, X, Y }(\cdot \mid u,x,y) = p^{M}_{W \mid U}(\cdot \mid u)$.

Given \zcref{setting: pcscm}, we aim to determine how intervening on the treatment $X$ impacts the outcome $Y$. Formally, this is captured by the causal function, which can be contrasted to provide a measure of the causal effect of $X$ on $Y$.

\begin{definition}[causal function and average causal effect]
    \label{def: causal function and ACE}
    Let $M_0$ be the true PC-SCM from \zcref{setting: pcscm}. We define the causal function $\theta^{M_0}:\mathcal{X}\rightarrow\mathbb{R}$ for all $x \in \mathcal{X}$ as
    \begin{equation}
        \label{eq: causal function}
       \theta^{M_0}(x) := \mathbb{E}^{M_0; do(X:=x)}\left[Y \right] = \int y \int  p^{M_0}_{Y \mid U,X} \left(y \mid u, x \right) p^{M_0}_{U}(u) \, \mu(du, dy),
    \end{equation}
    where the equality holds by the adjustment formula \citep[Theorem~3.3.2]{Pearl2009}.
    Moreover, the average causal effect (ACE) for a univariate, continuous treatment $X \in \mathbb{R}$ is defined by
    \begin{align}
        \operatorname{ACE}^{M_0} &:= \mathbb{E}^{M_0}\left[ \tfrac{\partial}{\partial x} \theta^{M_0}(x) \mid_{x=X}\right]
         \label{eq: ACE continuous}
    \end{align}
    and for a univariate, binary treatment $X \in \{0,1\}$, it is defined by
    \begin{align}
        \operatorname{ACE}^{M_0} &:= \theta^{M_0}(1) - \theta^{M_0}(0).      \label{eq: ACE binary}   
    \end{align}
\end{definition}

As shown in the definition, the causal function can be identified by controlling for the confounding $U$ using for example the adjustment formula. Using $W$ instead of $U$ is however in general not possible and will result in a biased estimate. In \zcref{app:linear_Gaussian_setting}, we derive an explicit expression for the difference between controlling for $W$ and controlling for $U$ in a linear Gaussian setting. From this expression we can see that the difference is small if the proxy is close to $U$ or if the direct effect of $X$ on $Y$ dominates the confounding due to $U$.

\section{Identifiability of the Causal Function}
\label{se: identifiability}

We now study the identifiability of the causal function. The causal function is identifiable if it is the same across all PC-SCMs that induce the same observational features. 
For a PC-SCM $M$ defined in \zcref{def: pcscm}, the observational features consist of both the observational density and the error mechanism and  we denote them by
\begin{align}
    \mathcal{F}^{M} := \{ p^{M}_{W,X,Y}\} \cup \{p_{W \mid U}^{M} (\cdot \mid u) \mid u \in \mathcal{U}\}.
\end{align}
PC-SCMs that share the same observational features are referred to as observationally equivalent. This leads us to the following definition of the identifiability of the causal function \citep{Pearl2014,Pearl2009}.

\begin{definition}[identifiablility]
\label{def: identifiablility}
    Let $M_0$ be the true PC-SCM from \zcref{setting: pcscm}. We denote the set of PC-SCMs that are observationally equivalent to $M_0$ by $\mathcal{M} (M_0)$, that is, for all $M \in \mathcal{M} (M_0)$, it holds that
    \begin{equation}
    \label{eq: obs features same}
        \mathcal{F}^M = \mathcal{F}^{M_0}.
    \end{equation}
    The causal function $\theta^{M_0}$ is identifiable if for all models $M \in \mathcal{M} (M_0)$, it holds that
    \begin{equation}
    \label{eq: identifiability unique theta}
        \theta^{M} \equiv \theta^{M_0}.
    \end{equation}
\end{definition}

Our main identifiability result relies on a technical assumption on the error mechanism defined below, which ensures that the proxy $W$ contains sufficient information about the confounder $U$.

\begin{definition}[complete error mechanism]
\label{def: complete error}
    Let $M_0$ be the true PC-SCM from \zcref{setting: pcscm}. We say the error mechanism $\{ p_{W \mid U }^{M_0}(\cdot \mid u) \mid u \in \mathcal{U}\}$ is
    \begin{enumerate}[label=(\roman*)]
        \item \label{complete: L_1}
            $L_1$-complete if for all $\delta\in L_1(\mathcal{U})$
            \begin{equation}
            \label{eq: definition complete error l1}
            \int_{\mathcal{U}} p^{M_0}_{W \mid U} (\cdot \mid u)\delta (u) \, \mu(du) \equiv 0\quad
            \quad\Longrightarrow\quad
            \delta\equiv 0\quad\text{a.e.}
            \end{equation}
        \item \label{complete: L_infty}
            $L_\infty$-complete if for all $\delta\in L_\infty(\mathcal{U})$
            \begin{equation}
            \label{eq: definition complete error l_infty}
            \int_{\mathcal{U}} p^{M_0}_{W \mid U} (\cdot \mid u)\delta (u) \, \mu(du) \equiv 0\quad 
            \quad\Longrightarrow\quad
            \delta\equiv 0\quad\text{a.e.}
            \end{equation}
    \end{enumerate}
\end{definition}
\zcref{def: complete error} is similar to the completeness assumption of \citet{Mattner1993} as outlined by \citet[HS Assumption 3]{Deaner2023}. Assuming that the error mechanism is complete, we get the following identifiability result.

\begin{restatable}[identifiability of the causal function]{theorem}{thmidentifiability}
\label{thm: effect identifiable}
    Let $M_0$ be the true PC-SCM from \zcref{setting: pcscm}. Assume that the error mechanism is
    \begin{enumerate}[label=(\roman*)]
        \item \label{as: L_1 identifiable}
            $L_1$-complete as in \zcref{def: complete error}-\zcref{complete: L_1} or
        \item \label{as: L_infty identifiable}
            $L_\infty$-complete as in \zcref{def: complete error}-\zcref{complete: L_infty} and for all $M \in \mathcal{M} (M_0)$ and for all $(x,y) \in \mathcal{X} \times \mathcal{Y}$ it holds that $p^{M}_{U \mid X,Y}(\cdot \mid x,y) \in L_\infty(\mathcal{U})$.
    \end{enumerate}
    Then, the causal function $\theta^{M_0}$ is identifiable.
\end{restatable}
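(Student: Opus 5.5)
The plan is to show that observational equivalence pins down the full joint density $p^{M}_{U,X,Y}$. Once that is established, $\theta^{M}$ is a functional of $p^M_{U,X,Y}$ through the adjustment formula in \zcref{eq: causal function}, so it coincides with $\theta^{M_0}$.

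Fix $M \in \mathcal{M}(M_0)$. By the conditional independence \zcref{eq: independence statement}, which holds for every PC-SCM, for all $(w,x,y)$ we can write
\begin{equation*}
p^{M}_{W,X,Y}(w,x,y) = \int_{\mathcal{U}} p^{M}_{W\mid U}(w\mid u)\, p^{M}_{U,X,Y}(u,x,y)\,\mu(du),
\end{equation*}
and the analogous identity holds for $M_0$. Since $\mathcal{F}^M=\mathcal{F}^{M_0}$, the left-hand sides agree and the kernels $p_{W\mid U}$ agree. One subtlety is that the kernels are indexed by $u\in\mathcal{U}$, and equality of the sets in $\mathcal{F}$ should be read as the same map $u\mapsto p_{W\mid U}(\cdot\mid u)$ on the same support $\mathcal{U}$; I would state this interpretation explicitly. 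Subtracting the two identities gives, for each fixed $(x,y)$,
\begin{equation*}
\int_{\mathcal{U}} p^{M_0}_{W\mid U}(\cdot\mid u)\,\delta_{x,y}(u)\,\mu(du)\equiv 0,\qquad \delta_{x,y}:=p^{M}_{U,X,Y}(\cdot,x,y)-p^{M_0}_{U,X,Y}(\cdot,x,y).
\end{equation*}

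The key step is to verify that $\delta_{x,y}$ lies in the function class required by completeness.

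In case \zcref{as: L_1 identifiable}, I would note that $p_{U,X,Y}(\cdot,x,y)=p_{U\mid X,Y}(\cdot\mid x,y)\,p_{X,Y}(x,y)$, which is integrable in $u$ since a conditional density integrates to one. Alternatively, one can argue via Fubini that it is integrable for almost every $(x,y)$. So $\delta_{x,y}\in L_1(\mathcal{U})$, and $L_1$-completeness gives $\delta_{x,y}=0$ a.e.

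In case \zcref{as: L_infty identifiable}, I would instead exploit that $p^M_{X,Y}=p^{M_0}_{X,Y}$. This follows by integrating the observational density over $w$. I would then work with the conditional versions:
\begin{equation*}
\delta_{x,y}=p_{X,Y}(x,y)\bigl(p^{M}_{U\mid X,Y}(\cdot\mid x,y)-p^{M_0}_{U\mid X,Y}(\cdot\mid x,y)\bigr).
\end{equation*}
This is essentially bounded by assumption, since $M_0\in\mathcal{M}(M_0)$ covers the $M_0$ term. So $L_\infty$-completeness applies, and because $p_{X,Y}>0$ on $\mathcal{X}\times\mathcal{Y}$ we again get $p^M_{U,X,Y}=p^{M_0}_{U,X,Y}$ $\mu$-a.e.

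Finally, marginalizing over $(x,y)$ gives $p^M_U=p^{M_0}_U$. Dividing by $p_{U,X}>0$ on the support gives $p^M_{Y\mid U,X}=p^{M_0}_{Y\mid U,X}$ almost everywhere. Plugging these into \zcref{eq: causal function} yields $\theta^M(x)=\theta^{M_0}(x)$.

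I expect the main obstacle to be measure-theoretic bookkeeping rather than ideas. First, completeness delivers equality only a.e. in $u$ for each $(x,y)$, and this must be turned into joint a.e. equality. Second, conditional densities are only defined a.e., so the conclusion $\theta^M\equiv\theta^{M_0}$ has to hold pointwise in $x$ and not just almost everywhere. I would handle both points using the positivity of the densities on the product support from \zcref{def: pcscm}, together with the assumption in \zcref{setting: pcscm} that the causal function is uniquely defined.
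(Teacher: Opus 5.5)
Your proposal is correct and follows essentially the paper's argument. You write the observed density as the error kernel integrated against the latent slice, subtract across observationally equivalent models, check that the difference lies in $L_1$ (resp.\ in $L_\infty$, via the assumed boundedness for both $M$ and $M_0$), apply completeness, and substitute into the adjustment formula. The only difference is cosmetic: you work with the joint slices $p_{U,X,Y}(\cdot,x,y)$ rather than the conditional densities $p_{U\mid X,Y}(\cdot\mid x,y)$ used in the paper, which bounds the $L_1$ norm of their difference by $2$. Your explicit handling of the ``for all $(x,y)$, a.e.\ $u$'' bookkeeping needed for pointwise equality of $\theta$ is more careful than the paper's.
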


We prove \zcref{thm: effect identifiable} in \zcref{app:proof_theorem}. Next, we investigate sufficient conditions for the error mechanism to be complete and thus for \zcref{thm: effect identifiable} to apply. We distinguish between two cases in the following subsections, based on whether the confounders and proxies are discrete or continuous random variables. In both cases, we require that the dimension of the proxy is as least as large as the dimension of the confounder. In the discrete case, the error mechanism is complete if the matrix with entries $p^{M_0}_{W\mid U}(\cdot\mid\cdot)$ has full column rank. In the continuous case, completeness holds for a linear proxy-confounder relation with additive noise whose Fourier transform has no zeros.

\subsection{Discrete Confounders and Proxies}
\label{subse: discrete confounders}

We begin with the discrete case and assume that the proxy has at least the dimension of the unobserved confounder, that is, $d\geq k$. Then, the error mechanism is complete if there exists a matrix of conditional probabilities of the proxy given the confounder with full column rank. The formal result is provided in the following theorem.

\begin{restatable}[a complete error mechanism for discrete confounders]{theorem}{thmdiscrete}
\label{thm: discrete confounder}
    Let $M_0$ be the true PC-SCM from \zcref{setting: pcscm}. Further, assume that the following holds.
    \begin{enumerate}[label=(\roman*)]
    \item \label{as: discrete confounder} (discrete support and sufficient dimension) We have a discrete confounder $U \in \mathcal{U} =\{u_1, \dots, u_{k}\}$ and a proxy $W \in \mathcal{W} = \{w_1, \dots, w_{d}\}$ with $d \geq k$.
    \item \label{as: invertibilty} (full rank error mechanism) We assume that there exists a subset $\mathcal{W}_r:=\{w_1', \dots, w_r'\} \subseteq \mathcal{W}$ with cardinality $r \in \mathbb{N}$ and $k \leq r \leq d$ such that the matrix
    \begin{equation}
    \label{eq: error matrix}
        \left(\begin{array}{ccc}
        p^{M_0}_{W \mid U} (w_1' \mid u_1) & \cdots & p^{M_0}_{W \mid U} (w_1' \mid u_{k}) \\
        \vdots & \ddots & \vdots \\
        p^{M_0}_{W \mid U} (w_r' \mid u_1) & \cdots & p^{M_0}_{W \mid U} (w_r' \mid u_{k})
        \end{array}\right)
        \in [0,1]^{r \times k}
    \end{equation}
    has full column rank.
\end{enumerate}
    Then, the error mechanism $\{ p_{W \mid U }^{M_0}(\cdot \mid u) \mid u \in  \{u_1,\dots,u_k\}\}$ is $L_1$-complete and $L_\infty$-complete.
\end{restatable}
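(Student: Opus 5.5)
Since $\mathcal{U}=\{u_1,\dots,u_k\}$ is finite, the dominating measure $\mu$ on $\mathcal{U}$ is (up to positive weights) counting measure. By \zcref{def: pcscm} each $u_j$ has positive density, so each $u_j$ carries positive mass $\mu(\{u_j\})=:m_j>0$. Consequently every measurable $\delta:\mathcal{U}\to\mathbb{R}$ is bounded and integrable, and $L_1(\mathcal{U})=L_\infty(\mathcal{U})\cong\mathbb{R}^k$. A function $\delta$ is zero $\mu$-a.e.\ if and only if $\delta(u_j)=0$ for all $j$. It therefore suffices to prove a single implication: if $\int_{\mathcal{U}} p^{M_0}_{W\mid U}(\cdot\mid u)\delta(u)\,\mu(du)\equiv 0$ for some $\delta$, then $\delta(u_j)=0$ for every $j$. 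This one implication gives both $L_1$- and $L_\infty$-completeness.

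First, I rewrite the integral as a finite sum. For each $w\in\mathcal{W}$,
\begin{equation*}
\int_{\mathcal{U}} p^{M_0}_{W\mid U}(w\mid u)\delta(u)\,\mu(du)=\sum_{j=1}^k p^{M_0}_{W\mid U}(w\mid u_j)\,m_j\,\delta(u_j).
\end{equation*}
The hypothesis "$\equiv 0$" means this vanishes for $\mu$-a.e.\ $w$. Since $\mathcal{W}$ is finite and every $w_i$ has positive mass, it in fact vanishes for every $w\in\mathcal{W}$.

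Next, I restrict attention to the rows $w_1',\dots,w_r'\in\mathcal{W}_r$. Write $A\in[0,1]^{r\times k}$ for the matrix in \zcref{eq: error matrix} and set $v:=(m_1\delta(u_1),\dots,m_k\delta(u_k))^\top\in\mathbb{R}^k$. The vanishing of the sum at each $w_i'$ says exactly that $Av=0$. By assumption \zcref{as: invertibilty}, $A$ has full column rank $k$, so its kernel is trivial and $v=0$. Because every $m_j>0$, this gives $\delta(u_j)=0$ for all $j$, that is, $\delta\equiv 0$ a.e.

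The only delicate step is the measure-theoretic bookkeeping. One must make sure the a.e.\ statements on $\mathcal{U}$ and on $\mathcal{W}$ reduce to pointwise statements. This uses the positivity of the density on the whole support $\mathcal{U}\times\mathcal{W}\times\mathcal{X}\times\mathcal{Y}$ from \zcref{def: pcscm}, which forces every atom to have positive $\mu$-mass. If $\mu$ is taken as counting measure, this step is immediate and the proof reduces to the linear-algebra argument above. The condition $d\ge k$ in \zcref{as: discrete confounder} is needed only so that a full-column-rank $r\times k$ submatrix with $r\ge k$ can exist.
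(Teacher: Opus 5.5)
Your proposal is correct and follows the paper's proof. In both, the completeness premise becomes a linear system with the full-column-rank matrix from assumption (ii), whose trivial kernel forces $\delta=0$, and $L_\infty$-completeness then follows from $L_1(\mathcal{U})=L_\infty(\mathcal{U})$ for finite $\mathcal{U}$. The only difference is that you carry the atom weights $m_j=\mu(\{u_j\})$ explicitly, whereas the paper tacitly uses counting measure; the claim $m_j>0$ is not even needed, since $m_j\delta(u_j)=0$ for all $j$ already gives $\delta=0$ $\mu$-a.e.
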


A proof is given in \zcref{app:proof_corollary_complete_discrete}. Together with \zcref{thm: effect identifiable}, the result implies identifiability of the causal function. \zcref{thm: discrete confounder} extends results of \citet{Pearl2010,Kuroki2014}, which we state and prove in \zcref{app:proofs_old_results} using our notation and formal framework to facilitate comparison, by relaxing their assumption that $d=k$. 

\subsection{Continuous Confounders and Proxies}
\label{subse: continuous confounders}

Now we turn to continuous confounders and proxies.
We regard \zcref{assumption: SPICE} in combination with \zcref{thm: continuous confounder} as the most practically relevant theoretical result of this paper: It provides sufficient conditions for the error mechanism to be complete. We refer to \zcref{assumption: SPICE} as Single Proxy Identifiability of Causal Effects or simply SPICE.

\begin{restatable}[SPICE]{Assumption}{spiceassumption}
\label{assumption: SPICE}
    Let $M_0$ be the true PC-SCM from \zcref{setting: pcscm}.
    \begin{enumerate}[label=(\roman*)]
        \item \label{as: continuous confounder and proxy} (sufficient proxy dimension) 
        The proxy $W \in \mathcal{W} \subseteq \mathbb{R}^d$ has at least the dimension of the confounder $U \in \mathcal{U} \subseteq \mathbb{R}^k$, that is, $d \geq k$.
        \item \label{as: additive error} (additive noise model)
        The proxy follows an additive noise model where $f_W: (U,E) \mapsto AU + E$ for a 
        matrix $A\in\mathbb{R}^{d \times k}$ with full column rank
        and a random variable $E \in\mathcal{E}\subseteq \mathbb{R}^d$, which is independent of $U$, that is,
        \begin{align}
            W = A U + E \quad\text{with}\quad U\indep E.
        \end{align}
        \item \label{as: positive density} (density) The distribution of $E$ has a density $p_E^{M_0}$ on $\mathbb{R}^d$ with respect to the Lebesgue measure.
        \item \label{as: fourier transform} (non-vanishing Fourier transform) The Fourier transform of the density of E has no zeros, that is, we have for all $t\in\mathbb{R}^d$ that
        \begin{equation}
            \left( 2 \pi \right)^{- \frac{d}{2}} \int p^{M_0}_{E}(e) \exp( - i t^{T} e) \, de \neq 0
        \end{equation}
        with imaginary unit $i$.
    \end{enumerate}
\end{restatable}

If SPICE \zcref{assumption: SPICE} holds, then the error mechanism is complete by the following theorem.

\begin{restatable}[a complete error mechanism for continuous confounders]{theorem}{thmcontinuous}
\label{thm: continuous confounder}
    Let $M_0$ be the true PC-SCM from \zcref{setting: pcscm}. Further, assume that SPICE \zcref{assumption: SPICE} holds.
    \newline
    Then, the error mechanism $\{ p_{W \mid U }^{M_0}(\cdot \mid u) \mid u \in \mathbb{R}^k\}$ is $L_\infty$-complete.
\end{restatable}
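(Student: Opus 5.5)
The plan is to use Fourier analysis to turn the integral equation in \zcref{def: complete error}-\zcref{complete: L_infty} into a pointwise identity, and to handle the rectangular matrix $A$ by a linear change of variables. Under SPICE \zcref{assumption: SPICE}-\zcref{as: additive error} and \zcref{as: positive density}, the error mechanism is $p^{M_0}_{W\mid U}(w\mid u)=p^{M_0}_E(w-Au)$ for $w\in\mathbb{R}^d$. Fix $\delta\in L_\infty(\mathcal{U})$, extended by zero to $\mathbb{R}^k$, and suppose
\begin{equation*}
g(w):=\int_{\mathbb{R}^k} p^{M_0}_E(w-Au)\,\delta(u)\,du = 0 \quad\text{for a.e. } w\in\mathbb{R}^d.
\end{equation*}
The goal is to show that $\delta=0$ a.e.

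The first difficulty is that $\delta$ is only bounded, not integrable, so $g$ is not an ordinary convolution of two $L_1$ functions. I would therefore work with tempered distributions. Since $p_E\in L_1$ and $\delta\in L_\infty$, the function $g$ is bounded. Let $\nu$ be the pushforward of $\delta(u)\,du$ under $u\mapsto Au$. This is a distribution on $\mathbb{R}^d$ supported on the $k$-dimensional subspace $\operatorname{ran}A$. Concretely, $\langle\nu,\varphi\rangle=\int\delta(u)\varphi(Au)\,du$, and $\nu$ is tempered because $|\varphi(Au)|$ decays rapidly in $u$ when $A$ has full column rank (by \zcref{assumption: SPICE}-\zcref{as: continuous confounder and proxy} and \zcref{as: additive error}, since $|Au|\ge\sigma_{\min}|u|$). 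Then $g=p_E*\nu$.

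Taking Fourier transforms should give $\hat g=\hat p_E\cdot\hat\nu$, where $\hat p_E$ is a bounded continuous function. Justifying this product is the main obstacle, because $\hat\nu$ is a distribution and $\hat p_E$ is not smooth. To avoid this, I would test against Schwartz functions $\psi$ with $\hat\psi\in C_c^\infty$. For such $\psi$, Fubini gives
\begin{equation*}
0=\int g(w)\psi(w)\,dw = \int\delta(u)\,(\tilde p_E*\psi)(Au)\,du,
\qquad \tilde p_E(e):=p_E(-e).
\end{equation*}
Now choose $\hat\psi=\hat\chi/\widehat{\tilde p_E}$ for an arbitrary $\chi$ with $\hat\chi\in C_c^\infty$. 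This quotient is defined because $\widehat{\tilde p_E}(t)=\overline{\hat p_E(t)}$ is continuous and nonvanishing by \zcref{assumption: SPICE}-\zcref{as: fourier transform}, so it is bounded away from zero on compact sets. Then $\tilde p_E*\psi=\chi$. The quotient is only continuous, not smooth, so $\psi$ is merely in $L_1\cap C_0$ rather than Schwartz. I would handle this either by mollifying $\hat p_E$ and taking limits, or by noting that Fubini only needs $\psi\in L_1$ and that $\hat\psi$ compactly supported and continuous can be approximated suitably. This yields
\begin{equation*}
\int\delta(u)\chi(Au)\,du=0
\end{equation*}
for every $\chi$ in a class that is dense enough, for instance all $\chi$ with $\hat\chi\in C_c^\infty$ after the approximation step.

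Finally, reduce to $\mathbb{R}^k$. Choose $\chi(w)=\phi(A^+w)\rho(P^\perp w)$, where $A^+$ is the pseudoinverse, $P^\perp$ is the projection onto $(\operatorname{ran}A)^\perp$, and $\rho(0)=1$. On $\operatorname{ran}A$ this gives $\chi(Au)=\phi(u)$, so $\int\delta\phi=0$ for all $\phi$ in a class whose Fourier transforms are compactly supported. Such functions are dense enough to separate $L_\infty$ functions in the $L_1$--$L_\infty$ duality. Hence $\delta$ vanishes as a tempered distribution, $\hat\delta=0$, and so $\delta=0$ a.e.

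The step I expect to need the most care is the regularity of the division by $\hat p_E$. I would first try Wiener's Tauberian theorem: $\hat p_E$ has no zeros, so translates of $\tilde p_E$ span a dense subspace of $L_1(\mathbb{R}^d)$. If $g=0$, then $\nu$ annihilates all translates of $\tilde p_E$, hence all of $L_1\cap C_0$ after a continuity argument restricted to the subspace. This would give the claim cleanly and avoid the division altogether.
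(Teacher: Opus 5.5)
\textbf{There is a genuine gap, and it sits in the step you flagged as delicate, which is the analytic core of the theorem.} You define $\psi$ by $\hat\psi=\hat\chi/\widehat{\tilde p_E}$ and conclude that $\psi\in L_1\cap C_0$ because the quotient is continuous with compact support. That inference is false. Continuity and compact support of $\hat\psi$ give only $\psi\in L_2\cap C_0$, and there are continuous compactly supported functions that are not Fourier transforms of integrable functions. That this particular quotient is one is exactly Wiener's local inversion lemma. Since $\widehat{\tilde p_E}$ is the transform of an $L_1$ function with no zeros on the compact set $K=\operatorname{supp}\hat\chi$, there is $h\in L_1(\mathbb{R}^d)$ with $\hat h\,\widehat{\tilde p_E}=1$ on $K$, and then $\psi=h*\chi$. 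Mollifying $\hat p_E$ and passing to a limit does not replace this, because it needs the same uniform $L_1$ control that the lemma provides. A second problem: for $k<d$, your remark that Fubini ``only needs $\psi\in L_1$'' is also wrong. The absolute double integral is bounded by $\|\delta\|_\infty\int p_E(e)\int_{\mathbb{R}^k}|\psi(e+Au)|\,du\,de$, and an integrable function on $\mathbb{R}^d$ can have unbounded or infinite integrals along translates of $\operatorname{ran}A$. With $\psi=h*\chi$ this is harmless, because $\sup_e\int|\psi(e+Au)|\,du\le\|h\|_1\sup_e\int|\chi(e+Au)|\,du<\infty$ for Schwartz $\chi$ and full-column-rank $A$. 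Once these two points are supplied, your remaining steps (the choice $\chi(w)=\phi(A^+w)\rho(P^\perp w)$ and the conclusion $\hat\delta=0$) go through.

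\textbf{Your Wiener fallback fails for the same dimensional reason.} For $k=d$ it works, since $\nu$ is then an $L_\infty$ function. For $k<d$, however, $\nu$ is carried by a Lebesgue-null subspace and is not a bounded functional on $L_1(\mathbb{R}^d)$. So density of the translates of $\tilde p_E$ in $L_1(\mathbb{R}^d)$ says nothing about $\int\delta(u)\varphi(Au)\,du$. The ``continuity argument restricted to the subspace'' is exactly what is missing; it only works after convolving the approximants with a fixed Schwartz function, as above. The paper avoids this by applying Wiener's theorem in $\mathbb{R}^k$ rather than $\mathbb{R}^d$, to the slice $f_w(u)=p_E(w-Au)$. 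Its translates $f_w(\cdot+\tau)=p_{W\mid U}(w-A\tau\mid\cdot)$ are again members of the error mechanism, and $\delta\in L_\infty(\mathbb{R}^k)$ is a bounded functional on $L_1(\mathbb{R}^k)$, so no distribution theory is needed. The price is showing that slices of $p_E$ have zero-free Fourier transforms, which for $k<d$ takes some care. A shorter reduction in the same spirit is to integrate the premise over the fibres $\{w : Lw=v\}$ of a left inverse $L$ of $A$. This gives $r*\delta=0$ on $\mathbb{R}^k$, where $r$ is the density of $LE$. Its Fourier transform is proportional to $\hat p_E(L^{T}s)$ and so has no zeros, and Wiener's theorem in $\mathbb{R}^k$ then yields $\delta=0$ a.e.
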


We prove \zcref{thm: continuous confounder} using the Wiener Tauberian theorem \citep{Wiener1932} in \zcref{app:proof_corollary_complete_continuous}. If SPICE \zcref{assumption: SPICE} holds and for all $M \in \mathcal{M} (M_0)$ and for all $(x,y) \in \mathcal{X} \times \mathcal{Y}$ it holds that $p^{M}_{U \mid X,Y}(\cdot \mid x,y) \in L_\infty$, then the causal function is identifiable by \zcref{thm: continuous confounder} and \zcref{thm: effect identifiable}\nobreakdash-\zcref{as: L_infty identifiable}. The result in \zcref{thm: continuous confounder} extends the framework of \citet{Pearl2010,Kuroki2014}, which we state and prove in \zcref{app:proofs_old_results} using our notation and formalism, by considering higher dimensions, less structural assumptions and a broader class of noise distributions. \citet{Pearl2010,Kuroki2014} assume a linear Gaussian dependency between all variables, which are assumed to be one-dimensional. Further, they require mean-zero noise variables to achieve identifiability. Their case is subsumed by \zcref{thm: continuous confounder}. In their framework, $E \in \mathbb{R}$ follows a Gaussian distribution whose Fourier transform has no zeros, as we show in \zcref{app:univariate_fourier_nonzero}. The linear Gaussian SCM of \citet{Pearl2010,Kuroki2014} implies for all $M \in \mathcal{M} (M_0)$ and for all $(x,y) \in \mathcal{X} \times \mathcal{Y}$ that $p^M_{U \mid X,Y}(\cdot \mid x, y) \in L_\infty(\mathcal{U})$ and hence the causal function is identifiable by \zcref{thm: effect identifiable}-\zcref{as: L_infty identifiable}. 

Next, we clarify which noise distributions have Fourier transforms with no zeros and satisfy SPICE \zcref{assumption: SPICE}-\zcref{as: fourier transform}. We define the characteristic function, infinitely divisible distributions and convolutionally infinitely divisible kernels in \zcref{app:characteristic_divisible}.

\begin{restatable}[sufficient conditions for a non-vanishing Fourier transform]{proposition}{prodivisible}
\label{prop: sufficient conditions}
    Let $M_0$ be the true PC-SCM from \zcref{setting: pcscm} and assume that SPICE \zcref{assumption: SPICE}-\zcref{as: positive density} holds.
    Further, assume that one of the following holds.
    \begin{enumerate}[label=(\roman*)]
        \item \label{as: non-zero characteristic function} The characteristic function of E has no zeros. This means that for all $t \in \mathbb{R}^d$, we have that
    \begin{equation}
        \int p^{M_0}_{E} (e) \exp( i t^{T} e) \, de \neq 0
    \end{equation}
    with imaginary unit $i$.
    \item \label{as: infinite divisibility} The probability distribution $P^{M_0}_E$ is infinitely divisible. 
    \item \label{as: cid kernel} The density $p^{M_0}_E$ is a convolutionally infinitely divisible kernel.
    \end{enumerate}
    Then, SPICE \zcref{assumption: SPICE}-\zcref{as: fourier transform} holds, that is, the Fourier transform of the density of E has no zeros. 
\end{restatable}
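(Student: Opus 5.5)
The plan is to reduce all three conditions to condition (i), and to reduce (i) to the non-vanishing of the Fourier transform in SPICE \zcref{assumption: SPICE}-\zcref{as: fourier transform}.

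For this reduction, write $\varphi_E(t) := \int p^{M_0}_{E}(e)\exp(i t^T e)\,de$ for the characteristic function. The Fourier transform in SPICE \zcref{assumption: SPICE}-\zcref{as: fourier transform} is then exactly $(2\pi)^{-d/2}\varphi_E(-t)$. This is immediate from the definitions: $\exp(-it^Te)=\exp(i(-t)^Te)$, and the integral is well defined because $p^{M_0}_E$ is a probability density, hence in $L_1(\mathbb{R}^d)$, by SPICE \zcref{assumption: SPICE}-\zcref{as: positive density}. Since $t\mapsto -t$ is a bijection of $\mathbb{R}^d$ and $(2\pi)^{-d/2}\neq 0$, the Fourier transform has no zeros if and only if $\varphi_E$ has no zeros. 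This settles case (i).

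For case (ii), I would use the classical fact that the characteristic function of an infinitely divisible distribution on $\mathbb{R}^d$ never vanishes. The argument, for instance Sato's Lemma 7.5 or the standard textbook proof, runs as follows.
\begin{enumerate}[label=(\alph*)]
\item For each $n$, write $\varphi_E=\varphi_n^n$, where $\varphi_n$ is the characteristic function of the $n$-th root distribution.
\item The functions $|\varphi_n|^2=|\varphi_E|^{2/n}$ are characteristic functions. They converge pointwise to the indicator $\mathbbm{1}\{\varphi_E(t)\neq 0\}$.
\item This limit equals $1$ on a neighbourhood of the origin, by continuity of $\varphi_E$ and $\varphi_E(0)=1$.
\item L\'evy's continuity theorem then shows the limit is itself a characteristic function. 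It is therefore continuous, and being $\{0,1\}$-valued and equal to $1$ at the origin on the connected space $\mathbb{R}^d$, it is identically $1$.
\end{enumerate}
Hence $\varphi_E$ has no zeros, and case (i) applies. I plan to cite this lemma rather than reprove it, perhaps with the short sketch above.

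For case (iii), I would unpack the definition of a convolutionally infinitely divisible kernel from \zcref{app:characteristic_divisible}. Presumably it states that for every $n$ there is a density (or kernel) $q_n$ with $p^{M_0}_E = q_n^{*n}$, the $n$-fold convolution. If the $q_n$ are probability densities, this says exactly that $P^{M_0}_E$ is infinitely divisible, so (iii) reduces to (ii).

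The main obstacle is the case where the definition allows kernels that are not normalised or not nonnegative. I would then first note that integrating $p^{M_0}_E=q_n^{*n}$ gives $(\int q_n)^n=1$. For nonnegative kernels this forces $\int q_n=1$, so the $q_n$ are probability densities. If signed kernels are allowed, I would instead take Fourier transforms: this gives $\widehat{p}=\widehat{q_n}^{\,n}$, so $|\widehat p|^{1/n}$ is the modulus of the Fourier transform of an $L_1$ function. I would then argue that the zero set of $\widehat p$ is both closed and open, and hence empty by connectedness and $\widehat p(0)\neq 0$. Establishing openness would require an equicontinuity argument that I expect to be the delicate point. I would try first to verify that the appendix definition uses probability densities, which makes the step routine.
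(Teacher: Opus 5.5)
Your proposal is correct and follows the paper's route. Condition (i) is equivalent to SPICE (iv) via $t\mapsto -t$ and the nonzero factor $(2\pi)^{-d/2}$, (ii) reduces to (i) by Sato's Lemma 7.5, and (iii) reduces to (ii). In the paper's definition, a convolutionally infinitely divisible kernel is $k(u,w)=\kappa(w-u)$ with $\kappa$ a continuous, bounded density of a symmetric infinitely divisible distribution, so (iii) gives infinite divisibility of $P^{M_0}_E$ directly and your fallback for signed or unnormalised kernels is not needed.
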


The second result in the proposition follows from \citet[Lemma 7.5]{Sato1999} as shown in \zcref{app:characteristic_divisible}. In particular, we have that  
\[
    \text{Prop.~\ref{prop: sufficient conditions}-} 
    \text{\zcref{as: cid kernel}} \quad\Longrightarrow\quad
    \text{Prop.~\ref{prop: sufficient conditions}-}\text{\zcref{as: infinite divisibility}} \quad\Longrightarrow\quad
    \text{Prop.~\ref{prop: sufficient conditions}-}\text{\zcref{as: non-zero characteristic function}}
\]
and
\[
    \text{Prop.~\ref{prop: sufficient conditions}-}\text{\zcref{as: non-zero characteristic function}}
    \quad\iff\quad \text{SPICE~\ref{assumption: SPICE}-\ref{as: fourier transform}}.
\]
Using this result, we can show that many common distributions satisfy SPICE \zcref{assumption: SPICE}\nobreakdash-\zcref{as: fourier transform}.
Focusing on \zcref{prop: sufficient conditions}-\zcref{as: infinite divisibility} and univariate distributions, we have that the normal, Cauchy, Laplace or gamma distribution, which includes
the exponential and chi-squared distribution, and all stable distributions are infinitely divisible as shown by \citet{Lukacs1970}. Furthermore, the log-normal distribution, all generalised hyperbolic distributions, such as the $t$-distribution and all generalised inverse Gaussian distributions are also infinitely divisible as shown by \citet{Thorin1977} and \citet{Barndorff1977}. Further, any convolution of two infinitely divisible probability distributions is again infinitely divisible by \citet[Lemma 7.4]{Sato1999}. For a multi-dimensional variable $E$, we have that the product measure of infinitely divisible probability measures, representing independent components of $E$, is again infinitely divisible as shown by \citet{Horn1978}. Gaussian and Cauchy distributions on $\mathbb{R}^d$ are infinitely divisible by \citet{Sato1999}. 

Now we consider \zcref{prop: sufficient conditions}-\zcref{as: cid kernel}, which reinforces the applicability of \zcref{thm: continuous confounder} in a multi-dimensional proxy-confounder setting. We provide examples for convolutionally infinitely divisible kernels based on \citet[Example 3.4]{Nishiyama2016}. Convolutionally infinitely divisible kernels include Gaussian, Laplace, Cauchy, for all $\alpha \in (0,2]$ $\alpha$-stable, sub-Gaussian $\alpha$-stable, Student's t, GH, normalized inverse Gaussian, variance gammma (for example Matérn) and tempered $\alpha$-stable kernels by \citet{Grosswald1976,Rachev2011,Rosinski2007,Bianchi2010}. Other convolutionally infinitely divisible kernels can be constructed due to their closure under convolution, as shown by \citet[Proposition 3.7]{Nishiyama2016}.

Next, we provide an example where the error mechanism is not complete. If the unobserved confounder enters the proxy equation in a non-injective way, the error mechanism fails to be complete. The loss of information induced by a non-injective function, such as when the confounder enters the proxy equation squared, results in a non-complete error mechanism. We define a non-injective function of positive and finite measure for this in \zcref{def: non-injective function} and prove the following result in \zcref{app_non injective function}.

\begin{restatable}[non-injective noise model is not complete]{proposition}{prononinjective}
\label{prop: non injective error mechanism}
    Let $M_0$ be the true PC-SCM from \zcref{setting: pcscm}. Assume the proxy follows an additive noise model where $f_W : (U,E) \mapsto g(U) + E$ for a measurable and non-injective function of positive and finite measure $g: \mathcal{U} \rightarrow \mathbb{R}^d$ and a 
    random variable $E \in \mathcal{E} \subseteq\mathbb{R}^d$, which is independent of $U$, that is, 
    \begin{align}
        W = g(U) + E \quad\text{with}\quad U\indep E.
    \end{align}
    Further, assume that the distribution of $E$ has a density on $\mathbb{R}^d$ with respect to a $\sigma$-finite measure $\mu$. \newline
    Then, the error mechanism is not $L_1$-complete and not $L_\infty$-complete.
\end{restatable}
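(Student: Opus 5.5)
The plan is to construct an explicit nonzero $\delta$ that lies in both $L_1(\mathcal{U})$ and $L_\infty(\mathcal{U})$ and is annihilated by the integral operator. The key observation is that, because $W = g(U)+E$ with $U \indep E$, the conditional density depends on $u$ only through $g(u)$:
\begin{equation*}
p^{M_0}_{W\mid U}(w\mid u) = p^{M_0}_E\bigl(w-g(u)\bigr).
\end{equation*}
So $\delta$ only needs to integrate to zero over each level set of $g$. Non-injectivity with ``positive and finite measure'' (the notion of \zcref{def: non-injective function}) should give two disjoint measurable sets on which $g$ takes the same values. I expect that definition to supply, for some value or collection of values $v$, disjoint sets $B_1,B_2\subseteq\mathcal{U}$ with $0<\mu(B_1),\mu(B_2)<\infty$ and $g(B_1)=g(B_2)$ in a measure-preserving sense. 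The simplest form is a single point $v$ whose preimage contains two disjoint sets of positive finite measure.

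\textbf{Construction.} I will set
\begin{equation*}
\delta := \frac{\mathbbm{1}_{B_1}}{\mu(B_1)} - \frac{\mathbbm{1}_{B_2}}{\mu(B_2)}.
\end{equation*}
This $\delta$ is bounded and has finite $L_1$ norm, so $\delta \in L_1(\mathcal{U})\cap L_\infty(\mathcal{U})$. It is not zero a.e., because $B_1$ and $B_2$ are disjoint and have positive measure.

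\textbf{Verification.} In the single-value case $g\equiv v$ on $B_1\cup B_2$, for every $w$ we get
\begin{equation*}
\int p_E(w-g(u))\,\delta(u)\,\mu(du) = p_E(w-v)\,(1-1) = 0.
\end{equation*}
This contradicts both \zcref{eq: definition complete error l1} and \zcref{eq: definition complete error l_infty} at once, which proves that neither form of completeness holds.

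\textbf{Main obstacle.} The hard step is matching the construction to the precise wording of \zcref{def: non-injective function}. If that definition only guarantees that $g$ maps two disjoint positive-measure sets onto a common image, rather than collapsing them to a single point, I will instead use a change of variables and push forward along $g$. The plan is to pick $B_1,B_2$ with $g_\#(\mu|_{B_1})$ and $g_\#(\mu|_{B_2})$ both absolutely continuous with respect to a common measure $\nu$ and with finite mass. Their Radon--Nikodym derivatives give weights $h_1,h_2$, and I would take
\begin{equation*}
\delta = c_1\mathbbm{1}_{B_1} - c_2\mathbbm{1}_{B_2}\cdot(h_1/h_2)\circ g,
\end{equation*}
so that the pushforwards of $\delta\,\mu$ restricted to $B_1$ and to $B_2$ cancel. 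Then $\int p_E(w-g(u))\delta(u)\,\mu(du)=\int p_E(w-s)\,(g_\#\delta\mu)(ds)=0$.

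Keeping this $\delta$ bounded may require shrinking $B_2$ to a subset where the ratio $h_1/h_2$ is bounded above and below. Such a subset of positive measure always exists by a standard level-set argument. I would carry out the proof in the point-collapse case first and then note that this level-set argument handles the general case.
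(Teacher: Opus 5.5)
Your point-collapse verification is correct. Your $\delta$ is a multiple of the paper's $\delta=\mathbbm{1}_{\mathcal{U}_1}-\mathbbm{1}_{\mathcal{U}_2}$, because the definition stipulates $\mu(\mathcal{U}_1)=\mu(\mathcal{U}_2)$. However, the actual definition is weaker than either of your readings. It only asks for disjoint $\mathcal{U}_1,\mathcal{U}_2$ of equal, positive, finite measure such that every $u_1\in\mathcal{U}_1$ has some $u_2\in\mathcal{U}_2$ with $g(u_1)=g(u_2)$. That is only the set-level condition $g(\mathcal{U}_1)\subseteq g(\mathcal{U}_2)$. So you are in your fallback case, and that is where the gap sits.

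Write $\nu_i=g_\#(\mu|_{B_i})$. Asking both to be absolutely continuous with respect to a common $\nu$ is vacuous: take $\nu=\nu_1+\nu_2$. What your cancellation actually needs is that $\nu_1$ and $\nu_2$ are \emph{not mutually singular}, i.e.\ that there is a set $S$ with $\nu(S)>0$ on which both $h_1$ and $h_2$ are positive. The level-set argument only bounds $h_1/h_2$ once such an $S$ exists; it does not produce one. Given $S$, you must also shrink $B_1$, not just $B_2$. Take $B_i'=B_i\cap g^{-1}(S')$, where $S'\subseteq S$ is a set of positive measure on which $a\le h_1/h_2\le b$, and put $\delta=\mathbbm{1}_{B_1'}-\mathbbm{1}_{B_2'}\,(h_1/h_2)\circ g$. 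Then $g_\#(\delta\mu)=0$.

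The non-singularity you need cannot be derived from the set-level condition, because the partners $u_2$ may all lie in a $\mu$-null set. Here is a concrete counterexample. Take $k=d=1$, Lebesgue $\mu$ and $U,E\sim\mathcal{N}(0,1)$. Let $C\subset[2,3]$ be a translated Cantor set and $c\colon C\to[0,1]$ the surjective Cantor function. Set $g=c$ on $C$ and $g(u)=u$ otherwise.

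Then $\mathcal{U}_1=[0,1]$ and $\mathcal{U}_2=[2,3]$ witness the definition. Yet $g=\mathrm{id}$ Lebesgue-a.e., so $\int p_E(w-g(u))\delta(u)\,du=\int p_E(w-u)\delta(u)\,du$ for every $\delta$. This error mechanism is $L_\infty$-complete by the paper's theorem for continuous confounders, and $L_1$-complete because $\hat p_E$ has no zeros.

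So no proof can succeed from the definition as written. The paper's own proof fails at the same point. Its step ``by construction, $\int_{\mathcal{U}_1}p_E(w-g(u))\,\mu(du)=\int_{\mathcal{U}_2}p_E(w-g(u))\,\mu(du)$ for all $w$'' amounts to assuming $g_\#(\mu|_{\mathcal{U}_1})=g_\#(\mu|_{\mathcal{U}_2})$, which the definition does not give. Your pushforward formulation is the right lens. If the hypothesis is strengthened to require that the two pushforwards are not mutually singular, your construction with both sets restricted yields a nonzero bounded, integrable $\delta$ with $g_\#(\delta\mu)=0$. Then $\int p_E(w-g(u))\delta(u)\,\mu(du)=\int p_E(w-s)\,(g_\#\delta\mu)(ds)=0$.
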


\section{Estimation of the Causal Function}
\label{se: method}

To estimate the causal function, we develop SPICE-Net, a neural network based estimation framework. Throughout this section, we assume that the causal function is identifiable and, in particular, that SPICE \zcref{assumption: SPICE} holds. SPICE-Net builds on Engression, a distributional regression method developed by \citet{Shen2023}, and we show that it recovers the distribution of the unobserved confounder up to a linear transformation.

SPICE-Net is a two step procedure: First, we estimate the conditional density of the proxy $W$ given the treatment and outcome $(X, Y)$ using a neural network. Second, we sample observations from a linear transformation of the confounder and use these to adjust when estimating the causal function.

In the first step, we use the Engression framework by \citet{Shen2023} to estimate the density of $W$ given $(X,Y)$. We use a neural network with $l \in \mathbb{N}$ linear layers as visualised in \zcref{fig: spicenet}. We input the treatment and the outcome into the neural network and append noise nodes $(\varepsilon_1,\dots,\varepsilon_{l-2})$ to the first $l-2$ layers. Each noise node has a user-specified dimension and we collect them as $\varepsilon := (\varepsilon_1,\dots,\varepsilon_{l-2})$, where $\varepsilon$ follows a multivariate standard Gaussian distribution, that is a multivariate Gaussian distribution with zero mean and identity covariance. We use rectified linear unit (ReLU) activation functions up to layer $l-2$ followed by a linear activation. In the second-to-last layer, we add independent samples $e$ of dimension $d$ from the distribution of $E$. We know the distribution of $E$ since we know the error mechanism, that is, we know by SPICE \zcref{assumption: SPICE}-\zcref{as: continuous confounder and proxy,as: additive error,as: positive density} for a matrix $A \in \mathbb{R}^{d \times k}$ with full column rank and for all $(u,w)\in\mathcal{U}\times\mathcal{W}$
\begin{align}
        p^{M_0}_{W \mid U}(w \mid u ) = p^{M_0}_{E\mid U}(w-Au\mid u) = p^{M_0}_E(w-Au),
\end{align}
which holds by the change of variables formula (see for example Equation 2.89 in \citet{Murphy2012}) and the independence of $E$ and $U$. The neural network in \zcref{fig: spicenet} can be seen as a function $g_\gamma: (x,y,\varepsilon,e) \mapsto w$, with the output denoted by $w$, which approximates the conditional density of $W \mid (X,Y)$. We define a corresponding model class as the set of all functions by
\begin{align}
    \mathcal{G} = \{g_\gamma: (x,y,\varepsilon,e) \mapsto w \mid \gamma \text{ valid neural network parameters}\}.
\end{align}
\begin{figure}[!htbp]
    \centering
    \begin{tikzpicture}[shorten >=1pt, ->, draw=black!50, node distance=\layersep]
    
        \def\layersep{2.5cm}

        \tikzstyle{neuron} = [circle, draw=black, minimum size=18pt, inner sep=0pt, text centered, align=center, minimum width=22pt, minimum height=22pt]
        \tikzstyle{dots} = [inner sep=0pt, font=\huge]
        
        \node[neuron] (I-1) at (0,-1) {$x$};
        \node[neuron] (I-2) at (0,-2) {$y$};
        \node[neuron, draw=gray] (I-3) at (0,-3) {$\varepsilon_1$};

        \node[neuron] (H1-1) at (0.7*\layersep,-1.5) {};
        \node[neuron, draw=gray] (H1-2) at (0.7*\layersep,-2.5) {$\varepsilon_2$};

        \coordinate (V1-1) at (1.2*\layersep,-1.5) {};
        \coordinate (V1-2) at (1.2*\layersep,-2.5) {};

        \node[dots] (dots) at (1.3*\layersep, -2) {$\cdots$};

        \coordinate (V2-1) at (1.4*\layersep,-1.5) {};
        \coordinate (V2-2) at (1.4*\layersep,-2.5) {};

        \node[neuron] (H2-1) at (1.9*\layersep,-1.5) {};
        \node[neuron, draw=gray] (H2-2) at (1.9*\layersep,-2.5) {$\varepsilon_{l-2}$};

        \node[neuron] (O-1) at (2.6*\layersep, -2) {};

        \node[neuron] (O-2) at (3.6*\layersep, -2) {$w$};

        \foreach \i in {1,...,3}
            \foreach \j in {1}
                \draw[-latex] (I-\i) -- (H1-\j);

        \foreach \i in {1,...,2}
            \foreach \k in {1}
                \draw[-] (H1-\i) -- (V1-\k);

        \foreach \i in {1,...,2}
            \foreach \j in {1}
                \draw[-latex] (V2-\i) -- (H2-\j);

        \foreach \i in {1,...,2}
            \draw[-latex] (H2-\i) -- (O-1);

        \draw[-latex] (O-1) -- node[above] {$+e$} (O-2);

        \draw[-, decorate, decoration={brace,mirror,amplitude=10pt}] 
        ([xshift=-0.3cm, yshift=-0.5cm] I-3.south) 
        -- ([xshift=0.3cm, yshift=-0.5cm] O-2.south |- I-3.south)
        node[midway,below=10pt] {$g_{\gamma}$};
        
    \end{tikzpicture}
    \caption{The first step of SPICE-Net is a neural network that builds on the Engression framework by \citet{Shen2023}. It has weights $\gamma$ and it takes the treatment and the outcome as inputs and appends independent standard Gaussian noise nodes $\varepsilon_1,\dots,\varepsilon_{l-2}$ to the first $l-2$ layers. We add samples from the distribution of $E$ in the second-to-last layer and denote the output by $w$. It recovers the conditional distributions of interest by \zcref{prop: population guarantee SPICE-Net}.} 
            \label{fig: spicenet}
\end{figure}
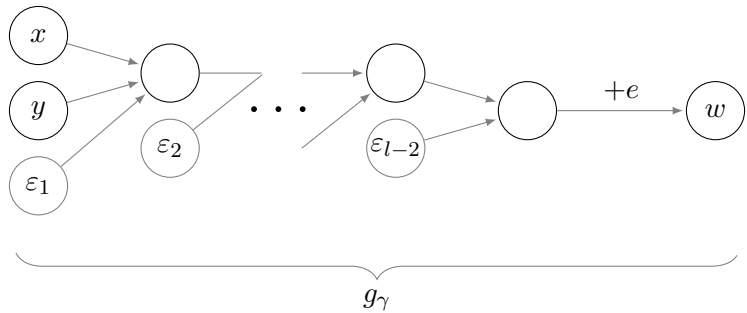

We optimise the weights of our neural network minimising the energy loss as in \citet{Shen2023}, which is based on the energy score by \citet{Gneiting2007}. This leads to the following population-level solution
\begin{align}
\label{eq: population version SPICE-Net}
    g^*_{\gamma} \in \operatorname*{argmin}_{g_\gamma \in \mathcal{G}}\ \mathbb{E}\left[ || W - g_{\gamma}(X,Y,\varepsilon,e)|| - \frac{1}{2} || g_{\gamma}(X,Y,\varepsilon,e) - g_{\gamma}(X,Y,\varepsilon',e')|| \right],
\end{align}
where $\varepsilon, \varepsilon'$ are vectors drawn independently from a multivariate standard Gaussian distribution and $e, e'$ are independent draws from the distribution of $E$. If the model is correctly specified, we show that $g_\gamma^*$ recovers the distribution of $W \mid (X,Y)$ and by the hardcoded last layer, the first part of the neural network recovers a linear transformation of $U \mid (X,Y)$.

\begin{restatable}[population guarantee for SPICE-Net]{proposition}{proSPICE}
\label{prop: population guarantee SPICE-Net}
    Let $M_0$ be the true PC-SCM from \zcref{setting: pcscm}. Assume that the model for $W \mid (X,Y)$ is correctly specified, that is, there exists a $g_{\gamma} \in \mathcal{G}$ such that for a multivariate standard Gaussian random variable $\mathcal{E}$
    and
    for all $(x,y) \in \mathcal{X} \times \mathcal{Y}$, it holds that $g_{\gamma} (x, y, \mathcal{E}, E)~\sim~P^{M_0}_{W \mid (X,Y) = (x,y)}$. Furthermore, assume that the expectation in \zcref{eq: population version SPICE-Net} is finite. \newline
    Then, for the solution $g^*_\gamma$ in \zcref{eq: population version SPICE-Net}, we have for almost all $(x,y) \in \mathcal{X} \times \mathcal{Y}$ that
    \begin{align}
        g^*_\gamma (x,y,\mathcal{E},E)~\sim~P^{M_0}_{W \mid (X,Y) = (x,y)}.
    \end{align}
    Moreover, by SPICE \zcref{assumption: SPICE}-\zcref{as: continuous confounder and proxy, as: additive error, as: positive density} there exists a matrix $\widetilde{A} \in\mathbb{R}^{d\times k}$ with full column rank such that for almost all $(x,y) \in \mathcal{X} \times \mathcal{Y}$
    \begin{align}
        g^*_\gamma (x,y,\mathcal{E},E) - E ~\sim~P^{M_0}_{\widetilde{A}U \mid (X,Y) = (x,y)}.
    \end{align}
\end{restatable}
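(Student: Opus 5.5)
The plan has two parts: a distributional-recovery step based on strict propriety of the energy score, followed by a deconvolution step that uses the hardcoded additive layer together with SPICE \zcref{assumption: SPICE}.

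\textbf{Step 1 (recovering $W\mid(X,Y)$).} Condition on $(X,Y)=(x,y)$ and write the objective in \zcref{eq: population version SPICE-Net} as $\mathbb{E}_{(X,Y)}[S(Q_{x,y},P^{M_0}_{W\mid(X,Y)=(x,y)})]$. Here $Q_{x,y}$ is the law of $g_\gamma(x,y,\mathcal{E},E)$, and $S(Q,P)=\mathbb{E}\|W-Z\|-\tfrac12\mathbb{E}\|Z-Z'\|$ with $W\sim P$ and $Z,Z'\sim Q$ independent. The finiteness assumption justifies this conditioning via Fubini. By \citet{Gneiting2007} and \citet{Shen2023}, the energy score is strictly proper on distributions with finite first moment. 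Hence $S(Q,P)\ge S(P,P)$, with equality iff $Q=P$. Correct specification gives a $g_\gamma\in\mathcal{G}$ attaining the pointwise minimum for every $(x,y)$, so the minimum of the integrated objective equals $\mathbb{E}[S(P_{x,y},P_{x,y})]$. A minimiser $g^*_\gamma$ therefore has a nonnegative integrand gap $S(Q^*_{x,y},P_{x,y})-S(P_{x,y},P_{x,y})$ with zero integral. This gap must vanish for almost all $(x,y)$, and strict propriety then yields $Q^*_{x,y}=P_{x,y}$.

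\textbf{Step 2 (structure of the network output).} By architecture, $g^*_\gamma(x,y,\mathcal{E},E)=h^*(x,y,\mathcal{E})+E$. Here $h^*$ is the part of the network before the added noise, and $E$ is independent of $\mathcal{E}$. On the model side, SPICE \zcref{assumption: SPICE}-\zcref{as: additive error} together with \zcref{eq: independence statement} gives $W=AU+E$ with $E\indep(U,X,Y)$. Independence of $E$ from $(X,Y)$ needs a short check: $E\indep U$ and $E$ enters only $W$ in $M_0$, so by d-separation $E\indep(U,X,Y)$. Consequently $P_{W\mid x,y}=P_{AU\mid x,y}*P_E$, and Step 1 gives
\[
P_{h^*(x,y,\mathcal{E})}*P_E=P_{AU\mid (X,Y)=(x,y)}*P_E .
\]

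\textbf{Step 3 (deconvolution; main obstacle).} The key step is to cancel $P_E$, which I do with characteristic functions. Taking Fourier transforms gives $\varphi_{h^*}(t)\varphi_E(t)=\varphi_{AU\mid x,y}(t)\varphi_E(t)$ for all $t\in\mathbb{R}^d$. SPICE \zcref{assumption: SPICE}-\zcref{as: fourier transform}, via its equivalence with \zcref{prop: sufficient conditions}-\zcref{as: non-zero characteristic function}, says $\varphi_E$ never vanishes. Dividing by it gives $\varphi_{h^*}=\varphi_{AU\mid x,y}$, and hence $h^*(x,y,\mathcal{E})\sim P_{AU\mid(X,Y)=(x,y)}$. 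Since $h^*=g^*_\gamma-E$ in the coupling where the same $E$ is added, this is the claim with $\widetilde A=A$, which has full column rank by \zcref{assumption: SPICE}-\zcref{as: additive error}.

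The statement cites only items (i)--(iii) of SPICE, so the authors may intend a weaker formulation, for example identification only up to a linear map. I would nevertheless invoke (iv), or the full assumption, which is in force throughout this section, for the division step. I would also flag that "$g^*_\gamma-E$" must be read with the same noise draw that was added in the last layer.
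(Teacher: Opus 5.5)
Your proposal is correct, and on the second claim it takes a different and more complete route than the paper. Step 1 coincides with the paper's argument, which simply invokes Proposition 1 of the Engression paper, that is, strict propriety of the energy score, to get $g^*_\gamma(x,y,\mathcal{E},E)\sim P^{M_0}_{W\mid (X,Y)=(x,y)}$ for almost all $(x,y)$.

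The paper disposes of the second claim in one sentence: ``by SPICE (ii) there exists $\widetilde A$ such that $g^*_\gamma(x,y,\mathcal{E},E)-E\sim P^{M_0}_{\widetilde AU\mid (X,Y)=(x,y)}$''. It gives no reason why equality in law of $h^*(x,y,\mathcal{E})+E$ and $AU+E$, given $(X,Y)=(x,y)$, allows one to remove $E$. Your Steps 2--3 supply this missing deconvolution:
\begin{itemize}
\item the additive architecture $g^*_\gamma=h^*+e$;
\item the check that $E\indep(U,X,Y)$, so that $P^{M_0}_{W\mid (X,Y)=(x,y)}=P^{M_0}_{AU\mid (X,Y)=(x,y)}*P_E$;
\item division by $\varphi_E$.
\end{itemize}

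Your route buys two things. First, it gives the sharper conclusion $\widetilde A=A$. This is consistent with the existential statement and more precise than the paper's remark that $\widetilde A$ need not coincide with $A$.

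Second, it makes the hypotheses honest. The statement cites only SPICE (i)--(iii), but some condition on $\varphi_E$ is genuinely needed. For $d=k=1$, take $p_E(e)=(1-\cos e)/(\pi e^2)$, whose characteristic function $(1-|t|)_+$ vanishes for $|t|\ge 1$. The two laws with characteristic functions $(1-|t|)_+$ and its $2$-periodic extension agree on $[-1,1]$, so they have the same convolution with $P_E$. Hence the law of $g^*_\gamma(x,y,\mathcal E,E)$ does not determine the law of $g^*_\gamma(x,y,\mathcal E,E)-E$.

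Invoking (iv), which is in force throughout that section, is therefore the right call. By continuity of characteristic functions, it would even suffice that the zero set of $\varphi_E$ has empty interior. Your remark that $g^*_\gamma-E$ must use the same draw of $E$ as the network's last layer is also needed for the statement to be meaningful.
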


The proof follows from \citet{Shen2023,Szekely2002} and is presented in \zcref{app:confounding equivalence}. By \zcref{prop: population guarantee SPICE-Net}, we can sample from the conditional distribution $\widetilde{A}U|(X, Y)$. We recover the distribution of the unobserved confounder up to a linear transformation $\widetilde{A} \in \mathbb{R}^{d \times k}$, which does not necessarily coincide with the matrix $A \in \mathbb{R}^{d \times k}$ from SPICE \zcref{assumption: SPICE}. We show that this is sufficient to recover the causal function in \zcref{prop: equivalence U and AU adjusting}. Our result is even more general by considering all injective functions from $\mathbb{R}^k$ to $\mathbb{R}^d$ with $k \leq d$. This extends the adjustment formula \citep[Theorem 3.3.2]{Pearl2009} to injective transformations of the confounder and is proven in \zcref{app:confounding equivalence}.

\begin{restatable}[adjusting for $b(U)$]{proposition}{proadjustingau}
\label{prop: equivalence U and AU adjusting}
    Let $M_0$ be the true PC-SCM from \zcref{setting: pcscm}. For all Borel measurable and injective functions $b: \mathbb{R}^k \rightarrow \mathbb{R}^d$, we recover the causal function by adjusting for $b(U)$, that is, for all $x \in \mathcal{X}$ we have that 
    \begin{align}
    \theta^{M_0}(x) = \mathbb{E}^{M_0} \left[ \mathbb{E}^{M_0} \left[ Y \mid b(U), X=x\right]\right] \quad\text{a.s.}
    \end{align}
\end{restatable}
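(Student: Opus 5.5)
The plan is to reduce the statement to the ordinary adjustment formula for $U$ by showing that conditioning on $b(U)$ produces the same $\sigma$-algebra as conditioning on $U$, up to null sets.

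\textbf{Step 1 (measurable inverse).} Since $b:\mathbb{R}^k\to\mathbb{R}^d$ is Borel measurable and injective between Polish spaces, the Lusin--Souslin theorem gives two facts. First, the image of every Borel set under $b$ is Borel. Second, $b(\mathbb{R}^k)$ is a Borel set and the inverse $b^{-1}: b(\mathbb{R}^k)\to\mathbb{R}^k$ is Borel measurable. We extend $b^{-1}$ to a Borel map $c:\mathbb{R}^d\to\mathbb{R}^k$ by setting it constant off the image. Then $U = c(b(U))$ everywhere, so $\sigma(U)\subseteq\sigma(b(U))$. Trivially $\sigma(b(U))\subseteq\sigma(U)$, hence $\sigma(b(U),X)=\sigma(U,X)$.

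\textbf{Step 2 (equal conditional expectations).} Because the conditioning $\sigma$-algebras coincide,
\begin{align}
\mathbb{E}^{M_0}[Y\mid b(U),X] = \mathbb{E}^{M_0}[Y\mid U,X] \quad\text{a.s.}
\end{align}
Writing the left side as $h(b(U),X)$ and the right side as $m(U,X)$, where $m(u,x)=\int y\, p^{M_0}_{Y\mid U,X}(y\mid u,x)\,\mu(dy)$, we get $h(b(u),x)=m(u,x)$ for $P_{U,X}$-almost all $(u,x)$. Integrability of $Y$ is guaranteed by the setting's assumption that the causal function is integrable, and I would add finiteness of $\mathbb{E}|Y|$ if needed.

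\textbf{Step 3 (evaluate at $X=x$ and integrate).} The expression $\mathbb{E}[\mathbb{E}[Y\mid b(U),X=x]]$ is interpreted as $\int h(b(u),x)\,p^{M_0}_U(u)\,\mu(du)$. By Step 2 this equals $\int m(u,x)\,p_U(u)\,\mu(du)=\theta^{M_0}(x)$, using \zcref{eq: causal function} and Fubini.

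\textbf{Main obstacle.} The delicate part is the passage from the a.s.\ identity in $(u,x)$ to every fixed $x$. Two facts handle this. The positivity of the joint density on $\mathcal{U}\times\mathcal{X}$ from \zcref{def: pcscm} means $P_{U,X}$ and $P_U\otimes P_X$ are mutually absolutely continuous. In addition, we choose the versions $h(\cdot,x):=m(c(\cdot),x)$ via the conditional densities, which is consistent since $p_{Y\mid b(U),X}$ is the pushforward of $p_{Y\mid U,X}$ under the bijection $b$ onto its image. With this version choice the identity holds for every $x\in\mathcal{X}$ and $P_U$-a.e.\ $u$. This is exactly the caveat the statement's ``a.s.'' accommodates.
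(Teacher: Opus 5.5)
Your proof is correct and follows the same route as the paper: both show $\sigma(b(U))=\sigma(U)$ using Borel measurability of $b$ in one direction and the Lusin--Souslin theorem (the paper cites Kechris, Theorem~15.2) in the other, and then conclude that the two conditional expectations coincide almost surely. Your explicit choice of version $h(\cdot,x)=m(c(\cdot),x)$ makes the step from a $P_{U,X}$-a.s.\ identity to each fixed $x$ rigorous, a point the paper leaves implicit.
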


In our \zcref{setting: pcscm}, we observe $n$ independent and identically distributed copies of the random variable $(X,Y,W)$. For a matrix $\widetilde{A} \in \mathbb{R}^{d \times k}$ with full column rank and all $i \in [n]$, we sample $\widetilde{A}U_i \sim P^{M_0}_{\widetilde{A}U \mid (X,Y) = (X_i,Y_i)}$ via the function $g^*_\gamma-E$ in \zcref{prop: population guarantee SPICE-Net}. Since the function $U \mapsto \widetilde{A}U$ is Borel measurable by \citet[Proposition 3.1.8]{Rosenthal2006} and injective, adjusting for $\widetilde{A}U$ recovers the causal function by \zcref{prop: equivalence U and AU adjusting}.

In the second step of SPICE-Net, we estimate the causal function by adjusting for a linear transformation of the confounder. There are various approaches to adjust for observed confounding \citep{Yao2021} and we focus on a regression adjustment estimator following \citet{Zhang2025} based on the G-computation estimator by \citet{Robins1986,Gill2001}. Regression adjustment estimates the causal function for $\widetilde{A} \in \mathbb{R}^{d \times k}$ with full column rank and for all $x \in \mathcal{X}$ by
\begin{align}
    \hat{\theta}(x) := \frac{1}{n} \sum_{i=1}^n \mathbb{E}^{M_0} \left[ Y \mid \widetilde{A}U=\widetilde{A}U_i, X=x\right].
\label{eq: regression adj}
\end{align}
We use a neural network for estimating the conditional expectation in \zcref{eq: regression adj} with one hidden layer, ReLU activations, an adaptive learning rate and squared error loss. This step of SPICE-Net can be replaced with any other causal function estimator that adjusts for observed confounding.

In the first step of SPICE-Net, we add independent samples from the distribution of $E$ in the second-to-last layer of the neural network in \zcref{fig: spicenet}. If the parameters of the distribution of $E$ are unknown, we can estimate them as additional parameters within the first step. We initialize the parameters of the distribution of $E$ by using the fact that the variance of $E$ does not exceed the variance of $W$. We keep the second step unchanged and refer to this adjusted framework as $\text{SPICE-Net-Approx}$.

\section{Application}
\label{se: application}

We compare SPICE-Net to other causal function estimators on simulated data in \zcref{subse: simulations} and on real-world data in \zcref{subse: chambers}. SPICE-Net is a two-step procedure for a matrix $\widetilde{A} \in \mathbb{R}^{d \times k}$ with full column rank:
\begin{enumerate}
    \item[1.] Train the neural network in \zcref{fig: spicenet} and sample from the conditional distribution $\widetilde{A}U \mid (X,Y)$.
    \item[2.] Estimate the causal function by the regression adjustment estimator in \zcref{eq: regression adj}.
\end{enumerate}
The approach is compatible with any other causal function estimator in the second step.

\subsection{Simulation Study}
\label{subse: simulations}

\paragraph{Simulation settings}
In our simulation, we consider four different data sets outlined in \zcref{tab: data models} which are subsumed under SPICE \zcref{assumption: SPICE}. Data set A is a linear Gaussian SCM as in \citet{Pearl2010,Kuroki2014} and data set B uses a binary treatment. Data set C deviates from Gaussian noise distributions and from a linear function $f_Y$. Data set D assumes a bivariate unobserved confounder and a three-dimensional proxy variable observed under multivariate Gaussian measurement noise with a fixed covariance matrix in which the proxies are not conditionally independent given the confounder. For all data sets, we sample $2000$ and $5000$ observations $20$ times, which we refer to as the training sets.

\begin{table}[!htbp]
\centering
\resizebox{\linewidth}{!}{%
\begin{tabular}{lllll}
        \toprule
         & $N_U$ & $f_W$ & $f_X$ & $f_Y$ \\
        \midrule
        A. Gaussian & $\mathcal{N}(0,1)$ & $U + \mathcal{N}(0,1)$& $U + \mathcal{N}(0,1)$& $U + X + \mathcal{N}(0,1)$\\
        B. Binary & $\mathcal{N}(0,1)$ & $U + \mathcal{N}(0,1)$& $\mathbbm{1}\{ \left(U + \mathcal{N}(0,1) \right) >0\}$& $U + X + \mathcal{N}(0,1)$ \\
        C. Exponential & $\text{Exp}(1)$& $U + \text{Exp}(1)$ &$U + \text{Exp}(1)$ & $X^2 + UX + \text{Exp}(1)$\\
        D. High-dim. & $\mathcal{N}\left(\begin{bmatrix} 0 \\0 \end{bmatrix},  \begin{bmatrix}
            1 & 0 \\ 0 & 1
        \end{bmatrix}\right)$ & $AU + E$ & $BU + \mathcal{N}(0, 1)$& $BU + X + \mathcal{N}(0, 1)$\\ 
        \bottomrule
        \vspace{0.5ex}
\end{tabular}}
      \vspace{0.5ex}
{with 
        $A =
\begin{bmatrix}
1 & 1 \\
0 & 1 \\
1 & 0
\end{bmatrix},\quad E \sim \mathcal{N}\left(
    \begin{bmatrix} 0 \\0 \\0 \end{bmatrix},
    \begin{bmatrix}
      2.0 & -0.3 & 0.5 \\
      -0.3 & 1.5 & 0.4 \\
      0.5 & 0.4 & 1.8
    \end{bmatrix}
\right) \quad\text{ and }\quad B = \begin{bmatrix}
1 \\
1
\end{bmatrix}.$ \par}
      \caption{The data sets for our simulation that are subsumed under SPICE \zcref{assumption: SPICE}. All noise distributions are drawn independently from each other. In data set B, the treatment is one if $U + \mathcal{N}(0,1)$ is positive and zero otherwise.}
\label{tab: data models}
\end{table}

\paragraph{Causal function}
In the data sets A,B and D, the causal function for all $x \in \mathcal{X}$ is $\theta^{M_0}(x) = x$, and in data set C, it is for all $x \in \mathcal{X}$ given by $\theta^{M_0}(x) = x^2 + x +1$. We test the estimates on a test data set of size $500$ following the same data generating mechanism as the training set. For a continuous treatment, we calculate the mean squared test error as $\frac{1}{500} \sum_{i=1}^{500} (\hat{\theta}(x_i) - \theta^{M_0}(x))^2$ where $\hat{\theta}(x_i)$ is an estimate of the causal function evaluated at a test point. For the binary treatment in data set B, we evaluate the squared test error of estimating the ACE, which is $( \hat{\theta}(1) - \hat{\theta}(0) - 1)^2$.

\paragraph{Methods}
We outline different methods to estimate the causal function with details given in \zcref{app:simulation}. For all methods, we standardise the data and then back-transform the causal function estimates to the original scale.\footnote{We do not standardise the binary treatment in data set B. Further, we do not standardise the variables from data set B for SKPV and SPMMR because the standardisation significantly increases test errors.} For our proposed method, SPICE-Net, we keep the architecture fixed across simulations. We only adapt the sampling procedure in the second-to-last layer of \zcref{fig: spicenet} to match the distribution of $E$. Assuming that the parameters of the distribution of $E$ are unknown a priori, we estimate them with SPICE-Net-Approx. To estimate the causal function in the second step, we use the nonparametric regression adjustment estimator in \zcref{eq: regression adj}. We compare SPICE-Net and SPICE-Net-Approx to adjusting for the proxy, denoted by Adj.-W, and the (true yet in practice considered unobserved) confounder, denoted by Adj.-U, which serves as an oracle benchmark. We use the same regression adjustment estimator in \zcref{eq: regression adj} for Adj.-W and Adj.-U as for SPICE-Net and SPICE-Net-Approx.

We further compare SPICE-Net to methods with the same graph as in \zcref{setting: pcscm} such as the variational autoencoder by \citet{Louizos2017}. Their proposed method, the Causal Effect Variational Autoencoder (CEVAE), has been widely used and motivated several extensions \citep{Li2024}. CEVAE uses a factorisation of the joint density $p^M$ of \zcref{setting: pcscm} into marginal and conditional densities, which are for continuous variables approximated by Gaussian densities with parameters learnt by neural networks. 

Finally, we compare SPICE-Net to two kernel-based methods, Single Kernel Proxy Variable (SKPV) and Single Proxy Maximum Moment Restriction (SPMMR) by \citet{Xu2025}.\footnote{We conduct our simulation based on the \href{https://github.com/liyuan9988/KernelSingleProxy/tree/master}{code} by \citet{Xu2025}. Aside from using a linear kernel for the binary treatment in data set B, the implementation and hyperparameters of SKPV and SPMMR remain unchanged.} \citet{Xu2025} estimate the causal function via estimating a bridge function with a two-stage regression approach for SKPV and a maximum moment restriction approach for SPMMR. \citet{Xu2025} show that the bridge function exists if the outcome is a deterministic function of the treatment and the unobserved confounder. 

\paragraph{Results}
The test errors are presented in \zcref{plot:results_2000}, with numerical values reported in \zcref{tab:results_n2000} in \zcref{app:simulation}, which also contains the results for a training sample size of $5000$. Adjusting for the confounder achieves the lowest MSE across all data sets. Adjusting for the proxy yields biased estimates of the causal function. Excluding Adj.-U, SPICE-Net attains the lowest MSE in all data sets except for data set C and a sample size of $5000$ where SPICE-Net-Approx has a lower MSE. SPICE-Net-Approx exhibits a higher variance in test errors than SPICE-Net. In general, CEVAE, SKPV and SPMMR have a higher MSE than SPICE-Net, with a smaller difference for binary treatments but especially for non-Gaussian and high-dimensional data.

\begin{figure}[!htbp]
  \centering
    \begin{minipage}{\linewidth}
      \centering
      \begin{minipage}[b]{0.48\linewidth}
        \centering
        \includegraphics[width=\linewidth]{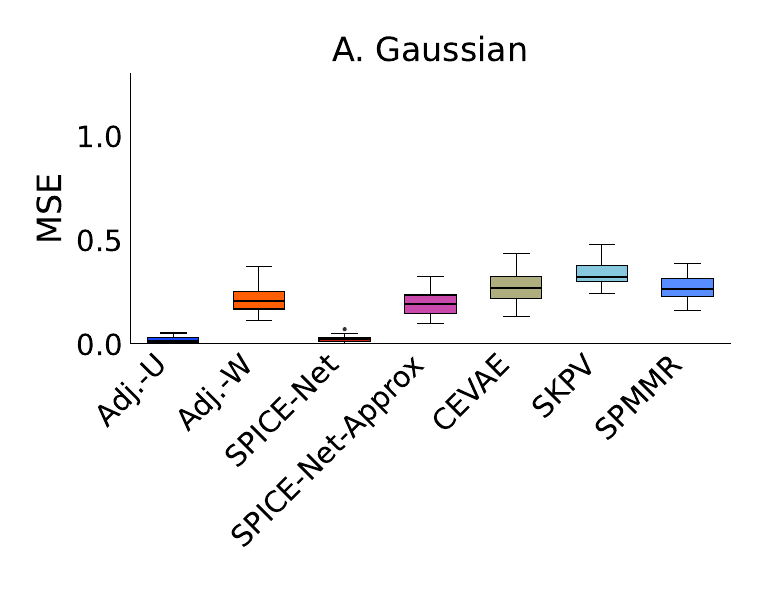}
      \end{minipage}
      \hfill
      \begin{minipage}[b]{0.48\linewidth}
        \centering
        \includegraphics[width=\linewidth]{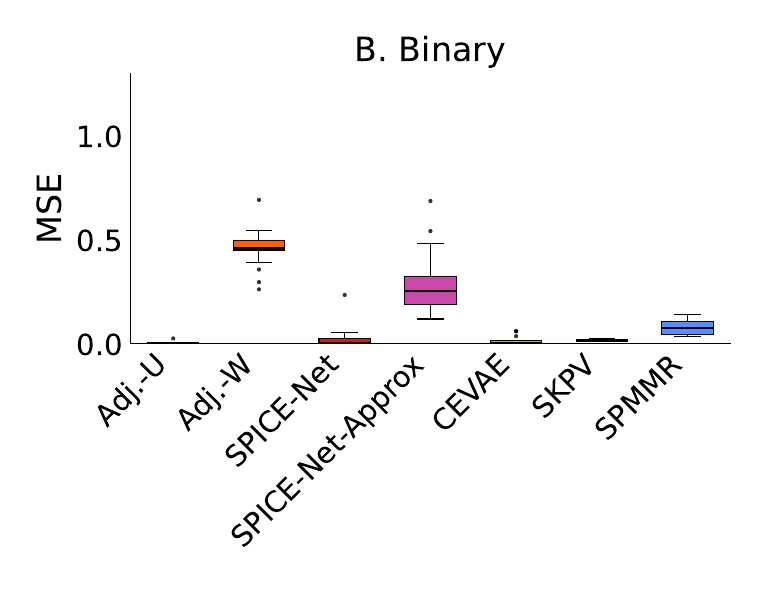}
        
      \end{minipage}

      \vspace{1ex}

      \begin{minipage}[b]{0.48\linewidth}
        \centering
        \includegraphics[width=\linewidth]{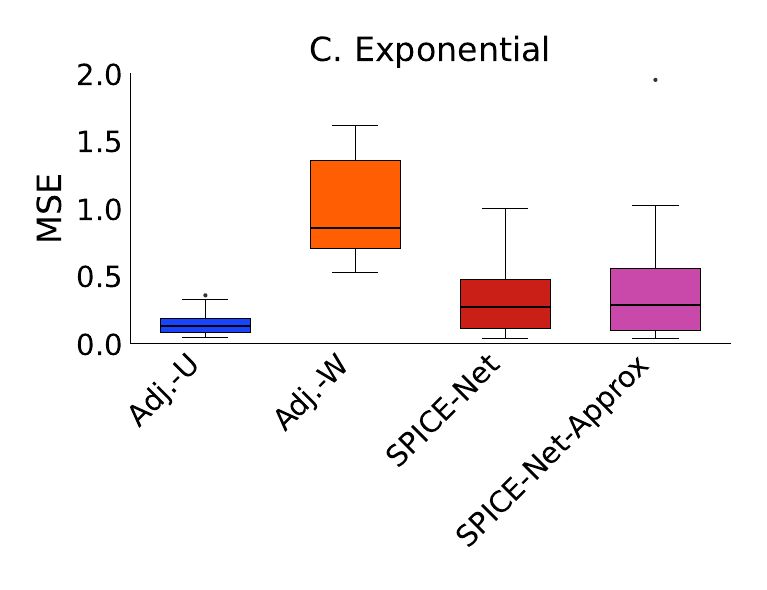}
        
      \end{minipage}
      \hfill
      \begin{minipage}[b]{0.48\linewidth}
        \centering
        \includegraphics[width=\linewidth]{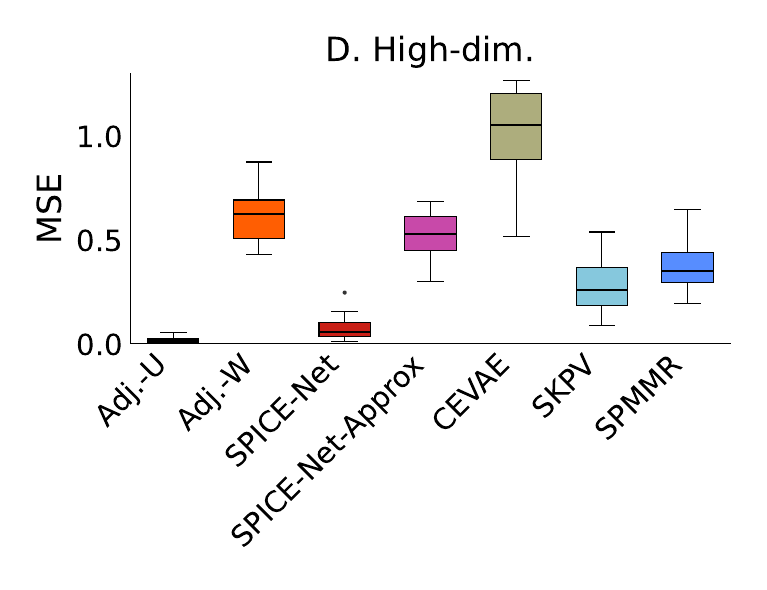}
        
      \end{minipage}

      \caption{Mean squared error (MSE) of causal function estimators described in \zcref{subse: simulations,app:simulation} for $2000$ training samples from data sets A-D of \zcref{tab: data models} on a test set of size $500$.
      For data set C, the median (standard deviation) of the MSE for CEVAE, SKPV and SPMMR are $2.87~(4.70)$, $16.16~(14.95)$ and $58.18~(18.21)$, respectively, and there is a different scale for the y-axis. Numerical values are given in \zcref{tab:results_n2000}. Our proposed method SPICE-Net achieves a MSE close to the ground-truth method Adj.-U.}
      \label{plot:results_2000}
    \end{minipage}
\end{figure}

Adjusting for the confounder satisfies the adjustment formula \citep[Theorem 3.3.2]{Pearl2009} and the low MSE indicates that the regression adjustment estimator in \zcref{eq: regression adj} performs reasonable in our simulations. In general, adjusting for the proxy is insufficient. SPICE-Net recovers the causal function in our simulations, its performance is comparable to adjusting for the confounder itself. When the parameters of the noise distribution are unknown, SPICE-Net-Approx yields reasonable results. However, we do not have theoretical guarantees for SPICE-Net-Approx and it is unclear whether it leads to valid adjustment. We interpret comparisons of SPICE-Net to other estimators with caution because SPICE-Net underlies different assumptions than SKPV and SPMMR. Nevertheless, in our setting SPICE-Net performs substantially better than the other single proxy estimators for continuous treatments. SPICE-Net improves upon adjusting for the proxy and achieves a performance close to confounder adjustment. We conclude that SPICE-Net can recover the causal function under SPICE \zcref{assumption: SPICE}.

\subsection{Real-world data}
\label{subse: chambers}

\paragraph{Causal Chamber\textsuperscript{\tiny®}}
We use the Causal Chamber\textsuperscript{\tiny®} \citep{Gamella2025} to generate real-world data with a graph as in \zcref{fig: scm and dag}. Specifically, we apply the Light Tunnel Mk2 from Causal Chamber\textsuperscript{\tiny®} as shown in \zcref{fig: causal chambers setup}, which is a computer-controlled device to measure variables from physical systems. The tunnel is a real, controllable and optical experiment designed for evaluating machine learning methodology. It consists of different light sources, polarisers and sensors that allow us to measure and manipulate the physical variables of the system, while providing a causal ground truth of the effects between them.\footnote{See \href{https://docs.causalchamber.ai/the-chambers/light-tunnel-mk2}{causalchamber.ai} for the complete documentation and ground-truth graph.} We consider a subset of the tunnel variables as described in \zcref{fig: causal chambers setup} and keep all other variables constant. 

\paragraph{Experiments and methods}
We generate data for two different parameter settings of the Causal Chamber\textsuperscript{\tiny®} as outlined in  \zcref{tab: chambers exp details}. In experiment I, we consider a low noise setting between the confounder and the proxy variable and we increase the noise in experiment II. We generate $5000$ training samples and $500$ test samples for each experiment $20$ times. We consider the same causal function estimators as in the previous simulation in \zcref{subse: simulations} with details on the parameters given in \zcref{app:chamber}. For our proposed method, SPICE-Net, we estimate the error mechanism in an additional experiment III on the Causal Chamber\textsuperscript{\tiny®} as shown in \zcref{tab: chambers exp details}. For this, we assume that the noise $E$ of the proxy-confounder relation follows a Gaussian distribution and estimate its variance by setting the confounder to zero. We then use the empirical variance of the proxy as an estimate for the variance of $E$. This is different from SPICE-Net-Approx, where we estimate the variance of $E$ within the estimation framework as described in \zcref{se: method}.

\paragraph{Results} 
We present the test errors of all causal function estimators compared to the ground-truth method Adj.-U in \zcref{plot:chamber_results} with numerical values in \zcref{tab:chambers_3panel_mse}. In a low noise setting, Adj.-U, Adj.-W, SPICE-Net and SPICE-Net-Approx lead to similar causal function estimates. In a noisy setting, Adj.-W and SPICE-Net-Approx have a higher MSE and deviate from Adj.-U and SPICE-Net. SPICE-Net performs better than the other proxy-based causal function estimators and achieves a performance close to confounder adjustment.

\begin{figure}[!htbp]
    \centering
    \begin{minipage}{\linewidth}
      \centering
      \begin{minipage}[b]{0.48\linewidth}
        \centering
        \includegraphics[width=\linewidth]{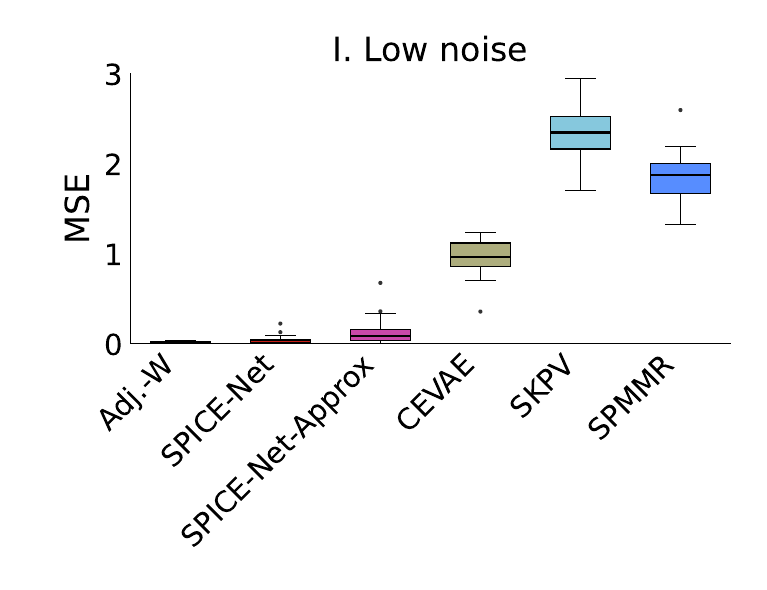}
      \end{minipage}
      \hfill
      \begin{minipage}[b]{0.48\linewidth}
        \centering
        \includegraphics[width=\linewidth]{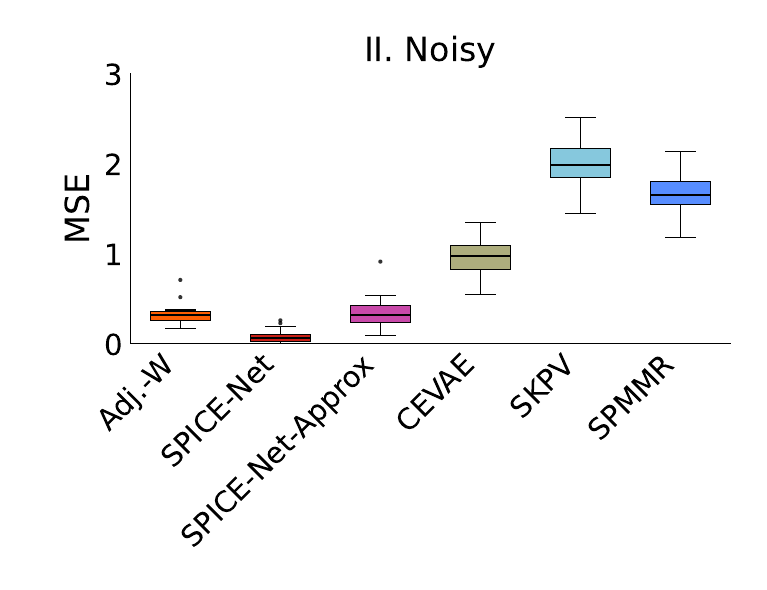}
        
      \end{minipage}
    \end{minipage}
    
    \caption{Mean squared error (MSE) in ten thousands of causal function estimators compared to the ground-truth method Adj.-U on data from the Light Tunnel Mk2 from the Causal Chamber\textsuperscript{\tiny®} \citep{Gamella2025} from experiments I and II of \zcref{tab: chambers exp details}. The methods are described in \zcref{subse: simulations,subse: chambers,app:simulation,app:chamber} and numerical values of the results are given in \zcref{tab:chambers_3panel_mse}. Our proposed method SPICE-Net is close to the ground-truth method Adj.-U in both experiments.
    }
      \label{plot:chamber_results}
\end{figure}

\section{Discussion}
\label{se: discussion}

In this paper, we prove the identifiability of the causal function in a proxy-confounded structural causal model under the assumption of a known error mechanism. The causal function is identifiable under SPICE \zcref{assumption: SPICE} for continuous confounders and proxies in a linear additive noise model for a broad class of noise distributions. Based on this result, we propose SPICE-Net, a machine learning method to recover the distribution of the unobserved confounder up to a linear transformation. It can be combined with any estimator of the causal function and it performs on par with confounder adjustment in our simulations and real-world example.  

\paragraph{Applications}
We expect SPICE and SPICE-Net to be relevant in medical applications. In healthcare, the functional status of senior citizens confounds the relationship between influenza vaccination and all cause mortality \citep{Jackson2005}. The diagnosis of dementia is an indicator of functional status and may be associated with a higher mortality risk and a lower
probability of influenza vaccination \citep{Jackson2005}. In administrative insurance data sets, Alzheimer's disease and related disorders are often unmeasured \citep{Wyss2022}. However, the use of donepezil, a medication to treat dementia, is typically recorded and can serve as a proxy variable. In cancer research, a patient’s overall fitness can be an unobserved confounder influencing both the choice of treatment aimed at curing cancer and patient survival. Such a scenario arises in stage III non-small cell lung cancer, where \citet{vanAmsterdam2022} examine concurrent versus sequential chemoradiation treatment. A patient’s performance score, reflecting their capacity for self-care and physical ability, serves as a proxy for overall fitness in this context. An additional example comes from educational contexts akin to \citet{Deaner2023}. After elementary school in Germany, a decision on the secondary school track is made. Treating track choice as the treatment and degree completion as the outcome, their relation is confounded by the academic ability of the student. Since ability is unobserved, grades from elementary school serve as error-prone proxy measurements. In general, our setting is related to measurement errors in which the proxy is a noisy measurement of the confounder. In particular, it corresponds to a nondifferential measurement error, where the proxy is conditionally independent of the outcome given the the treatment and the confounder \citep{Carroll2006}.

\paragraph{Extensions}
We sketch three generalisations to the PC-SCM of \zcref{setting: pcscm} under which our results remain valid. First, we allow for an additional observed confounder $O \in \mathcal{O} \subseteq \mathbb{R}^l$, where $l \in \mathbb{N}$, as illustrated on the left-hand side of \zcref{fig: extensions pcscm}. We show in \zcref{app:generalisations} that \zcref{thm: effect identifiable} and hence all subsequent results also hold for the causal function in this extended setting. 

Second, we can revert edges in the directed acyclic graph as illustrated by undirected edges in the middle graph of \zcref{fig: extensions pcscm}. Here, we can reinterpret our PC-SCM within the context of unobserved mediation analysis by considering a model similar to \zcref{setting: pcscm} but where $U\leftarrow f_U(X,N_U)$ and $X\leftarrow N_X$ such that the induced graph is as in \zcref{setting: pcscm} but with the edge reversed between $U$ and $X$. In this setting, the causal function corresponds to the controlled direct effect of $X$ on $Y$ as in \citet{Liu2025}. The direct effect can relate to implicit biases in clinical decision-making as opposed to biological effects mediated through the patient’s true underlying health status \citep{Liu2025}. Another example is to assess whether genetic variants influence lung cancer directly, or indirectly via smoking behaviour \citep{Vanderweele2012}. The unobserved mediation graph is termed classical measurement error by \citet{LeCessie2012}. It is observationally equivalent to the PC-SCM by \citet[Theorem 1.2.8]{Pearl2009} because they have the same skeleton and the same (empty) set of v-structures and hence our results remain valid. Further, we can revert the edge between the proxy and the confounder. Under the assumptions of \zcref{thm: continuous confounder}, the matrix $A \in \mathbb{R}^{d \times k}$ is of full column rank and hence, there exists a left-inverse such that we can write the confounder as a function of the proxy. It suffices to observe a variable that satisfies the conditional independence in \zcref{eq: independence statement}, and it need not be a causal descendant of the confounder.

Lastly, we consider a case where all variables are measured with error as on the right-hand side of \zcref{fig: extensions pcscm}. We assume that we know the error mechanisms for $W$ and $U$, but also for the relation between $X^*$ and $X$ and $Y^*$ and $Y$. If all three error mechanisms are complete and if the density over all variables is bounded, the causal function of \zcref{def: causal function and ACE}, which considers the unobserved treatment and outcome, is identifiable as we show in \zcref{app:generalisations}.

\begin{figure}[!tbp]
\centering
    \begin{minipage}{0.31\linewidth}
    \centering
    \begin{tikzpicture}[scale=0.45, >=stealth, node distance=2cm]

        \node (U) at (0,0) {$U$};
        \node (W) at (0,2) {$W$};
        \node (X) at (-2,-2) {$X$};
        \node (Y) at (2,-2) {$Y$};
        \node (O) at (0,-4) {$O$};

        \draw[-latex] (U) -- (W);
        \draw[-latex] (U) -- (X);
        \draw[-latex] (U) -- (Y);
        \draw[-latex] (X) -- (Y);
        \draw[-latex] (O) -- (X);
        \draw[-latex] (O) -- (Y);

        \draw[dashed] (U) circle (0.5cm);

    \end{tikzpicture}
    \end{minipage}
    \begin{minipage}{0.31\linewidth}
    \centering
    \begin{tikzpicture}[scale=0.45, >=stealth, node distance=2cm]

        \node (U) at (0,0) {$U$};
        \node (W) at (0,2) {$W$};
        \node (X) at (-2,-2) {$X$};
        \node (Y) at (2,-2) {$Y$};

        \draw  (U) -- (W) node[midway, right] {};
        \draw  (X) -- (U) node[midway, left] {};
        \draw[-latex] (U) -- (Y) node[midway, right] {};
        \draw[-latex] (X) -- (Y) node[midway, above] {};

        \draw[dashed] (U) circle (0.5cm);

    \end{tikzpicture}
    \end{minipage}
    \begin{minipage}{0.31\linewidth}
    \centering
    \begin{tikzpicture}[scale=0.45, >=stealth, node distance=2cm]

        \node (U) at (0,0) {$U$};
        \node (W) at (0,2) {$W$};
        \node (X) at (-2,-2) {$X$};
        \node (Y) at (2,-2) {$Y$};
        \node (X_s) at (-2,0) {$X^*$};
        \node (Y_s) at (2,0) {$Y^*$};

        \draw[-latex] (U) -- (W) node[midway, right] {};
        \draw[-latex] (U) -- (X) node[midway, left] {};
        \draw[-latex] (U) -- (Y) node[midway, right] {};
        \draw[-latex] (X) -- (Y) node[midway, above] {};
        \draw[-latex] (X) -- (Y) node[midway, above] {};
        \draw[-latex] (X) -- (X_s) node[midway, above] {};
        \draw[-latex] (Y) -- (Y_s) node[midway, above] {};
        \draw[dashed] (U) circle (0.5cm);
        \draw[dashed] (X) circle (0.5cm);
        \draw[dashed] (Y) circle (0.5cm);

    \end{tikzpicture}
    \end{minipage}
    
    \caption{Extensions of \zcref{setting: pcscm} with an additional observed confounder $O \in \mathcal{O} \subseteq\mathbb{R}^l$ (left), unobserved mediation (middle) and noisy treatment and outcome (right).}
    \label{fig: extensions pcscm}
\end{figure}
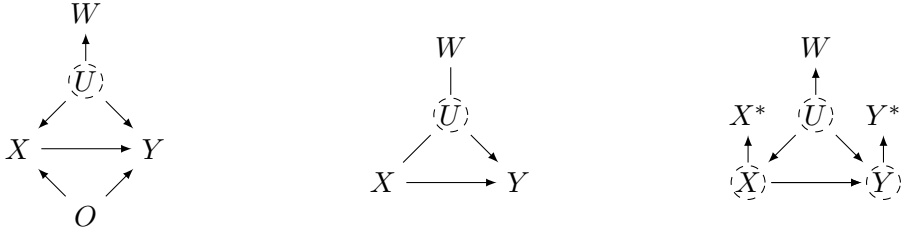

\paragraph{Known and Unknown Error Mechanisms}
In this paper, we assume that we know the error mechanism, that is the set of conditional densities of the proxy given the confounder. In the case of an additive noise model as in SPICE \zcref{assumption: SPICE}, this corresponds to the density of the noise $E$. We provide an example from differential privacy in which these assumptions are met. The United States census records demographic and housing characteristics of individuals in the United States and is recognized as the world’s longest-running statistical program \citep{Garfinkel2022}. To protect respondent privacy, the Census Bureau implemented a differential privacy mechanism by adding known, discrete Gaussian noise to the data \citep{Census2021}. If the error mechanism is generally unknown, but we have access to noise-free measurements of the confounder for a subset of the data, we can calibrate our causal function estimates for the entire population accordingly \citep{Pearl2010,Kuroki2014}. If we only know the distribution of the noise in the additive noise model, we can estimate its parameters using SPICE-Net-Approx. However, we lack theoretical guarantees for SPICE-Net-Approx, and establishing them is an avenue for future work. When the error mechanism is only partially observable, a natural direction for future research is to derive bounds on the causal function as in \citet{Zhang2024}. Moreover, our work may provide additional information about the assumptions in settings with multiple proxies, such as for \citet{Deaner2023}, where identifiability is obtained using three proxies without the knowledge of the error mechanism. 

Future work could extend the continuous confounder setting of SPICE to include arbitrary injective functions within the noise model. Moreover, there appears to be a trade-off between assumptions on the noise density and assumptions on the conditional densities of the confounder given treatment and outcome. To identify alternative complete error mechanisms, one may invoke other Wiener Tauberian theorems or related results 
such as the theorem for functions in $L_2$ \citep[Theorem I]{Wiener1932}. In our implementation, SPICE-Net uses the energy loss as in Engression \citep{Shen2023}, although any strictly proper scoring rule is applicable. Overall, we think that a key next step for proxy-based methods is practical evaluation, for example in medical applications and we hope that SPICE will motivate future work on this.
\acks{We thank Simon Bing, Simon Buchholz, Hubert Drazkowski, Francesco Freni, Juan Gamella, Luigi Gresele, Niels Richard Hansen, Harald Kugler, Kai Lion, Anton Rask Lundborg and Xinwei Shen.
\newline
This research was supported by a research grant (0069071) from Novo Nordisk Fonden.
}

\newpage
\appendix

\section{Ordinary Least Squares Adjustment}
\label{app:linear_Gaussian_setting}

Here, we compare adjustment for the unobserved confounder $U$ with adjustment for the proxy variable $W$. We consider a univariate linear Gaussian SCM and use ordinary least squares to estimate the causal function. In a linear Gaussian model, the functions $f_W, f_X, f_Y$ are linear such that 
\begin{equation}
    \label{SCM linear 1}
        M^{\text{lin}}:\left\{
        \begin{array}{rl}
        U & \leftarrow N_U \\
        W & \leftarrow \alpha_{UW} U + E  \\
        X & \leftarrow \alpha_{UX} U + N_X \\
        Y & \leftarrow \alpha_{UY} U + \alpha_{XY} X + N_Y
        \end{array}
        \right.
\end{equation}
with jointly independent mean-zero Gaussian variables $(N_U, E, N_X, N_Y)$ with variances $\sigma_{\cdot}^2 \in \mathbb{R}_{+}$ and path coefficients $\alpha_{\cdot \cdot} \in \mathbb{R}$. This means that the support of all random variables is $\mathcal{U} = \mathcal{W} = \mathcal{X} = \mathcal{Y} =\mathbb{R}$. The joint distribution under causal model $M^{\text{lin}}_0$ is
    \begin{align}
    \label{joint normal distribution 1}
        & \begin{bmatrix}
        U \\
        W \\
        X \\
        Y
        \end{bmatrix}
        \sim \mathcal{N}\left(\begin{bmatrix}
            0 \\ 0 \\ 0 \\ 0
        \end{bmatrix}, \mathbf{V}^{M^{\text{lin}}_0}\right), \quad \text{with} \\
        & \mathbf{V}^{M^{\text{lin}}_0} = \scalebox{0.55}{$
        \begin{bmatrix}
        \sigma_{N_{U}}^2 & \alpha_{UW}\sigma_{N_{U}}^2 & \alpha_{UX}\sigma_{N_{U}}^2 & (\alpha_{UY} + \alpha_{XY}\alpha_{UX})\sigma_{N_{U}}^2 \\
        \alpha_{UW}\sigma_{N_{U}}^2 & \alpha_{UW}^2\sigma_{N_{U}}^2 + \sigma_{E}^2 & \alpha_{UW}\alpha_{UX}\sigma_{N_{U}}^2 & (\alpha_{UY} + \alpha_{XY}\alpha_{UX})\alpha_{UW}\sigma_{N_{U}}^2 \\
        \alpha_{UX}\sigma_{N_{U}}^2 & \alpha_{UX}\alpha_{UW}\sigma_{N_{U}}^2 & \alpha_{UX}^2\sigma_{N_{U}}^2 + \sigma_{N_{X}}^2 & (\alpha_{UY} + \alpha_{XY}\alpha_{UX})\alpha_{UX}\sigma_{N_{U}}^2 + \alpha_{XY}\sigma_{N_{X}}^2 \\
        (\alpha_{UY} + \alpha_{XY}\alpha_{UX})\sigma_{N_{U}}^2 & (\alpha_{UY} + \alpha_{XY}\alpha_{UX})\alpha_{UW}\sigma_{N_{U}}^2 & (\alpha_{UY} + \alpha_{XY}\alpha_{UX})\alpha_{UX}\sigma_{N_{U}}^2 + \alpha_{XY}\sigma_{N_{X}}^2 & (\alpha_{UY} + \alpha_{XY}\alpha_{UX})^2\sigma_{N_{U}}^2 + \alpha_{XY}^2\sigma_{N_{X}}^2 + \sigma_{N_{Y}}^2
        \end{bmatrix}$.} 
    \end{align}
We omit the superscript $M^{\text{lin}}_0$ in the following for clarity. We have for all $x \in \mathcal{X}$ that
    \begin{align}
        \theta (x) &= \mathbb{E}^{do (X :=x)}\left[Y \right] \\
        &= \mathbb{E}\left[ \alpha_{UY} U + \alpha_{XY} x + N_Y \right]\\
        &= \alpha_{XY} x.
    \end{align}
This holds because the random noise variables have mean zero. We denote covariances by $\sigma$, for example the covariance between $U$ and $X$ is denoted by $\sigma_{U,X}$. The population solution of the ordinary least squares estimator for regressing $Y$ on $X$ and $U$ is given by
    \begin{align}
        \begin{bmatrix}
            \sigma_{X,X} & \sigma_{U,X} \\
            \sigma_{U,X} & \sigma_{U,U}
        \end{bmatrix}^{-1} 
        \begin{bmatrix}
            \sigma_{X,Y} \\
            \sigma_{U,Y}
        \end{bmatrix}
        &=
        \begin{bmatrix}
            \frac{\sigma_{X,Y}\sigma_{U,U} - \sigma_{U,Y} \sigma_{U,X}}{\sigma_{X,X} \sigma_{U,U} - (\sigma_{U,X})^2} \\
            \frac{\sigma_{U,Y} \sigma_{X,X} - \sigma_{X,Y} \sigma_{U,X}}{\sigma_{X,X} \sigma_{U,U} - (\sigma_{U,X})^2}
        \end{bmatrix}.
    \end{align}
We reformulate the regression coefficient for $X$ by inserting the corresponding covariances from \zcref{joint normal distribution 1} such that for all $x \in \mathcal{X}$ 
\begin{align}
        & \frac{\sigma_{X,Y}\sigma_{U,U} - \sigma_{U,Y} \sigma_{U,X}}{\sigma_{X,X} \sigma_{U,U} - (\sigma_{U,X})^2} \\
        &= \frac{((\alpha_{UY} + \alpha_{XY}\alpha_{UX})\alpha_{UX}\sigma_{N_{U}}^2 + \alpha_{XY}\sigma_{N_{X}}^2) \sigma_{N_{U}}^2  - (\alpha_{UY} + \alpha_{XY}\alpha_{UX})\sigma_{N_{U}}^2 \alpha_{UX}\sigma_{N_{U}}^2}{(\alpha_{UX}^2\sigma_{N_{U}}^2 + \sigma_{N_{X}}^2) \sigma_{N_{U}}^2 -  (\alpha_{UX}\sigma_{N_{U}}^2)^2} \\
         &= \frac{\alpha_{UY} \alpha_{UX} \sigma_{N_{U}}^2 + \alpha_{XY} \alpha_{UX}^2 \sigma_{N_{U}}^2 + \alpha_{XY} \sigma_{N_{X}}^2 - \alpha_{UY} \alpha_{UX} \sigma_{N_{U}}^2 - \alpha_{XY} \alpha_{UX}^2 \sigma_{N_{U}}^2}{\alpha_{UX}^2 \sigma_{N_{U}}^2 + \sigma_{N_{X}}^2 -  \alpha_{UX}^2 \sigma_{N_{U}}^2} \\
         &= \alpha_{XY} = \frac{\theta(x)}{x}. \label{eq: reg on X and U}
\end{align}
If we consider a regression of $Y$ on $W$ and $X$, the regression coefficient for $X$ is
\begin{align}
    &\frac{\sigma_{X,Y}\sigma_{W,W} - \sigma_{W,Y} \sigma_{W,X}}{\sigma_{X,X} \sigma_{W,W} - (\sigma_{W,X})^2} \\
    &=\frac{\sigma_{W,W}\big(\alpha_{XY}\sigma_{X,X}+\alpha_{UX}\alpha_{UY}\sigma_{U,U}^2\big)
     -\sigma_{W,X}\big(\alpha_{XY}\sigma_{W,X}+\alpha_{UW}\alpha_{UY}\sigma_{U,U}^2\big)}{\sigma_{X,X} \sigma_{W,W} - (\sigma_{W,X})^2} \\
     &=\alpha_{XY} + \frac{\sigma_{W,W}\alpha_{UX}\alpha_{UY}\sigma_{U,U}^2
     -\sigma_{W,X}\alpha_{UW}\alpha_{UY}\sigma_{U,U}^2}{\sigma_{X,X} \sigma_{W,W} - (\sigma_{W,X})^2} \\
     &=\alpha_{XY} +  \frac{ 
    \alpha_{UX} \alpha_{UY} \sigma_{N_U} ^2\sigma_{E}^2 }{\sigma_{X,X} \sigma_{W,W} - (\sigma_{W,X})^2} \\
    &=\alpha_{XY} +  \frac{ 
    \alpha_{UX} \alpha_{UY} \sigma_{N_U} ^2\sigma_{E}^2 }{\sigma_{X,X} \sigma_{W,W} - (\sigma_{W,X})^2} \\
    &=\alpha_{XY} +  \frac{ 
    \alpha_{UX} \alpha_{UY} \sigma_{N_U} ^2\sigma_{E}^2 }{\alpha_{UX}^2 \sigma_{N_U}^2 \sigma_{E}^2 +  \alpha_{UW}^2 \sigma_{N_X}^2 \sigma_{N_U}^2 + \sigma_{N_X}^2 \sigma_{E}^2} \\
    &=\alpha_{XY} +  \frac{ 
    \alpha_{UX} \alpha_{UY} }{\alpha_{UX}^2 + \frac{\alpha_{UW}^2 \sigma_{N_X}^2}{\sigma_{E}^2} + \frac{\sigma_{N_X}^2}{\sigma_{N_U}^2}}. \label{eq: reg on X and W}
\end{align}
The regression coefficient in \zcref{eq: reg on X and W} deviates from \zcref{eq: reg on X and U} if we have confounding, meaning that $\alpha_{UX} \neq 0$ and $\alpha_{UY} \neq 0$. The deviation reduces if the proxy is close to $U$ (meaning that $\sigma_{E}^2$ is small or $\alpha_{UW}$ is large) or if the direct effect of $X$ on $Y$ dominates the confounding effect due to $U$ (meaning that $\alpha_{XY}$ is large, $\sigma_{N_X}^2$ is large or $\sigma_{N_U}^2$ is small). These relationships are described holding all other parameters constant.

\section{A Complete Error Mechanism Implies Identifiability}
\label{app:proof_theorem}

\thmidentifiability*
\begin{proof}[of \zcref{thm: effect identifiable}]
\label{proof: theorem identifiability}
    For all $(w,x,y) \in \mathcal{W} \times \mathcal{X} \times \mathcal{Y}$, we have
    \begin{align}
            &p^{M_0}_{W \mid X,Y} (w \mid x,y) \\& =\int  p^{M_0}_{U,W \mid X,Y} (u,w \mid x,y) \, \mu(du) \\
            & =\int p^{M_0}_{W \mid U,X,Y} (w \mid u,x,y) p^{M_0}_{U \mid X,Y} (u \mid x, y) \, \mu(du) \\
            & =\int  p^{M_0}_{W \mid U} (w \mid u) p^{M_0}_{U \mid X,Y} (u \mid x, y) \, \mu(du)    \label{eq: proof thm identifiability 1}
    \end{align}    
    where the third equation follows from the independence $W \indep (X, Y) \mid U$. Furthermore, by the definition of $\mathcal{M}(M_0)$, it holds for all $M \in \mathcal{M} (M_0)$ that for all $(u, x, y) \in \mathcal{U} \times \mathcal{X} \times \mathcal{Y}$
    \begin{equation}
    \label{eq: proof thm identifiability 2}
    p^{M}_{W \mid U} (\cdot \mid u) = p^{M_0}_{W \mid U }(\cdot \mid u)  \text{ and } p^{M}_{ W \mid X,Y}(\cdot \mid x,y)  = p^{M_0}_{W \mid X,Y} (\cdot \mid x,y).
    \end{equation} 
    This implies for all $M \in \mathcal{M} (M_0)$ and for all $(w,x,y) \in \mathcal{W} \times \mathcal{X} \times \mathcal{Y}$ that
    \begin{align}
        \label{eq: proof thm identifiability 3.1}
        \int p^{M_0}_{W \mid U} (w \mid u) p^{M}_{U \mid X,Y} (u \mid x, y) \, \mu(du) = \int p^{M_0}_{W \mid U} (w \mid u) p^{M_0}_{U \mid X,Y} (u \mid x, y) \, \mu(du)
    \end{align}
    and hence
    \begin{align}
        \label{eq: proof thm identifiability 3.2}
        \int p^{M_0}_{W \mid U} (w \mid u) \left( p^{M}_{U \mid X,Y}(u  \mid x, y) - p^{M_0}_{U \mid X,Y}(u  \mid x, y)\right) \, \mu(du) = 0
    \end{align}
    where we apply \zcref{eq: proof thm identifiability 1} and \zcref{eq: proof thm identifiability 2}. Considering the difference in \zcref{eq: proof thm identifiability 3.2}, we have for all $M \in \mathcal{M} (M_0)$ and for all $(x, y) \in \mathcal{X} \times \mathcal{Y}$ that
    \begin{align}
    \label{eq: proof thm identifiability 4}
        &\int \left |p^{M}_{U \mid X,Y} (u \mid x, y) - p^{M_0}_{U \mid X,Y} (u \mid x, y) \right | \, \mu(du)\\
        & \leq \int \left |p^{M}_{U \mid X,Y} (u \mid x, y)\right | \, \mu(du) + \int \left |p^{M_0}_{U \mid X,Y} (u \mid x, y) \right | \, \mu(du)\\
        & = \int p^{M}_{U \mid X,Y} (u \mid x, y) \, \mu(du) + \int p^{M_0}_{U \mid X,Y} (u \mid x, y)  \, \mu(du) \\
        & = 2.
    \end{align}
    Hence for all $M \in \mathcal{M} (M_0)$ and for all $(x, y) \in \mathcal{X} \times \mathcal{Y}$ it holds that $p^{M}_{U \mid X,Y}(\cdot \mid x, y) - p^{M_0}_{U \mid X,Y}(\cdot \mid x, y)\in L_1(\mathcal{U})$. Now we consider the two cases \zcref{as: L_1 identifiable} and \zcref{as: L_infty identifiable} separately.
    
    \paragraph{Case \zcref{as: L_1 identifiable}} We have for all $M \in \mathcal{M} (M_0)$, using \zcref{def: complete error}-\zcref{complete: L_1} and \zcref{eq: proof thm identifiability 3.2}, and for all $(x, y) \in \mathcal{X} \times \mathcal{Y}$ and almost all $u\in\mathcal{U}$ that
    \begin{align}
        p^{M}_{U \mid X,Y}(u \mid x, y) - p^{M_0}_{U \mid X,Y}(u \mid x, y) = 0.
    \end{align}    

    \paragraph{Case \zcref{as: L_infty identifiable}} We have for all $(x, y) \in \mathcal{X} \times \mathcal{Y}$ that $p^{M}_{U \mid X,Y}(\cdot \mid x, y) - p^{M_0}_{U \mid X,Y}(\cdot \mid x, y)\in L_\infty(\mathcal{U})$. It follows that for all $M \in \mathcal{M}(M_0)$, using \zcref{def: complete error}-\zcref{complete: L_infty} and \zcref{eq: proof thm identifiability 3.2}, and for all $(x, y) \in \mathcal{X} \times \mathcal{Y}$ and almost all $u \in \mathcal{U}$ that 
    \begin{align}
        p^{M}_{U \mid X,Y}(u \mid x, y) - p^{M_0}_{U \mid X,Y}(u \mid x, y) = 0.
    \end{align}
    Thus in both cases \zcref{as: L_1 identifiable} and \zcref{as: L_infty identifiable}, we have for all $M \in \mathcal{M} (M_0)$, for all $(x,y) \in \mathcal{X} \times \mathcal{Y}$ and for almost all $u \in \mathcal{U}$ that 
    \begin{equation}
    \label{eq: proof thm identifiability 5}
        p^{M}_{U \mid X, Y}(u \mid x, y) = p^{M_0}_{U \mid X, Y}(u \mid x, y).
    \end{equation}
    The causal function from \zcref{def: causal function and ACE} for all $x \in \mathcal{X}$ is given by
    \begin{align}
        \theta^{M_0} (x) &=  \int y \int  p^{M_0}_{Y \mid U,X} \left(y \mid u, x \right) p^{M_0}_{U}(u) \, \mu(du, dy) \\
        &=  \int y \int   \frac{p^{M_0}_{U,X,Y}(u,x,y)}{p^{M_0}_{U,X} (u,x)}p^{M_0}_{U} (u)\, \mu(du, dy) \\
        &=    \int y \int   \frac{p^{M_0}_{U \mid X,Y} (u \mid x,y) p^{M_0}_{X,Y}(x,y)}{  \int p^{M_0}_{U \mid X,Y} (u \mid x,y) p^{M_0}_{X,Y}(x,y)\, \mu(dy) } 
        \\ & \left(\int \int p^{M_0}_{U \mid X,Y} (u \mid x,y) p^{M_0}_{X,Y} (x,y)\, \mu(dx, dy)\right) \, \mu(du, dy).    \label{eq: proof thm identifiability 6}
    \end{align}
    In \zcref{eq: proof thm identifiability 6}, the causal function only depends on observational features and on the conditional densities of the confounder given the treatment and the outcome. For all $M \in \mathcal{M} (M_0)$, the observational features coincide by \zcref{eq: obs features same}, and for the conditional densities of the confounder given the treatment and the outcome, we established equality in \zcref{eq: proof thm identifiability 5}. Thus we have for all $M \in \mathcal{M} (M_0)$ that
    \begin{equation}
    \label{eq: proof thm identifiability 7}
        \theta^M \equiv  \theta^{M_0}.
    \end{equation}
    Therefore, the causal function $\theta^{M_0}$ is identifiable according to \zcref{def: identifiablility}.
\end{proof}

\section{A Complete Error Mechanism for Discrete Confounders and Proxies}
\label{app:proof_corollary_complete_discrete}

\thmdiscrete*
\begin{proof}[of \zcref{thm: discrete confounder}]
\label{proof: discrete confounder}
    The premise in \zcref{def: complete error}-\zcref{complete: L_1} in the discrete setting of \zcref{thm: discrete confounder} is that for all $\delta \in L_1 (\mathcal{U})$, it holds that
    \begin{align}
       \left(\begin{array}{ccc}
        p^{M_0}_{W \mid U} (w_1' \mid u_1) & \cdots & p^{M_0}_{W \mid U} (w_1' \mid u_{k}) \\
        \vdots & \ddots & \vdots \\
        p^{M_0}_{W \mid U} (w_r' \mid u_1) & \cdots & p^{M_0}_{W \mid U} (w_r' \mid u_{k})
        \end{array}\right)
        \left(\begin{array}{c}
        \delta (u_1)  \\
        \vdots  \\
       \delta (u_{k})  
        \end{array}\right)
        =  
        \left(\begin{array}{c}
        0  \\
        \vdots  \\
       0  
        \end{array}\right).
    \end{align}
    Indeed, for all $\delta \in L_1 (\mathcal{U})$ that satisfy this premise it follows that
    \begin{align}
        \left(\begin{array}{c}
        \delta (u_1)  \\
        \vdots  \\
       \delta (u_{k})  
        \end{array}\right)
        =  
        \left(\begin{array}{c}
        0  \\
        \vdots  \\
       0  
        \end{array}\right)
        \quad\text{a.e.}
    \end{align}
    because the matrix from \zcref{eq: error matrix} has full column rank by Assumption \zcref{as: invertibilty}. The error mechanism is $L_1$-complete. We have that $L_1(\mathcal{U}) = L_\infty(\mathcal{U})$ because $\mathcal{U}$ is finite and hence the error mechanism is also $L_\infty$-complete according to \zcref{def: complete error}-\zcref{complete: L_infty}.
\end{proof}

\section{Previous Identifiability Results}
\label{app:proofs_old_results}
We state and prove the results on the identifiability of the causal function of \citet{Pearl2010,Kuroki2014}.
\begin{restatable}[results on the identifiability of the causal function]{proposition}{prooldresults}
\label{prop: all identifiability results}
Let $M_0$ be the true PC-SCM from \zcref{setting: pcscm}. Assume that one of the following holds.
\begin{enumerate}
    \item \label{prop: discrete confounding} We have a discrete confounder $U \in \mathcal{U} = \{u_1,...,u_d\}$ and a discrete proxy $W \in  \mathcal{W} = \{w_1,...,w_d\}$ with a finite number of categories $d \in \mathbb{N}$. The error mechanism matrix 
    \begin{equation}
    \label{eq: error mechanism matrix pearl}
        \left(\begin{array}{ccc}
        p^{M_0}_{W \mid U} (w_1 \mid u_1) & \cdots & p^{M_0}_{W \mid U} (w_1 \mid u_d) \\
        \vdots & \ddots & \vdots \\
        p^{M_0}_{W \mid U} (w_d \mid u_1) & \cdots & p^{M_0}_{W \mid U} (w_d \mid u_d)
        \end{array}\right)
    \end{equation}
    is known and invertible.
    \item \label{prop: linear Gaussian} We have a linear Gaussian structural causal model where $f_W, f_X, f_Y$ are linear functions such that 
    \begin{equation}
    \label{SCM linear}
        M^{\text{lin}}:\left\{
        \begin{array}{rl}
        U & \leftarrow N_U \\
        W & \leftarrow \alpha_{UW} U + E  \\
        X & \leftarrow \alpha_{UX} U + N_X \\
        Y & \leftarrow \alpha_{UY} U + \alpha_{XY} X + N_Y
        \end{array}
        \right.
    \end{equation}
    with jointly independent mean-zero Gaussian variables $(N_U, E, N_X, N_Y)$ with variances $\sigma_{{\cdot}}^2 \in \mathbb{R}_{+}$ and path coefficients $\alpha_{\cdot \cdot} \in \mathbb{R}$. This means that the support of all the random variables is $\mathcal{U} = \mathcal{W} = \mathcal{X} = \mathcal{Y} =\mathbb{R}$. Further, knowing the error mechanism corresponds to knowing the measurement error variance ${\sigma^2}_{E}^{M_0^{lin}}$.
    
    Then, the causal function $\theta^{M_0}$ is identifiable.
\end{enumerate}
\end{restatable}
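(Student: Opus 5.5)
The plan is to reduce both cases to results already established, namely \zcref{thm: effect identifiable} combined with \zcref{thm: discrete confounder} or \zcref{thm: continuous confounder}. We treat the two items of \zcref{prop: all identifiability results} separately.

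\textbf{Discrete case.} We apply \zcref{thm: discrete confounder} with $k=d$, $r=d$ and $\mathcal{W}_r=\mathcal{W}$. The matrix in \zcref{eq: error mechanism matrix pearl} is square and invertible, so it has full column rank. Hence the error mechanism is $L_1$-complete, and \zcref{thm: effect identifiable}-\zcref{as: L_1 identifiable} yields identifiability directly, with no boundedness condition needed. Knowing the matrix is exactly knowing the error mechanism in \zcref{setting: pcscm}.

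\textbf{Linear Gaussian case, completeness.} First we verify SPICE \zcref{assumption: SPICE}.
\begin{enumerate}[label=(\roman*)]
\item The dimension condition holds with $d=k=1$.
\item The model is additive, $W=\alpha_{UW}U+E$ with $U\indep E$. The matrix $A=\alpha_{UW}$ has full column rank exactly when $\alpha_{UW}\neq 0$, which I would assume explicitly. If $\alpha_{UW}=0$, then $W$ carries no information and identifiability can fail.
\item $E$ has a Gaussian density, since $\sigma_E^2>0$.
\item The Fourier transform of that density is $(2\pi)^{-1/2}\exp(-\sigma_E^2t^2/2)\neq 0$. Alternatively, this follows from \zcref{prop: sufficient conditions}-\zcref{as: infinite divisibility}.
\end{enumerate}
\zcref{thm: continuous confounder} then gives $L_\infty$-completeness.

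\textbf{Linear Gaussian case, boundedness.} To invoke \zcref{thm: effect identifiable}-\zcref{as: L_infty identifiable}, we must check that $p^M_{U\mid X,Y}(\cdot\mid x,y)\in L_\infty$ for every observationally equivalent $M$. This is the main obstacle, because $\mathcal{M}(M_0)$ a priori contains PC-SCMs that are not linear Gaussian. I would handle it in one of two ways.
\begin{enumerate}[label=(\alph*)]
\item Interpret the statement, as in \citet{Pearl2010,Kuroki2014}, as identifiability within the class of linear Gaussian PC-SCMs. For such $M$, $(U,X,Y)$ is jointly Gaussian, so $U\mid X,Y$ is Gaussian. Its conditional variance is positive because $N_U,N_X,N_Y$ have positive variances. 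A nondegenerate Gaussian density is bounded by $(2\pi\sigma^2)^{-1/2}$, so the condition holds.
\item As a cross-check, or as a route that avoids the function-class issue altogether, compute the effect explicitly within the linear Gaussian class. Using $\sigma_{U,U}=(\sigma_{W,W}-\sigma_E^2)/\alpha_{UW}^2$, the moments involving $U$ can be written in terms of observed covariances: $\alpha_{UW}\sigma_{U,X}=\sigma_{W,X}$ and $\alpha_{UW}\sigma_{U,Y}=\sigma_{W,Y}$. Substituting these into the OLS coefficient \zcref{eq: reg on X and U}, the unknown $\alpha_{UW}$ cancels. This gives $\alpha_{XY}$ as a function of $\Sigma_{W,X,Y}$ and $\sigma_E^2$, and hence $\theta(x)=\alpha_{XY}x$ is identified.
\end{enumerate}
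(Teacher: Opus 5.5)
Your proposal is correct, but its main line differs from the paper's appendix proof, which does not invoke the general theorems at all.

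\textbf{Discrete item.} The paper writes out only the binary case $(W,U)\in\{0,1\}^2$ and inverts the error matrix directly, $p_{U,X,Y}(\cdot,x,y)=(f^{M_0})^{-1}p_{W,X,Y}(\cdot,x,y)$. This displays the identified joint explicitly. Your reduction via \zcref{thm: discrete confounder} with $k=r=d$, followed by \zcref{thm: effect identifiable}-\zcref{as: L_1 identifiable}, rests on the same injectivity of the error matrix, applied to $p_{U\mid X,Y}$ instead of $p_{U,X,Y}$. It covers all $d$ at once, but it gives no explicit formula.

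\textbf{Linear Gaussian item.} Your route (b) is exactly the paper's proof and yields the closed form of \citet{Kuroki2014}. Route (a) is what the paper only sketches in the main text after \zcref{thm: continuous confounder}. Both of your caveats are genuine, and the paper glosses over them. First, its computation divides by $\alpha_{UW}$ and by $\sigma_{W,W}-\sigma_E^2=\alpha_{UW}^2\sigma_{U,U}$. For $\alpha_{UW}=0$ one can build two linear Gaussian models with identical observational features but different $\alpha_{XY}$, so $\alpha_{UW}\neq 0$ is really needed. Second, like your (a), the paper's computation only compares models within the linear Gaussian class.

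\textbf{Removing the obstacle in (a).} The boundedness issue you single out can be avoided entirely. For $\delta\in L_1(\mathbb{R})$, the function $h(w)=\int p^{M_0}_E(w-\alpha_{UW}u)\delta(u)\,du$ lies in $L_1(\mathbb{R})$, and $\hat h(t)=\sqrt{2\pi}\,\widehat{p^{M_0}_E}(t)\,\hat\delta(\alpha_{UW}t)$. Since $\widehat{p^{M_0}_E}$ has no zeros and $\alpha_{UW}\neq 0$, $h\equiv 0$ forces $\hat\delta\equiv 0$, hence $\delta=0$ a.e.\ by uniqueness of the Fourier transform on $L_1$. So the error mechanism is in fact $L_1$-complete. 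Then \zcref{thm: effect identifiable}-\zcref{as: L_1 identifiable} gives identifiability over all of $\mathcal{M}(M_0)$, with no boundedness condition and no restriction to linear Gaussian competitors. The price is losing the explicit estimating formula that (b) provides.
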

\begin{proof}[of \zcref{prop: all identifiability results}-\zcref{prop: discrete confounding}.]
   We prove the identifiability of $\theta^{M_0}$ for a discrete setting in which $(W, U) \in \{0,1\}^2$. \citet{Pearl2010,Kuroki2014} extend the setting to confounders and proxies with more than two categories. We want to show that if the matrix
    \begin{equation}
    f^{M_0} := 
        \begin{pmatrix}
        p^{M_0}_{W \mid U} (0 \mid 0) & p^{M_0}_{W \mid U} (0 \mid 1) \\
        p^{M_0}_{W \mid U} (1 \mid 0) & p^{M_0}_{W \mid U} (1 \mid 1)
        \end{pmatrix}
    \end{equation}
    is known and invertible, then $\theta^{M_0}$ is identifiable. 
    
    The interventional probability mass function for all $(x,y) \in \mathcal{X} \times \mathcal{Y}$ in a binary confounder model is given by
    \begin{equation}
    \label{eq: int distr binary}
        p^{M_0 ; do(X:=x)}_Y (y)= p^{M_0}_{Y \mid U,X} (y \mid 0,x) p^{M_0}_U(0) +  p^{M_0}_{Y \mid U,X} (y \mid 1,x) p^{M_0}_{U}(1).
    \end{equation}
    We follow the approach by \citet{Greenland2008,Pearl2010,Kuroki2014} known as matrix adjustment method and effect restoration. We have for all $(w,x,y) \in \{0,1\} \times \mathcal{X} \times \mathcal{Y}$ that
        \begin{align}
        p^{M_0}_{W,X,Y} (w,x,y) &=  p^{M_0}(0, w, x, y) + p^{M_0}(1, w, x, y)\\
        &=  p^{M_0}_{W \mid U,X,Y} (w \mid 0, x, y) p^{M_0}_{U,X,Y}(0, x, y) + p^{M_0}_{W \mid U,X,Y} (w \mid 1, x, y) p^{M_0}_{U,X,Y} (1, x, y) \\
        &=  p^{M_0}_{W \mid U} (w \mid 0) p^{M_0}_{U,X,Y}(0, x, y) + p^{M_0}_{W \mid U} (w \mid 1) p^{M_0}_{U,X,Y} (1, x, y)
        \end{align}
    where in the last equation we use \zcref{eq: independence statement}. In matrix notation, we have for all $(x,y) \in \mathcal{X} \times \mathcal{Y}$ that
    \begin{equation}
        \begin{pmatrix}
        p^{M_0}_{W,X,Y} (0, x, y) \\
        p^{M_0}_{W,X,Y} (1, x, y)
        \end{pmatrix}
        =
        f^{M_0}
        \begin{pmatrix}
        p^{M_0}_{U,X,Y} (0,x, y) \\
        p^{M_0}_{U,X,Y}(1,x, y)
        \end{pmatrix}.
    \end{equation}
    Since $f^{M_0}$ is invertible, we have for all $(x,y) \in \mathcal{X} \times \mathcal{Y}$ that
    \begin{equation}
    \label{eq: proof bianry conf}
        \begin{pmatrix}
        p^{M_0}_{U,X,Y} (0,x, y) \\
        p^{M_0}_{U,X,Y}(1,x, y)
        \end{pmatrix}
        =
        (f^{M_0})^{-1}
        \begin{pmatrix}
        p^{M_0}_{W,X,Y} (0, x, y) \\
        p^{M_0}_{W,X,Y} (1, x, y)
        \end{pmatrix}.
    \end{equation}
    For all observationally equivalent models $M \in \mathcal{M} (M_0)$, it holds that $\mathcal{F}^{M} = \mathcal{F}^{M_0}$. This implies for all $M \in \mathcal{M} (M_0)$ and for all $(x,y) \in \mathcal{X} \times \mathcal{Y}$ that
    \begin{equation}
        \begin{pmatrix}
        p^{M}_{U,X,Y}(0, x, y) \\
        p^{M}_{U,X,Y}(1, x, y)
        \end{pmatrix}
        =
        \begin{pmatrix}
        p^{M_0}_{U,X,Y}(0, x, y) \\
        p^{M_0}_{U,X,Y} (1, x, y)
        \end{pmatrix}.
    \end{equation}
    This implies, using \zcref{def: causal function and ACE}, for all $M \in \mathcal{M} (M_0)$ that
    \begin{equation}
        \theta^{M} \equiv \theta^{M_0}
    \end{equation}
    The causal function is identifiable according to \zcref{def: identifiablility}.
\end{proof}

\begin{proof}[of \zcref{prop: all identifiability results}-\zcref{prop: linear Gaussian}.]
    By \zcref{app:linear_Gaussian_setting} and \zcref{eq: reg on X and U}, we have for all $x \in \mathcal{X}$ that
    \begin{align}
        \frac{\sigma_{X,Y}\sigma_{U,U} - \sigma_{U,Y} \sigma_{U,X}}{\sigma_{X,X} \sigma_{U,U} - (\sigma_{U,X})^2} = \frac{\theta(x)}{x}.
    \end{align}
    Following \citet{Kuroki2014}, we reformulate the ordinary least squares solution, using that $\sigma_{Y,U} = \frac{\sigma_{Y,W}}{\alpha_{UW}}$, $\sigma_{U,X} = \frac{\sigma_{W,X}}{\alpha_{UW}}$ and $\sigma_{W,W} = \alpha_{UW}^2 \sigma_{U,U} + \sigma_{E}^2$, such that for all $x \in \mathcal{X}$
    \begin{align}
    \label{eq: final equation proof lin gaussian}
        \theta (x) &= \frac{\sigma_{X,Y}\sigma_{U,U} - \sigma_{U,Y} \sigma_{U,X}}{\sigma_{X,X} \sigma_{U,U} - (\sigma_{U,X})^2} x\\
        &= \frac{\sigma_{X,Y} - \frac{\sigma_{U,Y} \sigma_{U,X}}{\sigma_{U,U}} }{\sigma_{X,X}- \frac{(\sigma_{U,X})^2}{\sigma_{U,U}}} x\\
        &= \frac{\sigma_{X,Y} - \frac{\sigma_{W,Y} \sigma_{W,X}}{\alpha_{UW}^2 \sigma_{U,U}} }{\sigma_{X,X}- \frac{\sigma_{W,X}^2}{\alpha_{UW}^2\sigma_{U,U}}} x\\
        &= \frac{\sigma_{X,Y} - \frac{\sigma_{W,Y} \sigma_{W,X}}{\sigma_{W,W} - \sigma_{E}^2} }{\sigma_{X,X}- \frac{\sigma_{W,X}^2}{\sigma_{W,W} - \sigma_{E}^2}} x.
    \end{align}
    For the PC-SCM $M^{\text{lin}}_0$, we reintroduce notation such that for all $x \in \mathcal{X}$ 
    \begin{equation}
    \label{eq: causal function linear model}
        \theta^{M^{\text{lin}}_0} (x)= \frac{\sigma_{X,Y}^{M^{\text{lin}}_0} - \frac{\sigma_{W,Y}^{M^{\text{lin}}_0} \sigma_{W,X}^{M^{\text{lin}}_0}} {\sigma_{W,W}^{M^{\text{lin}}_0} - (\sigma_{E}^2)^{M^{\text{lin}}_0}} }{\sigma_{X,X}^{M^{\text{lin}}_0}- \frac{(\sigma_{W,X}^2)^{M^{\text{lin}}_0}}{\sigma_{W,W}^{M^{\text{lin}}_0} - (\sigma_{E}^2)^{M^{\text{lin}}_0}}} x.
    \end{equation}
    For all observationally equivalent models $M^{\text{lin}} \in \mathcal{M} (M_0^{lin})$, the quantities on the right-hand side of \zcref{eq: causal function linear model} are equivalent. It follows that for all $M^{\text{lin}} \in \mathcal{M} (M_0^{lin})$ 
    \begin{equation}
       \theta^{M^{\text{lin}}} = \theta^{M^{\text{lin}}_0}
    \end{equation}
    and $\theta^{M^{\text{lin}}_0}$ is identifiable according to \zcref{def: identifiablility}.
\end{proof}

\section{A Complete Error Mechanism for Continuous Confounders and Proxies}
\label{app:proof_corollary_complete_continuous}

We prove \zcref{thm: continuous confounder} and introduce the notion of a Fourier transform and the Wiener Tauberian theorem.

\begin{definition}[Fourier transform]
\label{def: fourier transform}
    The Fourier transform $\hat{f}$ of a function $f \in L_1 (\mathbb{R}^d)$ is defined for all $t\in\mathbb{R}^d$ by
    \begin{equation}
        \hat{f}(t) := \left( 2 \pi \right)^{- \frac{d}{2}} \int  f(x) \exp( - i t^{T} x) \, dx
    \end{equation}
    with imaginary unit $i$.
\end{definition}

Before we introduce the Wiener Tauberian theorem, we define translations of a function, the closure of a set and dense sets based on \citet{Steen1978}.

\begin{definition}[closure of a set, dense set and set of translations]
\label{def: closure and dense and translation}
    Let $\mathcal{C} \subseteq L_1$ be a set of functions. The closure of a set $\mathcal{C}$, which we denote by $\overline{\mathcal{C}}$, is the union of the set $\mathcal{C}$ and its limit points. %
    For all functions $f \in L_1 (\mathbb{R}^d)$, we define the set of translations of $f$ by
    \begin{equation}
    \label{eq: translation set}
        \mathcal{C}_{f} := \{ f(\cdot + \tau) \mid \tau \in \mathbb{R}^d \}.
    \end{equation}
\end{definition}

The Wiener Tauberian theorem now states that the span of $\mathcal{C}_{f}$ is dense in $L_1 (\mathbb{R}^d)$ if and only if the Fourier transform has no zeros. The one-dimensional case is due to \citet{Wiener1932}, with multi-dimensional counterparts in \citet{Rudin1991}. In our notation, Theorem~9.5 in \citet{Rudin1991} can be stated as follows.
\begin{lemma}[Wiener Tauberian theorem]
\label{le: wiener tauberian theorem}
    For all $f \in L_1(\mathbb{R}^d)$ it holds that 
    \begin{equation}
        \overline{\operatorname{span}{\mathcal{C}_f}} = L_1 (\mathbb{R}^d)
    \end{equation}
    if and only if the Fourier transform of $f$ has no zeros, that is, for all $\omega\in\mathbb{R}^d$ it holds that $\hat{f}(\omega) \neq 0$.
\end{lemma}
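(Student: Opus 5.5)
The plan is to prove the two directions separately. Necessity is a short continuity argument. Sufficiency goes through duality $L_1(\mathbb{R}^d)^* = L_\infty(\mathbb{R}^d)$ and a local inversion of $\hat f$.

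\emph{Necessity.} Suppose $\hat f(t_0)=0$ for some $t_0\in\mathbb{R}^d$. A translate satisfies $\widehat{f(\cdot+\tau)}(t_0)=e^{i t_0^T\tau}\hat f(t_0)=0$, so every element of $\operatorname{span}\mathcal{C}_f$ has a Fourier transform vanishing at $t_0$. The functional $g\mapsto \hat g(t_0)$ is continuous on $L_1(\mathbb{R}^d)$, since $|\hat g(t_0)|\le (2\pi)^{-d/2}\|g\|_1$. Hence every element of $\overline{\operatorname{span}\mathcal{C}_f}$ also has Fourier transform vanishing at $t_0$. A Gaussian density has Fourier transform nonzero at $t_0$, so it is not in the closure, and the closure is a proper subset of $L_1(\mathbb{R}^d)$.

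\emph{Sufficiency, reduction.} Assume $\hat f$ has no zeros. By Hahn--Banach it suffices to show the following: if $\varphi\in L_\infty(\mathbb{R}^d)$ annihilates every translate of $f$, then $\varphi=0$ a.e. The annihilation condition reads $\int f(x+\tau)\varphi(x)\,dx=0$ for all $\tau$, i.e.\ $(f*\check\varphi)\equiv 0$ with $\check\varphi(x)=\varphi(-x)$. This extends immediately to $(f*h)*\check\varphi = h*(f*\check\varphi)\equiv 0$ for every $h\in L_1(\mathbb{R}^d)$, by Fubini. It is therefore enough to show that the closed ideal $I:=\{f*h: h\in L_1\}^{-}$ equals $L_1(\mathbb{R}^d)$. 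Indeed, then $g*\check\varphi\equiv 0$ for all $g\in L_1$, and letting $g$ run through an approximate identity forces $\varphi=0$ a.e.

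\emph{Sufficiency, key step.} Let $\mathcal{D}$ be the set of $g\in L_1$ whose $\hat g$ has compact support. The set $\mathcal{D}$ is dense in $L_1$: convolve with a Fejér/de la Vallée-Poussin type kernel whose Fourier transform is compactly supported and equal to one on large balls. For fixed $g\in\mathcal{D}$ with $\operatorname{supp}\hat g\subseteq K$ compact, I would construct $h\in L_1$ with $\hat h=\hat g/\hat f$ on a neighbourhood of $K$, so that $g=f*h\in I$. This is the main obstacle: it is Wiener's local inversion lemma.

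To prove it, I would first cover $K$ by finitely many small balls $B(t_j,r)$. On each ball I would write $\hat f = \hat f(t_j)\bigl(1-\hat\rho_j\bigr)$, where $\rho_j$ is $f$ modulated by $e^{it_j^T\cdot}$, multiplied by a smooth bump $\hat k_j$ localized near $t_j$, and normalized. Choosing $r$ small makes $\|\rho_j\|_1<1$; this uses continuity of $\hat f$ and the standard estimate that $\|k_r*(u-\hat u(0)\text{-part})\|_1\to 0$ for localizing kernels $k_r$. On each ball the Neumann series $\sum_n \rho_j^{*n}$ then converges in $L_1$ and inverts $1-\hat\rho_j$ locally. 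Gluing the local inverses with a finite partition of unity made of $L_1$ functions whose Fourier transforms are compactly supported yields $h$. This gives $\mathcal{D}\subseteq I$, hence $I=L_1$ by density, which completes the argument. An alternative for this step is Gelfand theory: a proper closed ideal of the convolution algebra $L_1$ would lie in a maximal regular ideal, i.e.\ in the kernel of some character $g\mapsto\hat g(t)$, contradicting $\hat f(t)\neq0$. I would keep this only as a fallback, because it needs the extra regularity check that the ideal is contained in a regular maximal ideal, which again requires an approximate-unit argument of the same flavour.
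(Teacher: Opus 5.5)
The paper does not prove this lemma itself but imports it from Rudin's \emph{Functional Analysis}, and your outline is exactly the classical proof behind that citation, so it is correct: necessity via continuity of $g\mapsto\hat g(t_0)$ on $L_1$, and sufficiency via Hahn--Banach, the closed ideal generated by $f$, density of functions with compactly supported Fourier transform, and Wiener's local inversion by a Neumann series glued with a partition of unity. Two phrasings need tidying. First, you want $\hat h=\hat g/\hat f$ on all of $\mathbb{R}^d$, which you get from $h=\psi*g$ with $\hat\psi=1/\hat f$ on a neighbourhood of $K$ (up to the $(2\pi)^{d/2}$ factor in the convolution theorem). Second, $\rho_j$ should be the $L_1$ function with $\hat\rho_j=\hat k_j\,(1-\hat f/\hat f(t_j))$, a combination of $k_j$ and $k_j*f$ rather than a product of $f$ with a bump.
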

Now we prove \zcref{thm: continuous confounder} and restate SPICE \zcref{assumption: SPICE}.
\spiceassumption*
\thmcontinuous*
\begin{proof}[of \zcref{thm: continuous confounder}]
\label{proof: corollary identifiability fourier}
    First, we draw some conclusions that arise from the assumptions of \zcref{thm: continuous confounder}.
    By the assumption of a full support density in \zcref{setting: pcscm},
    for all $(u,w)\in\mathcal{U}\times\mathcal{W}$
    the conditional density
    $p^{M_0}_{W\mid U}(w \mid u)$
    is well defined.
    By the independence of $U$ and $E$, we have for all  $(u, w) \in \mathcal{U}\times\mathcal{W}$ that
    $p^{M_0}_{E\mid U}(w-Au\mid u) = p^{M_0}_E(w-Au)$.
    For all $u\in\mathcal{U}$,
    we have that
    the  determinant of the Jacobian of the mapping $w - Au \mapsto w$ is 1.
    Taken together, we have for all $(u,w)\in\mathcal{U}\times\mathcal{W}$ that
   \begin{align}
    \label{eq: error swap works}
        p^{M_0}_{W \mid U}(w \mid u ) = p^{M_0}_{E\mid U}(w-Au\mid u) = p^{M_0}_E(w-Au)
    \end{align}  
    by the change of variables formula (see for example Equation 2.89 in \citet{Murphy2012}).
    For all $w\in\mathbb{R}^d$,
    we define 
    \begin{align}
    \label{eq: definition fw}
        f_w^{M_0}: \mathbb{R}^k \to \mathbb{R},\ u \mapsto p^{M_0}_E(w - Au).
    \end{align}

    Moreover, by assumption SPICE \zcref{assumption: SPICE}-\zcref{as: positive density}, we have that $p_E^{M_0} \in L_1(\mathbb{R}^d)$. 
    The following proof can be structured into five steps. 
    \begin{enumerate}
        \item[\zcref{para:proof_step_1}] We show for almost all $w \in \mathbb{R}^d$ that $f_w^{M_0} \in L_1(\mathbb{R}^k)$.
        \item[\zcref{para: proof step 2}] We show for all $t \in \mathbb{R}^d$ that $p_E^{M_0}(\cdot)\exp(-it^T \cdot) \in L_1(\mathbb{R}^d)$. 
        \item[\zcref{para: proof step 3}] We show that there exists a set $\mathcal{W}' \subseteq \mathbb{R}^d$ with positive Lebesgue measure on $\mathbb{R}^d$ such that for all $w \in \mathcal{W}'$ the Fourier transform of $f_w^{M_0}$ has no zeros.
        \item[\zcref{para: proof step 4}] We apply the Wiener Tauberian theorem for functions in $L_1(\mathbb{R}^k)$.
        \item[\zcref{para: proof step 5}] We show that the error mechanism is complete by choosing a specific function $g \in L_1(\mathbb{R}^k) \cap L_\infty(\mathbb{R}^k)$.
    \end{enumerate}
    The first and third step are further divided into the cases where $k=d$ and $k<d$. 
    
    \para{E.1}{We show for almost all $w \in \mathbb{R}^d$ that $f_w^{M_0} \in L_1(\mathbb{R}^k)$.}\label{para:proof_step_1}

    \paragraph{We begin with case of $k<d$} For this, we extend the full column matrix $A \in \mathbb{R}^{d \times k}$ by adding columns, while keeping the full column rank, to obtain a square matrix, which we denote by $A' = [A, A''] \in \mathbb{R}^{d \times d}$ where $A'' \in \mathbb{R}^{d \times (d-k)}$. This is justified because the columns of $A$ are linearly independent, so it can be extended to a basis of $\mathbb{R}^d$, as stated in \citet[Theorem 2.33]{Axler2015}.
    We fix an arbitrary $A''$ such that the extended matrix $A'$ has full rank. Further, we define a variable $U' = [U, U'']^T \in \mathbb{R}^d$ for some variable $U'' \in \mathbb{R}^{d-k}$ and 
    for all $w\in\mathbb{R}^d$ we define
    the transformation $T_w: \mathbb{R}^d \rightarrow \mathbb{R}^d$ 
    via
    \begin{align}
    \label{eq: transformation map}
        T_w(u') = w - A'u'.
    \end{align} 
    Next, we outline that the assumptions of the change of variables formula by \citet[Theorem 2.47]{Folland1999} are met. The transformations $T_w$ have continuous first-order partial derivatives, the transformations are injective and the Jacobian, which is given by $-A'$, is invertible. Since $p_E^{M_0} \geq 0$, we have by \citet[Theorem 2.47]{Folland1999} for all $w \in \mathbb{R}^d$ that
    \begin{align}
        1 = \int p_E^{M_0} (u') \, du' = \int p_E^{M_0} (\underbrace{w -A'u'}_{=T_w(u')}) |\operatorname{det}(A')|\, du'.
    \end{align}
    Thus we have for all $w \in \mathbb{R}^d$ that
    \begin{align}
    \label{eq:pe_transformed_is_in_L1}
        0 < \int | p_E^{M_0} (w -A'u') | \, du' = \frac{1}{|\operatorname{det}(A')|} < \infty
    \end{align}
    and hence $p_E^{M_0}(w -A'\cdot) \in L_1(\mathbb{R}^d)$ because $A'$ is invertible. By Fubini's Theorem \citep[e.g.,][Theorem 8.8]{Rudin1987} we have for all $w \in \mathbb{R}^d$ that
    \begin{align}
        \int_{\mathbb{R}^d} p_E^{M_0} (w -A'u')\, du' 
        &= \int_{\mathbb{R}^{d-k}} \int_{\mathbb{R}^k} p_E^{M_0} (w -A''u'' -Au)\, du \, du'' \\&= \frac{1}{|\operatorname{det}(A')|}.\label{eq:pe_fubini}
    \end{align}
    \zcref{eq:pe_transformed_is_in_L1,eq:pe_fubini} imply that for all $w \in\mathbb{R}^d$ and for almost all $u'' \in  \mathbb{R}^{d-k}$ that
    \begin{align}
        \int_{\mathbb{R}^k} p_E^{M_0} (w -A''u'' -Au)\, du < \infty.
    \end{align}
    Thus, for almost all $w':= w - A''u''\in \mathbb{R}^d$ we have that 
    \begin{align}
        \int_{\mathbb{R}^k} |p_E^{M_0} (w' -Au) |\, du < \infty.
    \end{align}
    and thus
    \begin{align}
        f_{w'}^{M_0}  \in L_1 (\mathbb{R}^k).
    \end{align}

    \paragraph{We continue with the case where $d=k$}
    We have for all $w \in \mathbb{R}^d$ that
    \begin{align}
        \int | p_E^{M_0} (w -Au) | \, du = \frac{1}{|\operatorname{det}(A)|} < \infty
    \end{align}
    by \citet[Theorem 7.26]{Rudin1987}. Thus we have for all $w \in  \mathbb{R}^d$ that 
    \begin{align}
        f_w^{M_0}  \in L_1 (\mathbb{R}^k).
    \end{align}
    In summary, we showed for $k \leq d$ and for almost all $w \in  \mathbb{R}^d$ that 
    \begin{align}
    \label{eq: summary in l1}
        f_w^{M_0}  \in L_1 (\mathbb{R}^k).
    \end{align}

    \para{E.2}{We show for all $t\in\mathbb{R}^d$ that $p^{M_0}_{E}(\cdot) \exp( - i t^{T} \cdot) \in L_1(\mathbb{R}^d)$.}
    \label{para: proof step 2}

    For this, we fix an arbitrary $t \in \mathbb{R}^d$. By SPICE \zcref{assumption: SPICE}-\zcref{as: fourier transform}, we have that
    \begin{align}
        \left( 2 \pi \right)^{- \frac{d}{2}} \int p^{M_0}_{E}(e) \exp( - i t^{T} e) \, de \neq 0.
    \end{align}
    We have for all $x \in \mathbb{R}$ that
    \begin{align}
    \label{eq: exp_ix is not zero}
        |\exp(ix) | = |\cos(x) + i \sin(x) | = \sqrt{\cos^2(x) + \sin^2(x) } = 1
    \end{align}
     with imaginary number $i$ by \citet{Euler1748} and a Pythagorean trigonometric identity. Now we have that 
    \begin{align}
        \int | p^{M_0}_{E}(e) \exp( - i t^{T} e) |\, de = \int | p^{M_0}_{E}(e)  |\, de = 1.
    \end{align}
    Since $t$ was arbitrary, we showed for all $t\in\mathbb{R}^d$ that $p^{M_0}_{E}(\cdot) \exp( - i t^{T} \cdot) \in L_1(\mathbb{R}^d)$. 

    \para{E.3}{We show that there exists a set $\mathcal{W}' \subseteq \mathbb{R}^d$ with positive Lebesgue measure on $\mathbb{R}^d$ such that for all $w \in \mathcal{W}'$ the Fourier transform of $f^{M_0}_w$ has no zeros.}
    \label{para: proof step 3}
    
    \paragraph{We begin with the case of $k<d$} We use the same notation as above. In the following, we will fix an arbitrary $t \in \mathbb{R}^d$ to ease presentation. We use the change of variables formula by \citet[Theorem 2.47]{Folland1999}
    for the transformations in \zcref{eq: transformation map} with the function from E.2. Thus, we have for all $w \in \mathbb{R}^d$ that 
    \begin{align}
        & \left( 2 \pi \right)^{- \frac{d}{2}} \int p^{M_0}_{E}(u') \exp( - i t^{T} u') \, du'  \neq 0 \\
        \iff & \left( 2 \pi \right)^{- \frac{d}{2}} \int p^{M_0}_{E}(w-A'u') \exp( - i t^{T} (w-A'u')) |\operatorname{det}(A')|\, du' \neq 0\\ 
        \iff & \left( 2 \pi \right)^{- \frac{d}{2}} |\operatorname{det}(A')| \exp(-it^Tw) \int p^{M_0}_{E}(w-A'u') \exp( it^TA'u') \, du' \neq 0 \\
         \iff &  \int p^{M_0}_{E}(w-A'u') \exp( it^TA'u') \, du' \neq 0. \label{eq: fourier uprime 1}
    \end{align}
    We have for all $w \in \mathbb{R}^d$ that $p^{M_0}_{E}(w-A'\cdot) \exp( it^TA'\cdot) \in L_1(\mathbb{R}^d)$ by \zcref{eq: exp_ix is not zero} and \zcref{eq:pe_transformed_is_in_L1}. By Fubini's Theorem, as in \citet[Theorem 8.8]{Rudin1987}, we have for all $w \in \mathbb{R}^d$ that  
    \begin{align}
        & \int p^{M_0}_{E}(w-A'u') \exp( it^TA'u') \, du' \neq 0 \\
        \iff & \int_{\mathbb{R}^{d-k}} \int_{\mathbb{R}^{k}}  p^{M_0}_{E}(w-A''u'' - Au) \exp( it^TA''u'' + it^TAu) \, du \, du'' \neq 0
    \end{align}
    For the previous statement to hold, for all $w \in \mathbb{R}^d$ there exists a set $\mathcal{U}''_w \subseteq \mathbb{R}^{d-k}$ with positive Lebesgue measure on $\mathbb{R}^{d-k}$ such that for all $u'' \in \mathcal{U}''_w$ 
    it holds that
    \begin{align}
        & \int_{\mathbb{R}^{k}}  p^{M_0}_{E}(w-A''u'' - Au) \exp( it^TA''u'' + it^TAu) \, du \neq 0 \\
        \iff & \int_{\mathbb{R}^{k}}  p^{M_0}_{E}(w-A''u'' - Au) \exp( it^TAu) \, du \neq 0. \label{eq: u prime prime has to exist}
    \end{align}
    \paragraph{We continue by showing that there exists a set $\mathcal{W}' \subseteq \mathbb{R}^d$ with positive Lebesgue measure on $\mathbb{R}^d$} such that for all $w' \in \mathcal{W}'$
    \begin{align}
        \int_{\mathbb{R}^{k}}  p^{M_0}_{E}(w' - Au) \exp( it^TAu) \, du \neq 0.
    \end{align}
    We consider Lebesgue measures equipped with a Lebesgue $\sigma$-algebra
    and denote the Lebesgue measure on $\mathbb{R}^d$ by $\lambda^d$. Further, we define the following sets
    \begin{align}
        \mathcal{H} = \left\{(w, u'') \in \mathbb{R}^d \times \mathbb{R}^{d-k} \mid \int_{\mathbb{R}^{k}}  p^{M_0}_{E}(w-A''u'' - Au) \exp( it^TAu) \, du \neq 0 \right\} 
    \end{align}
    and for all $w \in \mathbb{R}^d$
    \begin{align}
        \mathcal{H}_w = \left\{ u'' \in \mathbb{R}^{d-k}\mid  (w, u'') \in \mathcal{H} \right\}.
    \end{align}
    For all $w \in \mathbb{R}^d$ the set $\mathcal{H}_w$ has positive Lebesgue measure on $\mathbb{R}^{d-k}$ denoted by
    \begin{align}
        \lambda^{d-k}(\mathcal{H}_w) > 0
    \end{align}
    since $\mathcal{U}''_w \subseteq \mathcal{H}_w$ with positive Lebesgue measure on $\mathbb{R}^{d-k}$. For the Lebesgue measure of $\mathcal{H}$ on $\mathbb{R}^d \times \mathbb{R}^{d-k}$, we have that
    \begin{align}
    \label{eq: pos lebesgue measure for H}
        \lambda^{d \times (d-k)}(\mathcal{H}) = \int_{\mathbb{R}^{d}} \lambda^{d-k}(\mathcal{H}_w) \, dw > 0
    \end{align}
    by \citet[Theorem 2.36]{Folland1999}. We define the affine and surjective map
    \begin{align}
        M: \mathbb{R}^d \times \mathbb{R}^{d-k} \to \mathbb{R}^d, (w, u'') \mapsto w-A''u''.
    \end{align}
    \paragraph{It remains to show that $\lambda^{d}_L(M(\mathcal{H})) > 0$} We extend $M$ to a bijective and affine map 
    \begin{align}
        M': \mathbb{R}^d \times \mathbb{R}^{d-k} \to \mathbb{R}^d \times \mathbb{R}^{d-k}, (w,u'') \mapsto (w-A''u'', u'')
    \end{align}
    such that $M = \text{proj} \circ M'$ where $\text{proj}$ is a suitable projection onto $\mathbb{R}^d$. The Jacobian of $M'$ is given by a triangular matrix with ones on its diagonal and a determinant equal to~$1$. Since $M'$ is injective, we have by \citet[Theorem 2.47]{Folland1999} and for $\mathcal{H}':= M'(\mathcal{H}) \subseteq \mathbb{R}^d \times \mathbb{R}^{d-k}$ that 
    \begin{align}
        \lambda^{d \times (d-k)} (\mathcal{H}') = \int_\mathcal{H}1 \, dw\, du''= \lambda^{d \times (d-k)}(\mathcal{H}) >0
    \end{align}
    where the inequality follows from \zcref{eq: pos lebesgue measure for H}. If $w' \not \in M(\mathcal{H}) \subseteq \mathbb{R}^d$, then it holds for all $u'' \in \mathbb{R}^{d-k}$ that $(w',u'') \not \in \mathcal{H}'$ and the indicator function
    \begin{align}
    \label{eq: indicator zero}
        1_{\mathcal{H'}}(w',u'') = 0.
    \end{align}
    By \citet[Theorem 8.8]{Rudin1987}, it holds that 
    \begin{align}
        0 < \lambda^{d \times (d-k)} (\mathcal{H}') &= \int_{\mathbb{R}^d \times \mathbb{R}^{d-k}} 1_{\mathcal{H'}}(w',u'') \, dw' \, du'' \\
        &= \int_{\mathbb{R}^{d}} \int_{\mathbb{R}^{d-k}} 1_{\mathcal{H'}}(w',u'') \, du'' \, dw' \\
        &= \int_{M(\mathcal{H})} \int_{\mathbb{R}^{d-k}} 1_{\mathcal{H'}}(w',u'') \, du'' \, dw'
    \end{align}
    where the last equality follows from \zcref{eq: indicator zero}. From this, we conclude that 
    \begin{align}
        \lambda^{d}_L(M(\mathcal{H})) > 0.
    \end{align}

    By definition, we have for all $w' \in M(\mathcal{H})$ that
    \begin{align}
    \label{eq:nonzerot}
        \int_{\mathbb{R}^{k}}  p^{M_0}_{E}(w' - Au) \exp(it^TAu) \, du \neq 0.
    \end{align}
    Since $t$ was arbitrary, \zcref{eq:nonzerot} holds for all $t \in \mathbb{R}^d$.
    Since for all $t \in \mathbb{R}^d$, there exists a $s := -A^Tt \in \mathbb{R}^k$. Together, for $\mathcal{W}':=M(\mathcal{H})$
    we have that for all $(s,w') \in \mathbb{R}^k \times \mathcal{W}'$ that
    \begin{align}
    \label{eq: fourier dsmallerf nonzero}
        (2 \pi)^{- \frac{f}{2}} \int p^{M_0}_{E} (w- Au) \exp( -i s^Tu)\, du \neq 0.
    \end{align}

    \paragraph{Now we consider the case of $d=k$} By \zcref{eq: fourier uprime 1} and the change of variables formula in \citet[Theorem 2.47]{Folland1999}, we have for all $(t,w) \in \mathbb{R}^d \times \mathbb{R}^d$ that 
    \begin{align}
        \left( 2 \pi \right)^{- \frac{f}{2}}  \int p^{M_0}_{E}(w-Au) \exp( it^TAu) \, du \neq 0.
    \end{align}
    For all $t \in \mathbb{R}^d$, there exists a $s := -A^Tt \in \mathbb{R}^k$. Thus, we conclude that for all $(s, w) \in \mathbb{R}^k \times \mathbb{R}^d$ that
    \begin{align}
    \label{eq: fourier dequalf nonzero}
        (2 \pi)^{- \frac{f}{2}} \int p^{M_0}_{E} (w- Au) \exp( -i s^Tu)\, du \neq 0.
    \end{align}
    \paragraph{We summarise now} For $k \leq d$, we showed that there exists a set $\mathcal{W}' \subseteq \mathbb{R}^d$ with positive Lebesgue measure on $\mathbb{R}^d$ such that for all $w \in \mathcal{W}'$ the Fourier transform of $f_w ^{M_0}$ has no zeros by \zcref{eq: fourier dsmallerf nonzero,eq: fourier dequalf nonzero} and
    \begin{align}
    \label{eq: fix_w in l1}
        f_w^{M_0}  \in L_1 (\mathbb{R}^k)
    \end{align}
    by \zcref{eq: summary in l1}.
    
    \para{E.4}{We exploit the Wiener Tauberian theorem for functions in $L_1(\mathbb{R}^k)$.} 
    \label{para: proof step 4}
    
    We fix a $w \in \mathcal{W}'$ for which the Fourier transform of $f_w ^{M_0}$ has no zeros and $f_w^{M_0}  \in L_1 (\mathbb{R}^k)$. We apply the Wiener Tauberian theorem, which leads to
    \begin{equation}
        \overline{\operatorname{span}{\mathcal{C}_{f_w ^{M_0}}}} = L_1 (\mathbb{R}^k).
    \end{equation}
    Now we consider all bounded functions $g\in L_1(\mathbb{R}^k) \cap L_\infty(\mathbb{R}^k)$ and approximate them using the result from the Tauberian theorem. For all functions $g\in L_1(\mathbb{R}^k) \cap L_\infty(\mathbb{R}^k)$, there exists a sequence of functions $(g_n)_{n \in \mathbb{N}} \subseteq \operatorname{span}{\mathcal{C}_{f_w ^{M_0}}} \subseteq L_1(\mathbb{R}^k)$ by \citet[Definition in 7.4]{Royden2010}, for which it holds that
    \begin{equation}
    \label{eq: l1 convergence of gn}
       \lim_{n \to \infty} \int | g_n(u) - g(u) | \, du= 0.
    \end{equation}
    The sequence $(g_n)_{n \in \mathbb{N}}$ converges to $g$ in $L_1(\mathbb{R}^k)$. Let $\delta \in  L_\infty (\mathbb{R}^k)$ be a bounded function. Then there exists a $Q \in \mathbb{R}$ such that
    \begin{align}
       \lim_{n \to \infty} \int |  g_n(u) \delta(u) -  g(u)\delta(u) | \, du &= \lim_{n \to \infty} \int | \delta(u)| |g_n(u) -  g(u) | \, du \\
       & \leq Q \lim_{n \to \infty} \int|g_n(u) -  g(u) | \, du \\
       &=0.
    \end{align}
    by \zcref{eq: l1 convergence of gn}. Therefore we have for all $g\in L_1(\mathbb{R}^k)\cap L_\infty(\mathbb{R}^k)$ that
    \begin{equation}
    \label{eq: l1 convergence of gn delta}
       \lim_{n \to \infty} \int | g_n(u) \delta(u) - g(u)\delta(u) | \, du= 0
    \end{equation}
    and hence the sequence $(g_n \delta)_{n \in \mathbb{N}}$ converges to $g\delta$ in $L_1(\mathbb{R}^k)$. 
    Now we show for all $g \in L_1(\mathbb{R}^k) \cap L_\infty(\mathbb{R}^k)$ and for all $n \in \mathbb{N}$ that $g_n\delta - g\delta \in L_1(\mathbb{R}^k)$ by
    \begin{align}
        \int | g_n(u) \delta(u) -  g(u)\delta(u)  | \, du &\leq Q  \int | g_n(u)  -  g(u)   | \, du\\
        &\leq Q  \int | g_n(u)| \, du + Q  \int | g(u)| \, du \\
        &< \infty
    \end{align}
    since $g \in L_1(\mathbb{R}^k)$ and $g_n \in L_1(\mathbb{R}^k)$.
    By \citet[Theorem 1.33]{Rudin1987}, we have for all $g \in L_1(\mathbb{R}^k) \cap L_\infty(\mathbb{R}^k)$ and for all $n \in \mathbb{N}$ that
    \begin{align}
    \label{eq: start for univariate proof}
        \left|\int g(u) \delta(u)du  -  g_{n}(u) \delta(u) du \right| \leq \int \left| g_n(u) \delta(u) -  g(u)\delta(u)  \right| \, du.
    \end{align}
    This leads for all $g \in L_1(\mathbb{R}^k) \cap L_\infty(\mathbb{R}^k)$ to
    \begin{align}
        \lim_{n \rightarrow \infty}\left|\int g(u) \delta(u) \, du  -  g_{n}(u) \delta(u) \, du \right| \leq \lim_{n \rightarrow \infty}\int \left| g_n(u) \delta(u) -  g(u)\delta(u)  \right| \, du =0
    \end{align}
    by \zcref{eq: l1 convergence of gn delta}. Then it holds for all $g \in L_1(\mathbb{R}^k) \cap L_\infty(\mathbb{R}^k)$ that
    \begin{align}
        \lim_{n \rightarrow \infty}\left|\int g(u) \delta(u) \, du  -  \int g_{n}(u) \delta(u) \, du \right|  =0
    \end{align}
    and finally for all $g \in L_1(\mathbb{R}^k) \cap L_\infty(\mathbb{R}^k)$ that
    \begin{align}
    \label{eq: begin proof}
       \int g(u) \delta(u)\, du = \lim_{n \to \infty} \int g_{n}(u) \delta(u) \, du.  
    \end{align}    
    Further for all $n \in \mathbb{N}$, there exist $(\alpha_{n}^i, \tau_{n}^i) \in \mathbb{R} \times \mathbb{R}^k$ for $i \in \{1, \dots, m_{n}\}$ with $m_{n} \in \mathbb{N}$, such that for $w_n^i:= w-A\tau_n^i \in \mathbb{R}^d$ it holds that 
     \begin{align}
        \lim_{n \to \infty} \int g_{n}(u) \delta (u) \, du 
        &= \lim_{n \to \infty}\int \sum_{i=1}^{m_n}\alpha_n^i \, f_w^{M_0}(u + \tau_n^i) \delta(u) \, du \\ 
        &= \lim_{n \to \infty} \sum_{i=1}^{m_n} \alpha_n^i \int p^{M_0}_{E} ( w - A(u + \tau_n^i)) \delta(u) \, du \\
        &= \lim_{n \to \infty} \sum_{i=1}^{m_{n}}\alpha_{n}^i\int  p^{M_0}_{E} (w_n^i - Au) \delta (u) \, du \\
        &= \lim_{n \to \infty} \sum_{i=1}^{m_{n}}\alpha_{n}^i\int  p^{M_0}_{W \mid U} (w_n^i \mid u) \delta (u) \, du \label{eq: last equation that is zero}
    \end{align}
    This follows from \zcref{eq: error swap works} and \zcref{eq: definition fw}. The function $\delta \in  L_\infty (\mathbb{R}^k)$ was arbitrary and hence we have shown that for all $g \in L_1(\mathbb{R}^k) \cap L_\infty(\mathbb{R}^k)$ and all $\delta \in  L_\infty (\mathbb{R}^k)$ there exists a sequence $(\alpha_n^i, w_n^i) \in \mathbb{R} \times \mathbb{R}^d$ such that
    \begin{align}
    \label{eq: final result tauberian}
       \int g(u) \delta(u) \, du = \lim_{n \to \infty} \sum_{i=1}^{m_{n}}\alpha_{n}^i\int  p^{M_0}_{W \mid U} (w_n^i \mid u) \delta (u) \, du.
    \end{align}   

    \para{E.5}{We pick a specific $g \in L_1(\mathbb{R}^k) \cap L_\infty(\mathbb{R}^k)$ in order to show that the error mechanism is complete according to \zcref{def: complete error}.} 
    \label{para: proof step 5}
    
    Pick a $\delta' \in L_\infty(\mathcal{U})$ which satisfies the premise in \zcref{eq: definition complete error l_infty}, that is,
    \begin{align}
        \int_{\mathcal{U}} p^{M_0}_{W \mid U} (\cdot \mid u) \delta'(u) \, du \equiv 0.
    \end{align}
    We extend $\delta' \in L_\infty(\mathcal{U})$ to $\delta \in L_\infty(\mathbb{R}^k)$ for all $u \in \mathbb{R}^k$ by 
    \begin{align}
        \delta(u) = \begin{cases}
        \delta'(u) & u\in\mathcal{U} \\
        0 & \text{else.}
        \end{cases}
    \end{align}
    We have that
    \begin{align}
    \label{eq: deltaprime}
        \int_{\mathbb{R}^k} p^{M_0}_{W \mid U} (\cdot \mid u) \delta(u) \, du \equiv 0.
    \end{align}
    Combining \zcref{eq: deltaprime} and \zcref{eq: final result tauberian}, it holds for all $g \in L_1 (\mathbb{R}^k) \cap L_\infty(\mathbb{R}^k)$ that
    \begin{equation}
    \label{eq: integral product zero}
        \int_{\mathbb{R}^k} g(u) \delta (u) \, du = 0.
    \end{equation}
    Let $\Sigma$ denote the Lebesgue $\sigma$-algebra on $\mathbb{R}^k$. For all $S \in \Sigma$, we define the function $g_S \in L_1 (\mathbb{R}^k)$ for all $u \in \mathbb{R}^k$ as
    \begin{align}
    g_S(u) =
    \begin{cases} 
    N(u), & \text{if } u \in S \\ 
    0, & \text{if } u \notin S.
    \end{cases}
    \end{align}
    where we define $N(\cdot) := \frac{1}{(2\pi)^{f/2}} \exp\left(-\frac{1}{2} \|\cdot\|_2^2 \right)$. The function $N(\cdot)$ is a multivariate standard Gaussian density and therefore for all $S \in \Sigma$ it holds that $g_S \in L_\infty (\mathbb{R}^k)$ and $g_S \in L_1 ( \mathbb{R}^k )$.
    Using \zcref{eq: integral product zero}, we have for all $S \in \Sigma$ that
    \begin{align}
    \label{eq: integral zero finite measure}
        \int_{\mathbb{R}^k} g_S(u)\delta(u) \,du =\int_S N(u)\delta (u) \,du =0.
    \end{align} 
    We can apply the fundamental property of integrals as in \citet[Theorem 1.39 b)]{Rudin1987} and hence
    \begin{equation}
        N\delta \equiv 0
    \end{equation}
    almost everywhere on $\mathbb{R}^k$. Since for all $u \in \mathbb{R}^k$, we have that $N(u)\neq 0$, we conclude that
    \begin{equation}
        \delta \equiv 0
    \end{equation}
    almost everywhere on $\mathbb{R}^k$. This implies that 
    \begin{equation}
        \delta' \equiv 0
    \end{equation}
    almost everywhere on $\mathcal{U}$.
    
    The function $\delta' \in L_\infty(\mathcal{U})$ was arbitrary and hence we have shown that
    for all $\delta'\in L_\infty(\mathcal{U})$
    that
    $\int_{\mathcal{U}} p^{M_0}_{W \mid U} (\cdot \mid u) \delta'(u) \, du \equiv 0$
    implies
    $\delta' \equiv 0$ a.e.
    The error mechanism is $L_\infty$-complete.

\end{proof}

\section{Examples of a Non-vanishing Fourier Transform}
\label{app:univariate_fourier_nonzero}

We show that the Fourier transform of a univariate Gaussian distribution has no zeros.
\begin{proof}[Fourier transform of a univariate Gaussian distribution has no zeros]
\label{proof: fourier gaussian non-zero}
    Let X be a univariate Gaussian random variable with mean $m \in \mathbb{R}$ and variance $\sigma^2 >0$. With $a = \frac{1}{2 \sigma^2}$ and $b = \frac{m}{\sigma^2}$, we have that 
    \begin{align}
           &\int_{-\infty}^{\infty} \frac{1}{\sqrt{2 \pi \sigma^2}} \exp\left(-\frac{(x - m)^2}{2 \sigma^2}\right) \, dx = 1\\
           \iff &\int_{-\infty}^{\infty} \frac{1}{\sqrt{2 \pi \sigma^2}} \exp\left( -\frac{x^2}{2\sigma^2} - \frac{m^2}{2\sigma^2} + \frac{x m}{\sigma^2}\right) \, dx = 1\\
            \iff & \frac{1}{\sqrt{\frac{\pi}{a}}} \int_{-\infty}^{\infty} \exp\left(-a x^2  - \frac{b^2}{4a} + bx \right) \, dx = 1\\
           \iff & \int_{-\infty}^{\infty} \exp\left(-a x^2 + bx \right) \, dx = \sqrt{\frac{\pi}{a}} \exp \left( \frac{b^2}{4a}\right).\label{eq: proof fourier gaussian non-zeron 1}
    \end{align}
    The Fourier transform in \zcref{def: fourier transform} is given for all $t \in \mathbb{R}$ by
    \begin{align}
     \label{eq: proof fourier gaussian non-zeron 2}
            \hat{f}(t) &= \left( 2 \pi \right)^{- \frac{1}{2}} \int_{-\infty}^{\infty} \frac{1}{\sqrt{2 \pi \sigma^2}} \exp\left(-\frac{(x - m)^2}{2 \sigma^2}\right) \exp( - i t x) \, dx \\
            &=  \frac{1}{2 \pi \sigma} \int_{-\infty}^{\infty} \exp\left(-\frac{(x - m)^2}{2 \sigma^2}- i t x \right) \, dx \\
            &= \frac{1}{2 \pi \sigma} \exp \left( - \frac{m^2}{2\sigma^2}\right) \int_{-\infty}^{\infty} \exp\left( - \frac{x^2}{2 \sigma^2} + \left( \frac{m}{\sigma^2} - i t \right) x \right) \, dx \\ 
            &= \frac{1}{2 \pi \sigma} \exp \left( - \frac{m^2}{2\sigma^2}\right) \sqrt{\pi 2 \sigma^2} \exp \left( \frac{\left( \frac{m}{\sigma^2} - i t \right)^2}{\frac{4}{2 \sigma^2}} \right)
    \end{align}
    where we use \zcref{eq: proof fourier gaussian non-zeron 1} in the last equation with $a = \frac{1}{2 \sigma^2}$ and $b = \frac{m}{\sigma^2} - i t$. We simplify further such that for all $t \in \mathbb{R}$ we have
    \begin{align}
    \label{eq: proof fourier gaussian non-zeron 3}
            &\frac{1}{2 \pi \sigma} \exp \left( - \frac{m^2}{2\sigma^2}\right) \sqrt{\pi 2 \sigma^2} \exp \left( \frac{\left( \frac{m}{\sigma^2} - i t \right)^2}{\frac{4}{2 \sigma^2}} \right) \\ &= \frac{1}{\sqrt{2 \pi}} \exp \left(  - \frac{m^2}{2\sigma^2} + \frac{ \left( \frac{m^2}{(\sigma^2)^2} + i^2 t^2 - \frac{2 m i t}{\sigma^2} \right)\sigma^2}{2} \right) \\
            &= \frac{1}{\sqrt{2 \pi}} \exp \left(  - \frac{m^2}{2\sigma^2} + \frac{m^2}{2\sigma^2} - \frac{t^2 \sigma^2}{2}  - m i t \right) \\
            &= \frac{1}{\sqrt{2 \pi}} \exp \left(- \frac{t^2 \sigma^2}{2}\right) \exp \left( - m i t \right).
    \end{align}
    This leads for all $t \in \mathbb{R}$ to
    \begin{align}
    \label{eq: proof fourier gaussian non-zeron 4}
        \left| \hat{f}(t) \right| &= \frac{1}{\sqrt{2 \pi}} \exp \left(- \frac{t^2 \sigma^2}{2}\right)  \left| \exp \left( - m i t \right) \right| \\
        &= \frac{1}{\sqrt{2 \pi}} \exp \left(- \frac{t^2 \sigma^2}{2}\right)  \left| \cos(- m t) + i \sin(- m t) \right| \\ 
        &= \frac{1}{\sqrt{2 \pi}} \exp \left(- \frac{t^2 \sigma^2}{2}\right)  \sqrt{ \cos^2(- m t) + \sin^2(- m t) } \\ 
        &= \frac{1}{\sqrt{2 \pi}} \exp \left(- \frac{t^2 \sigma^2}{2}\right)  > 0
    \end{align}
    where we use the formula by \citet{Euler1748}. This shows that the Fourier transform of a univariate Gaussian distribution has no zeros.
\end{proof}

\section{Sufficient Conditions for a Non-vanishing Fourier Transform}
\label{app:characteristic_divisible}

We prove the result of \zcref{prop: sufficient conditions} and introduce the notions of the characteristic function, infinite divisibility and convolutionally infinitely divisible kernels. The Fourier transform of a probability density function with sign reversal is called the characteristic function of a random variable. We define it with respect to the Lebesgue measure as follows.
\begin{definition}[characteristic function]
\label{def: characteristic function}
    The characteristic function of a $d$-dimensional random variable $X$ is defined for all $t \in \mathbb{R}^d$ by
    \begin{equation}
        \varphi(t) = \int  p_X (x)\exp( i t^{T} x) \, dx
    \end{equation}
    with imaginary unit $i$.
\end{definition}

Now we turn to infinitely divisible characteristic functions. A probability distribution $P$ is infinitely divisible if it can be expressed as the probability distribution of the sum of an arbitrary number of independent and identically distributed random variables. The corresponding characteristic function is called infinitely divisible. We denote the convolution of two distributions by $*$ as defined in \citet[Definition 2.4]{Sato1999}. We define infinitely divisible distributions and characteristic functions based on \citet[Definition 7.1]{Sato1999} as follows.
\begin{definition}[infinitely divisible characteristic function]
\label{def: infinitely divisible characteristic function}
    A probability distribution $P$ on $\mathbb{R}^d$ is infinitely divisible if for all $n \in \mathbb{N}$, there exists a probability distribution $P_n$ on $\mathbb{R}^d$ such that
    \begin{equation}
        P = \underbrace{P_n * P_n * \dots * P_n}_{n \text{ times}}.
    \end{equation}
    The characteristic function of $P$ is called infinitely divisible.
\end{definition}
We define convolutionally infinitely divisible kernels based on \citet{Nishiyama2016}. 
\begin{definition}[convolutionally infinitely divisible kernel]
\label{def: cid kernel}
    A kernel function $k: \mathbb{R}^d \times \mathbb{R}^d \rightarrow \mathbb{R}$ is a convolutionally infinitely divisible kernel if for all $(u,w) \in \mathbb{R}^d \times \mathbb{R}^d$
\begin{align}
    k(u,w) = \kappa(w-u)
\end{align}
where $\kappa$ is a continuous and bounded density of a symmetric and infinitely divisible distribution.    
\end{definition}
Next, we prove \zcref{prop: sufficient conditions}.
\prodivisible*
\begin{proof}[of \zcref{prop: sufficient conditions}]
\label{proof: proposition chara and infinite}
    \begin{enumerate}
        \item \label{prop: non-zero characteristic implies non-zero fourier} Starting from Assumption \zcref{as: non-zero characteristic function} of \zcref{prop: sufficient conditions}, we have for all $t \in \mathbb{R}^d$ that
    \begin{align}
        & \int p^{M_0}_{E} (e) \exp( i t^{T} e) \, de \neq 0 \\
         \iff &(2\pi)^{-\frac{d}{2}}\int p^{M_0}_{E} (e) \exp( i t^{T} e) \, de \neq 0 \label{eq: proof character 1} \\
         \iff &(2\pi)^{-\frac{d}{2}}\int p^{M_0}_{E} (e) \exp( -i t^{T} e) \, de \neq 0 \label{eq: proof character 2}
    \end{align}
    because $(2\pi)^{-\frac{d}{2}}$ is non-zero and we can change the sign of $t$ in \zcref{eq: proof character 1}. The statement in \zcref{eq: proof character 2} is SPICE \zcref{assumption: SPICE}-\zcref{as: fourier transform}.
    \item \label{prop: inf div implies non-zero char} By \citet[Lemma 7.5]{Sato1999}, we have that Assumption \zcref{as: infinite divisibility} of \zcref{prop: sufficient conditions} implies for all $t \in \mathbb{R}^d$ that 
    \begin{align}
        \int \exp(it^Te) \, P^{M_0}_E (de) \neq 0.
    \end{align}
    In combination with SPICE \zcref{assumption: SPICE}-\zcref{as: positive density}, we have for all $t \in \mathbb{R}^d$ that 
    \begin{align}
        \int \exp(it^Te) p^{M_0}_E(e) \, de \neq 0.
    \end{align}
    By the proof in \zcref{prop: non-zero characteristic implies non-zero fourier} above, this is equivalent to SPICE \zcref{assumption: SPICE}-\zcref{as: fourier transform}.
    \item We define for all $(u,w) \in \mathbb{R}^d \times \mathbb{R}^d$ a kernel function $k^{M_0}: \mathbb{R}^d \times \mathbb{R}^d \rightarrow \mathbb{R}$ by
    \begin{align}
        k^{M_0}(u,w) := p^{M_0}_{E}(w-u).
    \end{align}
    We have that $p^{M_0}_E$ is a continuous and bounded density of a symmetric and infinitely divisible distribution $P^{M_0}_E$ by \zcref{def: cid kernel}. This implicitly assumes that $d=k$, $A = 1_{d \times d}$, Assumption \zcref{as: infinite divisibility} of \zcref{prop: sufficient conditions} holds and that $P^{M_0}_E$ is symmetric and bounded such that $p^{M_0}_E \in L_\infty (\mathbb{R}^d)$. Since $P^{M_0}_E$ is an infinitely divisible distribution, we have by the proof in \zcref{prop: inf div implies non-zero char} above that this implies that SPICE \zcref{assumption: SPICE}-\zcref{as: fourier transform} holds.
    \end{enumerate}
\end{proof}

\section{When the Error Mechanism is Not Complete}
\label{app_non injective function}

We define non-injective functions in \zcref{def: non-injective function} and prove that the corresponding error mechanism is not complete.
\begin{definition}[non-injective function of positive and finite measure]
    \label{def: non-injective function}
    A measurable function $g: \mathcal{U} \rightarrow \mathbb{R}^d$ is non-injective of positive and finite measure if there exist two disjoint sets $\mathcal{U}_1 \subseteq \mathcal{U}$ and $\mathcal{U}_2 \subseteq \mathcal{U}$ with $\infty >\mu(\mathcal{U}_1) = \mu(\mathcal{U}_2) > 0$ such that for all $u_1 \in \mathcal{U}_1$ there exists a $u_2 \in \mathcal{U}_2$ with $u_1 \neq u_2$ for which
    \begin{align}
        g(u_1) = g(u_2).
    \end{align}
\end{definition}

\prononinjective*
\begin{proof}[of \zcref{prop: non injective error mechanism}]
    We have for all $(u,w) \in \mathbb{R}^k \times \mathbb{R}^d$ that
    \begin{equation}
        p^{M_0}_{W \mid U}(w \mid u) =p^{M_0}_{E \mid U}(w - g(u) \mid u)  =p^{M_0}_{E}(w - g(u))
    \end{equation}
    by the change of variables formula (see for example Equation 2.89 in \citet{Murphy2012}) and the independence of $E$ and $U$. We define $\delta \in L_1(\mathcal{U}) \cap L_\infty(\mathcal{U})$ by
    \begin{align}
    \delta(u) = 
    \begin{cases}
    1 & \text{if } u \in \mathcal{U}_1 \\
    -1 & \text{if } u \in \mathcal{U}_2 \\
    0 & \text{otherwise} 
    \end{cases}
    \end{align}
    where the sets $\mathcal{U}_1$ and $\mathcal{U}_2$ are some disjoint positive and finite measure sets and, per construction, witness the non-injectivity of $g$ as in \zcref{def: non-injective function}. By construction of $\mathcal{U}_1$ and $\mathcal{U}_2$, we have for all $w \in \mathbb{R}^d$ that
    \begin{align}
         \int p^{M_0}_{E}(w - g(u_1))  \, \mu(du_1) =  \int p^{M_0}_{E}(w - g(u_2))  \, \mu(du_2).
    \end{align}
    For all $w \in \mathbb{R}^d$, it holds that 
    \begin{align}
        \int p^{M_0}_{E}(w - g(u))\delta (u) \, \mu(du) = 0.
    \end{align}
    The premises in \zcref{def: complete error}-\zcref{complete: L_1} and \zcref{def: complete error}-\zcref{complete: L_infty} are satisfied, but since $\delta \not\equiv 0$, the error mechanism is not $L_1$-complete and not $L_\infty$-complete.
\end{proof}

\section{Guarantees for SPICE-Net}
\label{app:confounding equivalence}

\proSPICE*
\begin{proof}[of \zcref{prop: population guarantee SPICE-Net}]
     Since the model for $W \mid (X,Y)$ is correctly specified we have by \citet[Proposition 1]{Shen2023}, which ultimately relies on the energy score being a strictly proper scoring rule \citep[Theorem 2]{Szekely2002}, 
     for almost all $(x,y) \in \mathcal{X} \times \mathcal{Y}$ that
    \begin{align}
        g_\gamma^*(x,y,\mathcal{E},E) \sim  P^{M_0}_{W \mid (X,Y) = (x,y)}.
    \end{align}
    By SPICE \zcref{assumption: SPICE}-\zcref{as: additive error}, there exists a $\widetilde{A} \in \mathbb{R}^{d \times k}$ such that for almost all $(x,y) \in \mathcal{X} \times \mathcal{Y}$
    \begin{align}
        g_\gamma^*(x,y,\mathcal{E},E) - E \sim  P^{M_0}_{\widetilde{A}U \mid (X,Y) = (x,y)}.
    \end{align}
\end{proof}
\proadjustingau*
\begin{proof}[of \zcref{prop: equivalence U and AU adjusting}]
    We define the random variable $Z:=b \circ U$. The causal function is given for all $x \in \mathcal{X}$ by 
    \begin{align}
        \theta^{M_0}(x) = \mathbb{E}^{M_0; do(X:=x)}\left[Y \right] = \mathbb{E}^{M_0} \left[ \mathbb{E}^{M_0} \left[ Y \mid U, X=x\right]\right].
    \end{align}
   We consider conditional expectations with respect to the $\sigma$-algebra generated by a random variable. For this, we define the $\sigma$-algebra generated by $U$ as
    \begin{align}
    \label{eq: definition sigma alg U}
        \Sigma_U := \{ U^{-1} (B) \mid B \in \mathcal{B}(\mathbb{R}^k)\}
    \end{align}
    where $\mathcal{B}(\mathbb{R}^k)$ are the Borel sets of $\mathbb{R}^k$. The $\sigma$-algebra generated by $Z$ is given by
    \begin{align}
    \label{eq: definition sigma alg Z}
        \Sigma_Z := \{ Z^{-1} (B) \mid B \in \mathcal{B}(\mathbb{R}^d)\}.
    \end{align}
    Next, we show that
    \begin{align}
        \Sigma_U = \Sigma_Z.
    \end{align}
    For a Borel measurable and injective function $b: \mathbb{R}^k \rightarrow \mathbb{R}^d$ and for all $B \in \mathcal{B}(\mathbb{R}^d)$ we have that
    \begin{align}
        Z^{-1} (B) = (b \circ U)^{-1}(B) = (U^{-1} \circ  b^{-1}) (B) .
    \end{align}
    For all $B \in \mathcal{B}(\mathbb{R}^d)$, we have that $b^{-1}(B) \in \mathcal{B}(\mathbb{R}^k)$ since $b$ is Borel measurable and $(U^{-1} \circ  b^{-1}) (B) \in \Sigma_U$ by definition in \zcref{eq: definition sigma alg U}. We have shown that 
    \begin{align}
    \label{eq: sigmaz subset sigmau}
        \Sigma_Z \subseteq \Sigma_U.
    \end{align}
    Because $b$ is an injective function, there exists a (bijective) left-inverse $b': b(\mathbb{R}^k)\rightarrow \mathbb{R}^k$ such that
    \begin{align}
        U = b' \circ Z.
    \end{align}
    The function $b'$ has an inverse which we denote by $b'': \mathbb{R}^k\rightarrow b(\mathbb{R}^k)$. By \citet[Theorem 15.2]{Kechris1995} we have for all $B \in \mathcal{B}(\mathbb{R}^k)$ that 
    \begin{align}
        b(B) \in \mathcal{B}(\mathbb{R}^d)
    \end{align}
    and 
    \begin{align}
    \label{eq: bprime image}
        b''(B) \in \mathcal{B}(b(\mathbb{R}^k)) \subseteq \mathcal{B}(\mathbb{R}^d).
    \end{align}
    Then, we have for all $B \in \mathcal{B}(\mathbb{R}^k)$ that 
    \begin{align}
        U^{-1} (B) = (b' \circ Z)^{-1} (B) = (Z^{-1} \circ b'')(B).
    \end{align}
    For all $B \in \mathcal{B}(\mathbb{R}^k)$, we have that $b''(B) \in \mathcal{B}(\mathbb{R}^d)$ by \zcref{eq: bprime image} and $(Z^{-1} \circ b'' )(B) \in \Sigma_Z$ by definition in \zcref{eq: definition sigma alg Z}. We have shown that 
    \begin{align}
    \label{eq: sigmau subset sigmaz}
        \Sigma_U \subseteq \Sigma_Z.
    \end{align}
    \zcref{eq: sigmaz subset sigmau} and \zcref{eq: sigmau subset sigmaz} lead to
    \begin{align}
        \Sigma_U = \Sigma_Z.
    \end{align}
    Thus, we have for all $x \in \mathcal{X}$ that 
    \begin{align}
        \mathbb{E}^{M_0} \left[ Y \mid U, X=x\right]  =  \mathbb{E}^{M_0} \left[ Y \mid Z, X=x\right]  \quad\text{a.s.}
    \end{align}
    and for all $x \in \mathcal{X}$ that 
    \begin{align}
        \theta^{M_0} (x) = \mathbb{E}^{M_0} \left[ \mathbb{E}^{M_0} \left[ Y \mid U, X=x\right]\right] = \mathbb{E}^{M_0} \left[ \mathbb{E}^{M_0} \left[ Y \mid Z, X=x\right]\right] \quad\text{a.s.}
    \end{align}
\end{proof}

\section{Simulation}\label{app:simulation}

SPICE-Net is implemented in PyTorch \citep{Paszke2017}. In our simulations, we keep the neural network architecture of the first step of SPICE-Net and SPICE-Net-Approx fixed across all data sets A-D of \zcref{tab: data models}. We only modify the distribution of $E$, which we sample from in the last layer of \zcref{fig: spicenet}, to match the data generating process in \zcref{tab: data models} while taking into account the standardisation with values given in \zcref{tab:spicenet_param}. 

\begin{table}[!htbp]
  \centering
  \resizebox{\linewidth}{!}{%
      \begin{tabular}{lllll}
        \toprule
        & A. Gaussian & B. Binary & C. Exponential & D. High-dim. \\
        \midrule
         & $\text{loc} = \left[0\right]$ & $\text{loc} = \left[0\right]$ & $\text{rate} = \left[1.414\right]$ & $\text{loc} = \left[0,\, 0,\, 0\right]$ \\
          & $\text{scale} = \left[0.707\right]$ &  $\text{scale} = \left[0.707\right]$ && $\text{covariance}\_\text{matrix} = \begin{bmatrix} 0.5 & -0.095 & 0.149 \\ -0.095 & 0.6 & 0.151 \\ 0.149 & 0.151 & 0.643 \end{bmatrix}$ \\
        \bottomrule
      \end{tabular}}
      \caption{Parameter values used for SPICE-Net across the four data sets A-D of \zcref{tab: data models}. The values correspond to parameters of a normal (A and B), exponential (C) and multivariate normal distribution (D) as implemented in PyTorch \citep{Paszke2017}.}
      \label{tab:spicenet_param}
\end{table}

For SPICE-Net-Approx, we treat the parameters of $E$ as unknown and estimate them from the data. Both methods are trained on $10$ mini-batches over $4000$ epochs with Adam optimization, an adaptive learning rate with an initial value of $10^{-3}$ and ReLU activation functions. The neural networks have five hidden layers with $10, 15, 25, 15$ and $10$ nodes, respectively, of which five are independent standard Gaussian noise nodes. The output dimension of the second-to-last layer, representing the unobserved confounder up to a linear transformation, is chosen to have the same dimension as the proxy variable. We use \citet{He2015} initialisation for the weights of the neural networks and estimate them by minimising the energy loss with a power parameter of one based on two samples as \href{https://github.com/xwshen51/engression/tree/main}{implemented} by \citet{Shen2023}. For the initial values of the parameters of the distribution of $E$ for SPICE-Net-Approx, we use the fact that the variance of $E$ does not exceed the variance of $W$. Since we standardise the data, the variance of $W$ is one and the initial values are given in \zcref{tab:spicenet_init}.

\begin{table}[!htbp]
  \centering
      \begin{tabular}{lllll}
        \toprule
        & A. Gaussian & B. Binary & C. Exponential & D. High-dim. \\
        \midrule
         & $\text{loc} = \left[1\right]$ & $\text{loc} = \left[1\right]$ & $\text{rate} = \left[1\right]$ & $\text{loc} = \left[1,\, 2,\, 3\right]$ \\
          & $\text{scale} = \left[1\right]$ &  $\text{scale} = \left[1\right]$ && $\text{covariance}\_\text{matrix} = \begin{bmatrix} 1 & 0.7 & 0.4 \\ 0.7 & 1 & -0.2 \\ 0.4 & -0.2 & 1 \end{bmatrix}$ \\
        \bottomrule
      \end{tabular}
      \caption{Initial values used for SPICE-Net-Approx across the four data sets A-D of \zcref{tab: data models}. The values correspond to parameters of a normal (A and B), exponential (C) and multivariate normal distribution (D) as implemented in PyTorch \citep{Paszke2017}.}
      \label{tab:spicenet_init}
\end{table}

We use the nonparametric, regression adjustment estimator in \zcref{eq: regression adj} to estimate the causal function in the second step of SPICE-Net and SPICE-Net-Approx as well as for Adj.-W and Adj.-U, following the approach by \citet{Zhang2025}. We keep the parameters of the neural network fixed across the four models and across all data sets. We apply a neural network, using the scikit-learn function 'MLPregressor' by \citet{scikit-learn}, with one hidden layer with $100$ neurons, ReLu activations, an adaptive learning rate and an initial learning rate of $0.01$. 

Moreover, we consider only using the treatment to predict the outcome, which we denote by No Adj. We use a neural network to estimate the causal function that has the same structure as the neural network in the second step of SPICE-Net but only uses the treatment and the outcome.

The test errors of the causal function estimators are presented in \zcref{plot:results_2000,plot:results_5000}, with numerical values in \zcref{tab:results_n2000,tab:results_n5000} for a training size of $2000$ and $5000$, respectively. In all plots, we apply the colour scheme by \citet{Petroff2024}.
 
\begin{figure}[!htbp]
  \centering
    \begin{minipage}{\linewidth}
      \centering
      \begin{minipage}[b]{0.48\linewidth}
        \centering
        \includegraphics[width=\linewidth]{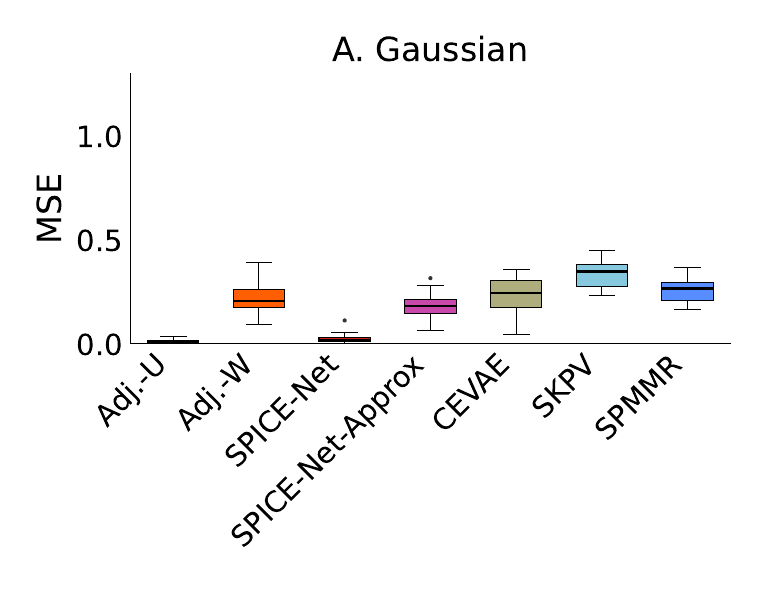}
        
      \end{minipage}
      \hfill
      \begin{minipage}[b]{0.48\linewidth}
        \centering
        \includegraphics[width=\linewidth]{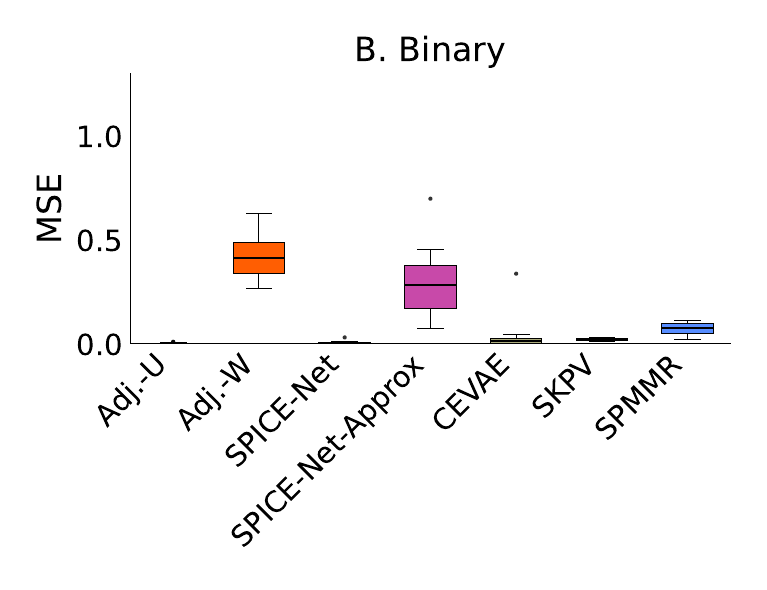}
        
      \end{minipage}

      \vspace{1ex}

      \begin{minipage}[b]{0.48\linewidth}
        \centering
        \includegraphics[width=\linewidth]{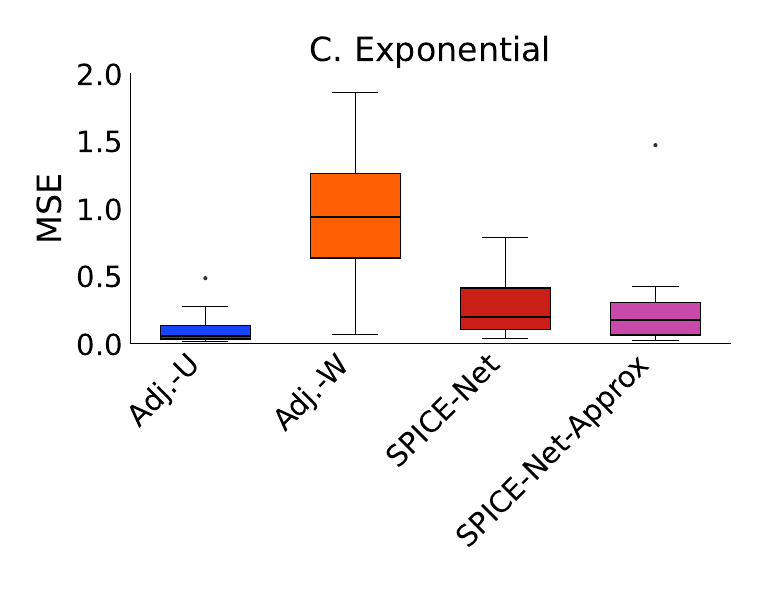}
        
      \end{minipage}
      \hfill
      \begin{minipage}[b]{0.48\linewidth}
        \centering
        \includegraphics[width=\linewidth]{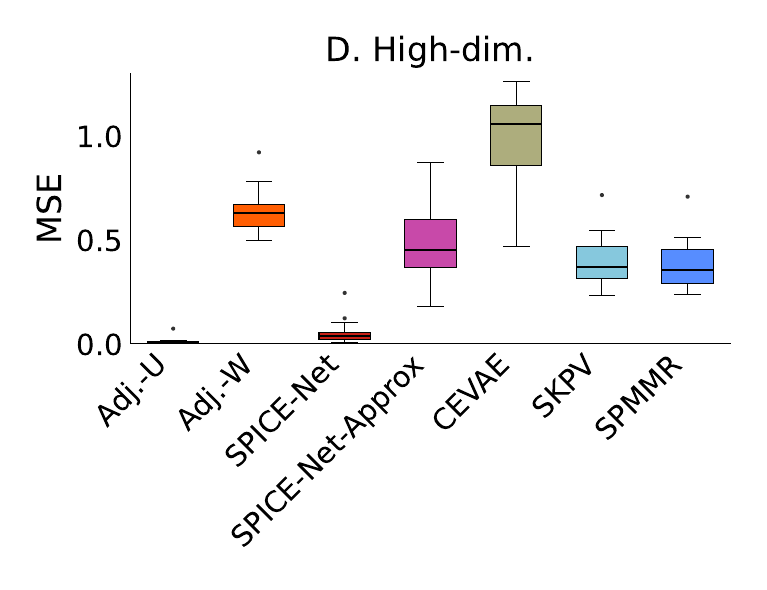}
        
      \end{minipage}

      \caption{Mean squared error (MSE) of causal function estimators described in \zcref{subse: simulations,app:simulation} for $5000$ training samples from data sets A-D of \zcref{tab: data models} on a test set of size $500$. For data set C, the median (standard deviation) of the MSE for CEVAE, SKPV and SPMMR are $3.22~(3.36)$, $12.90~(15.86)$ and $58.53~(18.39)$, respectively, and there is a different scale for the y-axis. Numerical values are given in \zcref{tab:results_n5000}. Our proposed method SPICE-Net achieves a MSE close to the ground-truth method Adj.-U.}
      \label{plot:results_5000}
    \end{minipage}
\end{figure}

\begin{table}[!htbp]
  \centering
  \resizebox{\linewidth}{!}{%
      \begin{tabular}{lllll}
        \toprule
        & A. Gaussian & B. Binary & C. Exponential & D. High-dim. \\
        \midrule
        \vspace{0.5em}
        Adj.-U & 0.013 (0.015) & 0.002 (0.006) & 0.132 (0.092) & 0.017 (0.014) \\
        Adj.-W & 0.207 (0.063) & 0.460 (0.089) & 0.862 (0.354) & 0.625 (0.119) \\
        SPICE-Net & \textbf{0.024 (0.016)} & \textbf{0.004 (0.051)} & \textbf{0.276 (0.288)} & \textbf{0.058 (0.057)} \\
        SPICE-Net-Approx & 0.192 (0.062) & 0.256 (0.138) & 0.292 (0.449) & 0.529 (0.112) \\
        CEVAE & 0.268 (0.082) & 0.006 (0.018) & 2.871 (4.702) & 1.053 (0.231) \\
        SKPV & 0.324 (0.060) & 0.018 (0.005) & 16.161 (14.950) & 0.258 (0.126) \\
        SPMMR & 0.265 (0.059) & 0.078 (0.031) & 58.183 (18.205) & 0.353 (0.113) \\
        No Adj. & 0.486 (0.089) & 1.225 (0.194) & 10.857 (5.635) & 1.350 (0.157) \\
        \bottomrule
      \end{tabular}}
      \caption{Median (standard deviation) of the MSE of causal function estimators for training sample size $2000$ from data sets A--D of \zcref{tab: data models} on a test set of size $500$. The lowest MSE in each column, excluding the ground-truth method Adj.-U, is shown in bold. A description of the methods is given in \zcref{subse: simulations,app:simulation} and a graphical representation in \zcref{plot:results_2000}. No Adj.~is a neural network estimator that does not adjust for additional variables.}
      \label{tab:results_n2000}
\end{table}

\begin{table}[!htbp]
  \centering
  \resizebox{\linewidth}{!}{%
      \begin{tabular}{lllll}
        \toprule
        & A. Gaussian & B. Binary & C. Exponential & D. High-dim. \\
        \midrule
        \vspace{0.5em}
        Adj.-U & 0.011 (0.009) & 0.001 (0.004) & 0.058 (0.111) & 0.006 (0.015) \\
        Adj.-W & 0.205 (0.074) & 0.415 (0.098) & 0.943 (0.456) & 0.631 (0.099) \\
        SPICE-Net & \textbf{0.017 (0.024)} & \textbf{0.004 (0.007)} & 0.199 (0.247) & \textbf{0.037 (0.054)} \\
        SPICE-Net-Approx & 0.184 (0.061) & 0.285 (0.148) & \textbf{0.177 (0.308)} & 0.453 (0.181) \\
        CEVAE & 0.247 (0.095) & 0.013 (0.072) & 3.216 (3.363) & 1.060 (0.229) \\
        SKPV & 0.349 (0.057) & 0.021 (0.006) & 12.903 (15.859) & 0.372 (0.112) \\
        SPMMR & 0.267 (0.057) & 0.076 (0.027) & 58.529 (18.390) & 0.358 (0.113) \\
        No Adj. & 0.517 (0.133) & 1.313 (0.216) & 12.955 (4.741) & 1.338 (0.211) \\
        \bottomrule
      \end{tabular}}
      \caption{Median (standard deviation) of the MSE of causal function estimators for training sample size $5000$ from data sets A--D of \zcref{tab: data models} on a test set of size $500$. The lowest MSE in each column, excluding the ground-truth method Adj.-U, is shown in bold. A description of the methods is given in \zcref{subse: simulations,app:simulation} and a graphical representation in \zcref{plot:results_5000}. No Adj.~is a neural network estimator that does not adjust for additional variables.}
      \label{tab:results_n5000}
\end{table}

\section{Real-world data} \label{app:chamber}

We run real-world experiments with the Light Tunnel Mk2 from the Causal Chamber\textsuperscript{\tiny®} \citep{Gamella2025} in \zcref{fig: causal chambers setup}. The parameters of the light tunnel for the experiments are given in \zcref{tab: chambers exp details}.

\begin{figure}[!htbp]
  \centering
  \includegraphics[width=\linewidth]{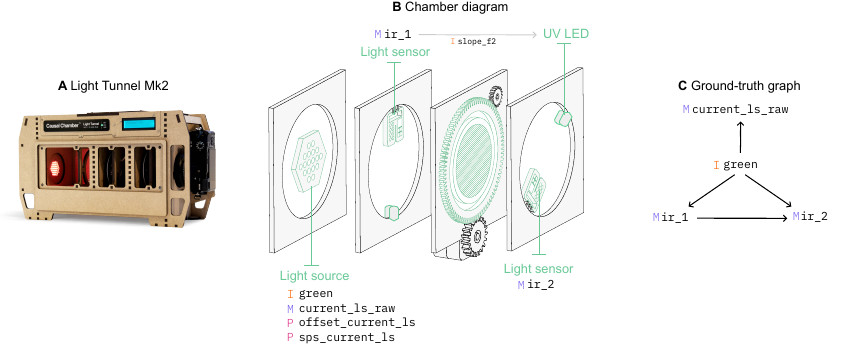}
  \caption{The Light Tunnel Mk2 from Causal Chamber\textsuperscript{\tiny®} (A and B) with its ground-truth graph in C. The variable types are control inputs {\color{IColor} I}, sensor parameters {\color{PColor} P} and sensor measurements {\color{MColor} M}. In the light tunnel, we consider green as the confounder, which is the brightness setting of the green LEDs on the main light source, ir\_1 as the treatment, which is an infrared intensity measurement produced by the first light sensor, ir\_2 as the outcome, which is an infrared intensity measurement produced by the second light sensor and current\_ls\_raw as the proxy, which is a measurement of the of the electric current drawn by the light source. See \href{https://docs.causalchamber.ai/the-chambers/light-tunnel-mk2}{causalchamber.ai} for a complete documentation.}
  \label{fig: causal chambers setup}
\end{figure}

\begin{table}[!htbp]
      \centering
          \begin{tabular}{llll}
            \toprule
        Variable           & I. Low noise & II. Noisy & III. SPICE-Net noise \\
        \midrule
        green                & $\mathcal{N}^*(60, 15)$ & $\mathcal{N}^*(60, 15)$ & 0 \\
        sps\_current\_ls      & 4 & 7 & $\{4,7\}$ \\
        offset\_current\_ls   & 2950 &  2950 & 2950 \\
        slope\_f2 & 0.064 & 0.064 & 0.064 \\
            \bottomrule
          \end{tabular}
      \caption{The parameter settings of the Light Tunnel Mk2 from Causal Chamber\textsuperscript{\tiny®} of \zcref{fig: causal chambers setup} in our experiments I-III. For the experiments I and II, we generate $5000$ training samples and $500$ test samples $20$ times. The confounder green is generated from a truncated and discretized normal distribution with mean $60$ and standard deviation $15$ denoted by $\mathcal{N}^*(60, 15)$. That is a normal distribution rounded to integers with a minimal value of $0$ and a maximal value of $255$ in order to satisfy the supported values of the variable green. The parameter sps\_current\_ls is the data rate of the Analog-to-Digital Converter (ADC) producing the current\_ls\_raw (proxy) measurement. Lower values lead to more readings to produce the measurement and hence reduce the noise. The parameter offset\_current\_ls determines the reference voltage of the ADC producing the current\_ls\_raw (proxy) measurement and the input slope\_f2 influences the relation between the first infrared intensity measurement ir\_1 (treatment) and the second infrared intensity measurement ir\_2 (outcome). See \href{https://docs.causalchamber.ai/the-chambers/light-tunnel-mk2}{causalchamber.ai} for a complete documentation. In experiment III, we estimate the variance of the noise for the proxy-confounder relation with $5000$ samples, which is used for SPICE-Net. We set sps\_current\_ls to its corresponding value depending on if we estimate the variance for experiment I or II. All other parameters remain constant.}
      \label{tab: chambers exp details}
\end{table}

We consider the same methods as in the simulation in \zcref{subse: simulations} with details given in \zcref{app:simulation}. For SPICE-Net, we approximate the noise distribution in the proxy-confounder relation with a zero-mean Gaussian distribution and a variance estimated with the additional experiment III in \zcref{tab: chambers exp details}. Here, we set the confounder to zero and use the variance of the proxy variable current\_ls\_raw as an estimate for the variance of the noise, with values given in \zcref{tab: chambers spicenet params}. We divide the standard deviations in \zcref{tab: chambers spicenet params} by the empirical standard deviation of the proxy in each data set to obtain the parameter values for SPICE-Net. For SPICE-Net-Approx, we initialise the parameters with values of one for both the mean and the standard deviation. Additionally, to prevent SPICE-Net-Approx from driving the learnt variance to zero, we reduce the initial learning rate of SPICE-Net-Approx to $10^{-4}$. All other parameters of all methods are the same as for the simulation in \zcref{subse: simulations} with details given in \zcref{app:simulation}.

\begin{table}[!htbp]
      \centering
          \begin{tabular}{ll}
            \toprule
     I. Low noise & II. Noisy  \\
        \midrule
       13.592 & 37.921  \\
            \bottomrule
          \end{tabular}
      \caption{Estimates for the standard deviation of the noise distribution used for SPICE-Net for the experiments I and II by experiment III of \zcref{tab: chambers exp details}.}
      \label{tab: chambers spicenet params}
\end{table}

\newpage
The test errors for the causal function estimators relative to Adj.-U are presented in \zcref{tab:chambers_3panel_mse}.

\begin{table}[!htbp]
    \centering
    \begin{tabular}{llll}
    \toprule
        & I. Low noise & II. Noisy \\
        \midrule
        Adj.-W & \textbf{171 (82)} & 3259 (1183) \\
        SPICE-Net & 396 (509) & \textbf{640 (743)} \\
        SPICE-Net-Approx & 833 (1563) & 3180 (1715) \\
        CEVAE & 9661 (2097) & 9817 (2199) \\
        SKPV & 23497 (3042) & 19851 (2696) \\
        SPMMR & 18824 (2918) & 16515 (2489) \\
        No Adj. & 14324 (1644) & 13735 (1791) \\
        \bottomrule
    \end{tabular}
    \caption{Median (standard deviation) of the MSE of each method relative to Adj.-U across the two experiments of \zcref{tab: chambers exp details} on data from the Light Tunnel Mk2 from the Causal Chamber\textsuperscript{\tiny®} \citep{Gamella2025}. The lowest test error for each experiment is shown in bold. The methods are described in \zcref{subse: simulations,subse: chambers,app:simulation,app:chamber} and a graphical representation of the results is given in \zcref{plot:chamber_results}. No Adj.~is a neural network estimator that does not adjust for additional variables.}
      \label{tab:chambers_3panel_mse}
\end{table}

\section{Extensions of SPICE}
\label{app:generalisations}

\paragraph{Additional Observed Confounding} Consider a treatment $X \in \mathcal{X} \subseteq \mathbb{R}^p$, an outcome $Y \in \mathcal{Y} \subseteq \mathbb{R}$, an unobserved confounder $U \in \mathcal{U} \subseteq \mathbb{R}^k$, a proxy variable $W \in \mathcal{W} \subseteq \mathbb{R}^d$ and an observed covariate $O \in \mathcal{O} \subseteq \mathbb{R}^l$, where $p,k,d,l \in \mathbb{N}$. The structural causal model $M$ is given by
\begin{align}
    M:\left\{
            \begin{array}{ll}
            O & \leftarrow N_O \\
            U & \leftarrow N_U \\
            W & \leftarrow f_W\left(U, E\right) \\
            X & \leftarrow f_X \left(O, U, N_X\right) \\
            Y & \leftarrow f_Y \left(O, U, X, N_Y\right)
            \end{array}
            \right.
\label{eq: scm observed O}
\end{align}
with measurable functions $f_W, f_X, f_Y$ and mutual independent random noise terms\\$(N_O, N_U, E, N_X, N_Y)$. We assume that the joint distribution is absolutely continuous with respect to a $\sigma$-finite measure $\mu$ such that it has a density, that is, we have that 
\begin{align}
    p^M(o, u,w,x,y) > 0
    \iff
    (o,u,w,x,y)\in
\mathcal{O} \times \mathcal{U} \times \mathcal{W} \times \mathcal{X} \times \mathcal{Y}.
\end{align}
We assume that we observe $n \in \mathbb{N}$ independent and identically distributed copies\\$(O_1, W_1, X_1, Y_1),\dots,(O_n, W_n,X_n,Y_n)$ of the random variable $(O,W,X,Y)$ generated by a structural causal model $M_0$ from \zcref{eq: scm observed O}. We assume that the error mechanism $\{ p^{M_0}_{W \mid U} (\cdot \mid u)\mid u \in \mathcal{U} \}$ is known. We assume that the causal function, which is given for all $x \in \mathcal{X}$ as
\begin{align}
&\int y \int  p^{M_0}_{Y \mid O,U,X} \left(y \mid o, u, x \right) p^{M_0}_{O,U}(o, u) \, \mu(do, du, dy)\\
&= \int y \int   \frac{p^{M_0}_{U \mid O,X,Y} (u \mid o,x,y) p^{M_0}_{O,X,Y}(o,x,y)}{  \int p^{M_0}_{U \mid O,X,Y} (u \mid o,x,y) p^{M_0}_{O,X,Y}(o,x,y)\, \mu(dy) } 
\\ &\left(\int \int p^{M_0}_{U \mid O,X,Y} (u \mid o,x,y) p^{M_0}_{O,X,Y} (o,x,y)\, \mu(dx, dy)\right) \, \mu(do, du, dy),\label{eq: causal function extended}
\end{align}
is uniquely defined and integrable. 

We show that if the error mechanism is complete according to \zcref{def: complete error}, where the SCM is given by \zcref{eq: scm observed O}, the causal function is identifiable. Completeness of the error mechanism leads to \zcref{eq: proof thm identifiability 5} of \zcref{proof: theorem identifiability}, that is, we have for all for all $M \in \mathcal{M} (M_0)$, for all $(x,y) \in \mathcal{X} \times \mathcal{Y}$ and for almost all $u \in \mathcal{U}$ that 
\begin{equation}
    p^{M}_{U \mid X, Y}(u \mid x, y) = p^{M_0}_{U \mid X, Y}(u \mid x, y).
\end{equation}
In this extended setting, it holds that $U$ is conditionally independent of $O$ given $X$ and $Y$. This implies for all $M \in \mathcal{M} (M_0)$, for all $(o, x,y) \in \mathcal{O} \times \mathcal{X} \times \mathcal{Y}$ and for almost all $u \in \mathcal{U}$ that
\begin{equation}
\label{eq: cond densities same extended}
    p^{M}_{U \mid O,X, Y}(u \mid o, x, y) = p^{M_0}_{U \mid O,X, Y}(u \mid o, x, y).
\end{equation}
The causal function in \zcref{eq: causal function extended} only depends on observational quantities and on the conditional densities of the confounder given the additional covariates, the treatment and the outcome. By the equality of observational quantities and the equality in \zcref{eq: cond densities same extended}, the causal function in \zcref{eq: causal function extended} is identifiable.

\paragraph{Noisy Treatment and Outcome}
Consider a treatment $X \in \mathcal{X} \subseteq \mathbb{R}^p$, an outcome $Y \in \mathcal{Y} \subseteq \mathbb{R}$, an unobserved confounder $U \in \mathcal{U} \subseteq \mathbb{R}^k$, a confounder-proxy variable $W \in \mathcal{W} \subseteq \mathbb{R}^d$, a treatment-proxy variable $X^* \in \mathcal{X^*} \subseteq \mathbb{R}^l$ and an outcome-proxy variable $Y^* \in \mathcal{Y^*} \subseteq \mathbb{R}^m$, where $p,k,d,l,m \in \mathbb{N}$. The structural causal model $M$ is given by
\begin{align}
    M:\left\{
            \begin{array}{ll}
            U & \leftarrow N_U \\
            W & \leftarrow f_W\left(U, E\right) \\
            X & \leftarrow f_X \left(U, N_X\right) \\
            X^* & \leftarrow f_{X^*}\left(X, N_{X^*}\right) \\
            Y & \leftarrow f_Y \left(U, X, N_Y\right) \\
            Y^* & \leftarrow f_{Y^*}\left(Y, N_{Y^*}\right) 
            \end{array}
            \right.
\label{eq: scm all noisy}
\end{align}
with measurable functions $f_W, f_X,f_{X^*},f_Y,f_{Y^*}$ and mutual independent random noise terms\\$(N_U, E, N_X, N_{X^*}, N_Y, N_{Y^*})$. We assume that the joint distribution is absolutely continuous with respect to a $\sigma$-finite measure $\mu$ such that it has a bounded density, that is, there exists a $Q \in \mathbb{R}$ for which
\begin{align}
    Q \geq p^M(u,w,x,x^*,y,y^*) > 0
    \iff
    (u,w,x,x^*,y,y^*)\in
\mathcal{U} \times \mathcal{W} \times \mathcal{X} \times \mathcal{X^*} \times \mathcal{Y} \times \mathcal{Y^*}.
\end{align}
We assume that we observe $n \in \mathbb{N}$ independent and identically distributed copies\\$( W_1, X^*_1, Y^*_1),\dots,(W_n,X^*_n,Y^*_n)$ of the random variable $(W,X^*,Y^*)$ generated by a structural causal model $M_0$ from \zcref{eq: scm all noisy}. We assume that the error mechanisms 
\begin{align}
    \{ p^{M_0}_{W \mid U} (\cdot \mid u)\mid u \in \mathcal{U}\} \cup \{ p^{M_0}_{X^* \mid X} (\cdot \mid x)\mid x \in \mathcal{X}\} \cup \{ p^{M_0}_{Y^* \mid Y} (\cdot \mid y)\mid y \in \mathcal{Y}\}
\end{align}
are known.

We show that if all three error mechanisms are $L_\infty$-complete, the causal function in \zcref{def: causal function and ACE} for a SCM of \zcref{eq: scm all noisy} is identifiable. We start by considering the error mechanism for the relation between $W$ and $U$. From the proof of \zcref{thm: effect identifiable} in \zcref{proof: theorem identifiability}, and specifically from \zcref{eq: proof thm identifiability 3.2} where we swap $X$ with $X^*$ and $Y$ with $Y^*$, it follows for all $M \in \mathcal{M}(M_0)$ and for all $(w,x^*,y^*) \in \mathcal{W} \times \mathcal{X^*} \times \mathcal{Y^*}$ that
\begin{align}
    \int p^{M_0}_{W \mid U} (w\mid u) \left( p^M_{U \mid X^*, Y^*} (u \mid x^*,y^*) -  p^{M_0}_{U \mid X^*, Y^*} (u \mid x^*,y^*)\right) \, \mu(du)= 0.
\end{align}
For all $M \in \mathcal{M}(M_0)$ and for all $(x^*,y^*) \in \mathcal{X^*} \times \mathcal{Y^*}$, we have that $p^M_{U \mid X^*, Y^*} (\cdot \mid x^*,y^*) -  p^{M_0}_{U \mid X^*, Y^*} (\cdot \mid x^*,y^*) \in L_\infty (\mathcal{U})$. 
Since we have that $\{ p^{M_0}_{W \mid U} (\cdot \mid u)\mid u \in \mathcal{U}\}$ is $L_\infty(\mathcal{U})$-complete, it follows that for all $M \in \mathcal{M} (M_0)$, for all $(x^*,y^*) \in \mathcal{X^*} \times \mathcal{Y^*}$ and for almost all $u \in \mathcal{U}$ that 
\begin{align}
    p^{M}_{U \mid X^*, Y^*}(u \mid x^*, y^*) = p^{M_0}_{U \mid X^*, Y^*}(u \mid x^*, y^*)
\end{align} 
and 
\begin{align}
    p^{M}_{U , X^*, Y^*}(u , x^*, y^*) = p^{M_0}_{U , X^*, Y^*}(u , x^*, y^*).
    \label{eq: equivalance density u xstar ystar}
\end{align} 
For all $(u, x^*,y^*) \in \mathcal{U} \times \mathcal{X^*} \times \mathcal{Y^*}$, we have that
\begin{align}
    p^{M_0}_{U , X^*, Y^*}(u , x^*, y^*) &= \int p^{M_0}_{U , X^*,X, Y^*}(u , x,x^*, y^*) \, \mu(dx) \\
    &= \int p^{M_0}_{X^* \mid X}(x^* \mid x) p^{M_0}_{U ,X, Y^*}(u , x, y^*) \, \mu(dx) \\
    &= \int p^{M_0}_{X^* \mid X}(x^* \mid x) p^{M_0}_{X \mid U, Y^*}(x \mid u, y^*) p^{M_0}_{U, Y^*}(u, y^*) \, \mu(dx) 
\end{align}
where the second equality holds because $X^* \indep (U,Y^*) \mid X$. By the definition of $\mathcal{M}(M_0)$ and by \zcref{eq: equivalance density u xstar ystar}, it holds for all $M \in \mathcal{M}(M_0)$, for all $(x, x^*,y^*) \in \mathcal{X} \times \mathcal{X^*} \times \mathcal{Y^*}$ and for almost all $u \in \mathcal{U}$ that
\begin{align}
    &p^{M}_{U , X^*, Y^*}(u , x^*, y^*) = p^{M_0}_{U , X^*, Y^*}(u , x^*, y^*) \text{ , } p^{M}_{X^* \mid X}(x^* \mid x) = p^{M_0}_{X^* \mid X}(x^* \mid x) \text{ and } \\ 
    &p^{M}_{U, Y^*}(u, y^*) = p^{M_0}_{U, Y^*}(u, y^*).
\end{align}
This implies for all $M \in \mathcal{M}(M_0)$, for all $(x, x^*,y^*) \in \mathcal{X} \times \mathcal{X^*} \times \mathcal{Y^*}$ and for almost all $u \in \mathcal{U}$ that 
\begin{align}
    &\int p^{M_0}_{X^* \mid X}(x^* \mid x) p^{M_0}_{X \mid U, Y^*}(x \mid u, y^*) p^{M_0}_{U, Y^*}(u, y^*) \, \mu(dx)\\ =&\int p^{M_0}_{X^* \mid X}(x^* \mid x) p^{M}_{X \mid U, Y^*}(x \mid u, y^*) p^{M_0}_{U, Y^*}(u, y^*) \, \mu(dx) 
\end{align}
and hence 
\begin{align}
    \int p^{M_0}_{X^* \mid X}(x^* \mid x) \left( p^{M}_{X \mid U, Y^*}(x \mid u, y^*) p^{M_0}_{U, Y^*}(u, y^*) - p^{M_0}_{X \mid U, Y^*}(x \mid u, y^*) p^{M_0}_{U, Y^*}(u, y^*) \right) \, \mu(dx) = 0.
\end{align}
By the $L_\infty(\mathcal{X})$-completeness of $\{ p^{M_0}_{X^* \mid X} (\cdot \mid x )\mid x \in \mathcal{X}\}$, we have for all $M \in \mathcal{M}(M_0)$, for all $y^* \in \mathcal{Y^*}$ and for almost all $(u,x) \in \mathcal{U} \times \mathcal{X}$ that
\begin{align}
    p^{M}_{X \mid U, Y^*}(x \mid u, y^*) p^{M_0}_{U, Y^*}(u, y^*) = p^{M_0}_{X \mid U, Y^*}(x \mid u, y^*) p^{M_0}_{U, Y^*}(u, y^*)
\end{align}
and  
\begin{align}
    p^{M}_{U , X, Y^*}(u, x, y^*)= p^{M_0}_{U , X, Y^*}(u, x, y^*).
    \label{eq: equivalance u x ystar}
\end{align}
We have for all $(u,x,y^*) \in \mathcal{U} \times \mathcal{X} \times \mathcal{Y}^*$ that 
\begin{align}
    p^{M_0}_{U , X, Y^*}(u, x, y^*) &= \int p^{M_0}_{U , X, Y, Y^*}(u, x, y, y^*) \, \mu(dy) \\
    &=\int p^{M_0}_{Y^* \mid Y}(y^* \mid y) p^{M_0}_{U , X, Y}(u, x, y) \, \mu(dy).
\end{align}
because $Y^* \indep (U,X) \mid Y$. By the definition of $\mathcal{M}(M_0)$ and by \zcref{eq: equivalance u x ystar}, we have for all $M \in \mathcal{M}(M_0)$, for all $y \in \mathcal{Y}$ and for almost all $(u,x) \in \mathcal{U} \times \mathcal{X}$ that 
\begin{align}
\int p^{M_0}_{Y^* \mid Y}(y^* \mid y) p^{M}_{U , X, Y}(u, x, y) \, \mu(dy) = \int p^{M_0}_{Y^* \mid Y}(y^* \mid y) p^{M_0}_{U , X, Y}(u, x, y) \, \mu(dy)    
\end{align}
and hence
\begin{align}
\int p^{M_0}_{Y^* \mid Y}(y^* \mid y) \left( p^{M}_{U , X, Y}(u, x, y) - p^{M_0}_{U , X, Y}(u, x, y) \right) \, \mu(dy) = 0.
\end{align}
By the $L_\infty(\mathcal{Y})$-completeness of $\{ p^{M_0}_{Y^* \mid Y} (\cdot \mid y )\mid y \in \mathcal{Y}\}$, we have for all $M \in \mathcal{M}(M_0)$ and for almost all $(u,x,y) \in \mathcal{U} \times \mathcal{X} \times \mathcal{Y}$ that 
\begin{align}
    p^{M}_{U , X, Y}(u, x, y) = p^{M_0}_{U , X, Y}(u, x, y).
\label{eq: identitiy joint dens extension}
\end{align}
The causal function from \zcref{def: causal function and ACE} for all $x \in \mathcal{X}$ is given by
\begin{align}
        \theta^{M_0} (x) &=  \int y \int  p^{M_0}_{Y \mid U,X} \left(y \mid u, x \right) p^{M_0}_{U}(u) \, \mu(du, dy) \\
        &=  \int y \int   \frac{p^{M_0}_{U,X,Y}(u,x,y)}{p^{M_0}_{U,X} (u,x)} \left(\int  p^{M_0}_{U,X,Y} (u,x,y) \,\mu(dx, dy) \right)\, \mu(du, dy) \\
        &=  \int y \int   \frac{p^{M_0}_{U,X,Y}(u,x,y)}{\int p^{M_0}_{U,X,Y} (u,x,y) \, \mu (dy)} \left(\int  p^{M_0}_{U,X,Y} (u,x,y) \,\mu(dx, dy) \right)\, \mu(du, dy).
\label{eq: cf extension}
\end{align}
In \zcref{eq: cf extension}, the causal function only depends on the joint density of the confounder, the treatment and the outcome. By \zcref{eq: identitiy joint dens extension}, we have for all $M \in \mathcal{M}(M_0)$ that 
\begin{align}
    \theta^M \equiv \theta^{M_0}.
\end{align}

\clearpage

\bibliography{references}

\end{document}